\documentclass{article}
\usepackage[accepted]{icml2026}

\addtocontents{toc}{\protect\setcounter{tocdepth}{-2}}

\usepackage{amsfonts}
\usepackage{nicefrac}
\usepackage{microtype}
\usepackage{graphicx}
\usepackage{subcaption}
\usepackage{booktabs} 
\usepackage{hyperref}
\usepackage{url}

\usepackage{amsmath}
\usepackage{amssymb}
\usepackage{mathtools}
\usepackage{amsthm}

\usepackage[capitalize,noabbrev]{cleveref}

\usepackage{dsfont}
\usepackage[mathscr]{euscript}
\usepackage{expl3}
\usepackage{xcolor}
\usepackage{colortbl}
\usepackage{thmtools}
\usepackage{thm-restate}
\usepackage{multirow}
\usepackage{wrapfig}
\usepackage{accents}

\usepackage{pifont}

\usepackage{MnSymbol}
\DeclareMathAlphabet\mathbb{U}{msb}{m}{n}
\usepackage{xpatch}

\def\Zset{\mathbb{Z}}
\def\Rset{\mathbb{R}}

\DeclareMathOperator*{\E}{\mathbb E}

\DeclareMathOperator{\sign}{sign}
\DeclareMathOperator{\supp}{supp}

\DeclarePairedDelimiter{\abs}{\lvert}{\rvert} 
\DeclarePairedDelimiter{\bracket}{[}{]}
\DeclarePairedDelimiter{\curl}{\{}{\}}
\DeclarePairedDelimiter{\paren}{(}{)}

\ExplSyntaxOn
\tl_const:Nn \c_my_uc_alphabet_tl { ABCDEFGHIJKLMNOPQRSTUVWXYZ }
\tl_const:Nn \c_my_full_alphabet_tl { ABCDEFGHIJKLMNOPQRSTUVWXYZ abcdefghijklmnopqrstuvwxyz }

\tl_map_inline:Nn \c_my_uc_alphabet_tl
 { \cs_gset:cpn { c#1 } { \mathcal{#1} } }

\tl_map_inline:Nn \c_my_uc_alphabet_tl
 { \cs_gset:cpn { s#1 } { \mathscr{#1} } }

\tl_map_inline:Nn \c_my_full_alphabet_tl
 {
  \str_if_eq:nnTF { b#1 } { bf } 
    { } 
    {
      \str_if_eq:nnTF { sf#1 } { sf }
        { } 
        {
          \cs_gset:cpn { b#1 } { \ensuremath{\mathbf{#1}} } 
          \cs_gset:cpn { sf#1 } { \ensuremath{\mathsf{#1}} }
        }
    }
 }
\ExplSyntaxOff

\newcommand{\ov}{\overline}
\newcommand{\wt}{\widetilde}
\newcommand{\e}{\epsilon}
\newcommand{\ignore}[1]{}

\hypersetup{
  breaklinks   = true, 
  colorlinks   = true, 
  urlcolor     = blue, 
  linkcolor    = blue, 
  citecolor   = blue 
}

\usepackage[toc, page, header]{appendix}
\theoremstyle{plain}
\newtheorem{theorem}{Theorem}[section]

\theoremstyle{definition}
\newtheorem{definition}[theorem]{Definition}

\theoremstyle{remark}

\usepackage[disable,textsize=tiny]{todonotes}

\icmltitlerunning{Mind the Gap: Structure-Aware Consistency in
  Preference Learning}

\begin{document}

\twocolumn[
\icmltitle{Mind the Gap: Structure-Aware Consistency in
  Preference Learning}

\begin{icmlauthorlist}
\icmlauthor{Mehryar Mohri}{google,courant}
\icmlauthor{Yutao Zhong}{google}
\end{icmlauthorlist}

\icmlaffiliation{google}{Google Research, New York, NY;}
\icmlaffiliation{courant}{Courant Institute of Mathematical Sciences, New York, NY}

\icmlcorrespondingauthor{Mehryar Mohri}{mohri@google.com}
\icmlcorrespondingauthor{Yutao Zhong}{yutaozhong@google.com}

\icmlkeywords{Large Language Models (LLMs), Ranking and Preference Learning, Statistical Consistency, LLM Alignment, DPO, Surrogate Losses}

\vskip 0.3in
]

\printAffiliationsAndNotice{}

\begin{abstract}
  Aligning Large Language Models (LLMs) with human intent, whether through
  explicit reward modeling or direct methods such as DPO, fundamentally relies
  on minimizing a surrogate loss as a proxy for the true pairwise ranking
  objective.  We prove that this reliance is flawed for the standard surrogate
  losses used: for the equicontinuous hypothesis sets characteristic of neural
  networks, \emph{no} standard surrogate provides a meaningful consistency
  guarantee. Minimizing the surrogate loss to zero can leave the true ranking
  error arbitrarily high.  To resolve this, we formulate LLM alignment within a
  margin-shifted ranking framework and derive $\sH$-consistency bounds showing
  that enforcing a confidence margin $\gamma$ is not merely beneficial but
  \emph{necessary} for consistency.  We further introduce Structure-Aware
  $\sH$-consistency and a corresponding objective (SA-DPO) that adapts the
  margin to the semantic distance between responses, preventing instability on
  near-synonymous pairs.  Finally, we analyze the trade-off between the margin
  required for consistency and the model's finite capacity to satisfy it,
  revealing a strict hierarchy of surrogate losses: heavy-tailed surrogates
  (e.g., the Polynomial Hinge family) offer strictly superior consistency
  guarantees for capacity-bounded models compared to the logistic loss used in
  DPO.  Experiments on UltraFeedback and Argilla DPO-Mix-7k confirm that SA-DPO
  consistently outperforms DPO and SimPO, with a 58.5\% head-to-head win-rate in
  downstream generation quality.
\end{abstract}

\section{Introduction}

The alignment of Large Language Models (LLMs) with human intent relies, at its
core, on minimizing a convex surrogate loss over pairwise preference data.  This
is true both in \emph{explicit} reward modeling, where a reward function is
trained on preference pairs before being optimized via reinforcement learning
(e.g., PPO; \citealp{schulman2017proximal,stiennon2020learning}), and in
\emph{implicit} approaches such as Direct Preference Optimization (DPO)
\citep{rafailov2023direct}, which bypass the reward modeling step entirely.  In
every case, the surrogate (e.g., the logistic loss) serves as a proxy for the
true objective: the non-convex, discontinuous 0-1 ranking loss.  This ubiquitous
reliance raises a fundamental theoretical question that remains largely
unanswered for deep networks: \emph{Does minimizing these surrogate losses
  actually guarantee the minimization of the true ranking error?}

In this work, we investigate this question through the lens of
\emph{$\sH$-consistency} \citep*{mao2023cross,mao2024universal}.  We formulate
LLM preference learning as a pairwise ranking problem and derive a series of
results that bridge the gap between learning theory and practical fine-tuning.
First, we identify a fundamental theoretical deficiency in standard approaches.
We demonstrate that for \emph{equicontinuous} hypothesis sets, a property
satisfied by neural networks, standard surrogate minimization yields
\emph{vacuous} consistency guarantees.  Specifically, without explicit
constraints, a model can achieve arbitrarily low surrogate risk while
maintaining a high ranking error, effectively ``cheating'' the objective by
shrinking score differences rather than learning the correct ordering.

To resolve this, we introduce the framework of \emph{Margin-Shifted Surrogates}.
We prove that enforcing a confidence gap $\gamma$ is not merely a heuristic, but
a strict requirement for $\sH$-consistency in the deep learning regime.
However, while a uniform margin restores consistency, it is a blunt instrument.
We show that demanding a large, fixed margin on semantically identical pairs
(synonyms) forces the model to hallucinate differences where none exist,
introducing bias and instability.  To address this, we propose
\emph{Structure-Aware $\sH$-consistency} and a corresponding objective,
\emph{Structure-Aware DPO (SA-DPO)}.  SA-DPO dynamically adapts the margin based
on the semantic distance between responses (see
Figure~\ref{fig:structure_aware}), ensuring the model is strictly penalized for
misranking distinct options while correctly relaxing the constraint for
ambiguous or interchangeable pairs.

Finally, we analyze the cost of enforcing these margins on models with finite
capacity.  We introduce the \emph{Margin-Capacity Profile}, a metric that
quantifies the trade-off between theoretical consistency and a model's ability
to satisfy constraints.  Our analysis reveals a strict hierarchy of loss
functions: while the Logistic loss used in DPO has a linear decay profile,
losses with heavier tails (such as Squared loss in IPO \citep{azar2023general}
or Cubic Hinge loss) offer superior consistency guarantees for capacity-bounded
models.

\textbf{Contributions.}
\begin{enumerate}\itemsep0pt \parsep0pt \topsep2pt \partopsep0pt
\item We prove that unconstrained surrogate minimization on equicontinuous
  hypothesis sets yields \emph{vacuous} consistency bounds: minimizing the DPO
  loss provides no guarantee that the true ranking error decreases
  (Theorem~\ref{Thm:negative-general}).
\item We derive $\sH$-consistency bounds for margin-shifted surrogates, proving
  that a confidence gap $\gamma$ is \emph{necessary} for consistency
  (Theorem~\ref{th:gamma-shifted-H-consistency}).
\item We introduce SA-DPO, a structure-aware objective that adapts margins to
  the semantic distance between responses, preventing instability on synonyms
  while enforcing strict separation on distinct pairs
  (Theorem~\ref{th:structure-gamma-shifted-H-consistency}).
\item We analyze the Margin-Capacity Profile, establishing a strict hierarchy
  where heavy-tailed losses (Cubic Hinge) strictly outperform light-tailed ones
  (Logistic) for capacity-bounded models.
\end{enumerate}

Our framework also provides a rigorous foundation for recent empirical advances,
most notably SimPO \citep{meng2024simpo} and SLiC \citep{zhao2023slic}.  While
these methods empirically demonstrated the benefit of hard margins, we prove
that such margins are theoretically necessary for consistency.  Furthermore, our
analysis exposes the limitation of their uniform design: enforcing constant
penalties regardless of semantic similarity is suboptimal.  By deriving an
adaptive margin, SA-DPO is the principled, structure-aware evolution of
margin-based alignment.

\textbf{Related Work.} Our analysis bridges three distinct lines of
inquiry. \emph{Direct Alignment:} While DPO \citep{rafailov2023direct} and IPO
\citep{azar2023general} implicitly optimize preferences, recent empirical works
like SimPO \citep{meng2024simpo} and SLiC \citep{zhao2023slic} have introduced
margin constraints. Our work provides the missing theoretical justification for
these margins, proving them necessary for consistency rather than just
empirically beneficial. \emph{Ranking Consistency:} Classical consistency
results \citep{bartlett2006convexity,duchi2010consistency} assume universal
function approximation. We build on the $\sH$-consistency framework
\citep{awasthi2022h,MaoMohriZhong2023ranking,mao2023cross} to address the
restricted hypothesis sets of deep networks. \emph{Structured Prediction:} Our
proposed adaptive margin draws on StructSVMs \citep{tsochantaridis2005large},
extending these ideas to the generative LLM setting. (See
Appendix~\ref{app:related-work} for an extended discussion).

\section{Preliminaries}

\begin{figure}[t!]
  \centering
  \includegraphics[width=\linewidth]{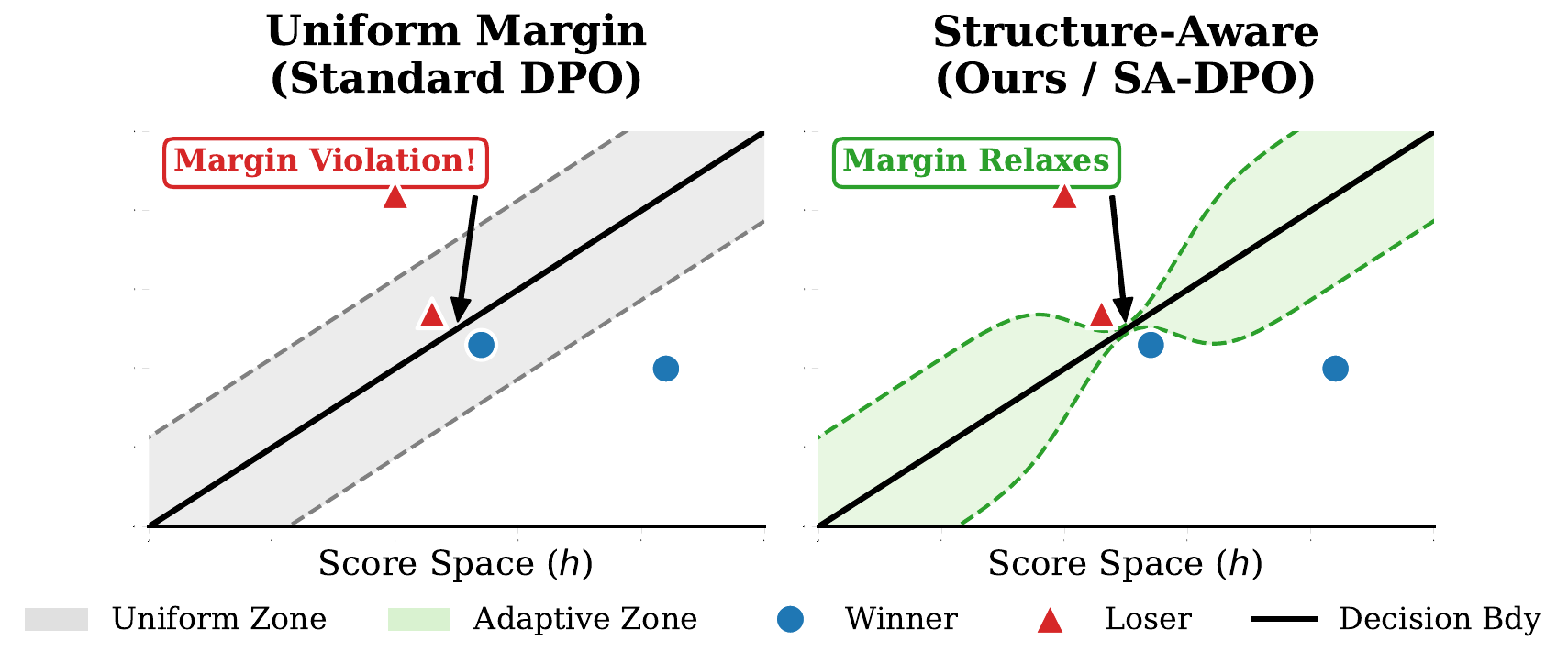}
  \caption{Uniform vs. Structure-Aware Margins. (Left) A fixed margin $\gamma$
    forces artificial separation on semantically similar pairs (synonyms),
    causing hallucination. (Right) SA-DPO scales the margin by semantic distance
    $\Delta(y, y')$, relaxing constraints for synonyms while enforcing strict
    separation on distinct pairs.}
  \vskip -.2in
  \label{fig:structure_aware}
\end{figure}

\textbf{Setup.} We consider the standard setting of LLM preference learning. Let
$\sX$ be the space of prompts and $\sY$ be the space of responses. We assume
access to a distribution $\sD$ over tuples $(x, y, y', w)$ with $y \neq y'$,
where $w \in \curl*{-1, 1}$ indicates the preference ($w = 1$ if $y \succ
y'$). Let $\eta(x, y, y') = \mathbb{P} \paren*{w = 1 \mid x, y, y'}$ be the
conditional preference probability. The goal is to learn a scoring function
$h \in \sH$ (e.g., a reward model or implicit DPO log-ratio) minimizing the
\emph{target 0-1 ranking loss}
$\sfL_{0-1}(h, x, y, y', w) = 1_{w \neq \sign(\Delta h)}$, where
$\Delta h = h(x, y) - h(x, y')$. We denote the generalization error by
$\sR(h) = \E [\sfL_{0-1}]$ and the best-in-class error by $\sR^*(\sH)$. The
\emph{conditional error} is $\sC(h) = \E[\sfL_{0-1} \mid x, y, y']$, which
induces the \emph{best-in-class conditional error}
$\sC^*(\sH) = \inf_{h \in \sH} \sC(h)$ and the \emph{conditional regret}
$\Delta \sC_{\sH}(h) = \sC(h) - \sC^*(\sH)$. Since optimizing $\sfL_{0-1}$ is
intractable, practical algorithms minimize a convex \emph{surrogate loss}
$\sfL_{\Phi}(h, x, y, y', w) = \Phi(w \cdot \Delta h)$. For instance, DPO uses
$\Phi_{\rm{log}}(u) = \log(1 + e^{-\beta u})$
\citep{chen2024preference,agarwaldesign}. We denote the surrogate generalization
error by $\sR_{\Phi}(h)$, with conditional terms $\sC_{\Phi}(h)$,
$\sC_{\Phi}^*(\sH)$ and $\Delta \sC_{\Phi, \sH}(h)$ defined analogously. We
define the \emph{minimizability gap} by
$\sM_{\Phi}(\sH) = \sR_{\Phi}^*(\sH) - \E [\sC_{\Phi}^*(\sH)]$, which measures
the difference between the best-in-class error and the expected best-in-class
conditional error. It quantifies the approximation error introduced by
restricting our model to a specific hypothesis set $\sH$ (e.g., neural networks)
rather than the space of all measurable functions $\sH_{\rm{all}}$. As
established in recent $\sH$-consistency literature \citep{mao2024universal},
when $\sH = \sH_{\rm{all}}$, the minimizability gaps vanish, and our bounds
exactly recover the traditional excess risk bounds. A smaller minimizability gap
indicates that the restricted hypothesis class $\sH$ is well-aligned with the
surrogate loss.

\textbf{Consistency.} While classical \emph{Bayes-consistency}
\citep{Zhang2003,bartlett2006convexity,steinwart2007compare} ensures that
optimizing a surrogate over all measurable functions minimizes the target loss
asymptotically, it provides no guarantees for restricted hypothesis sets $\sH$
(e.g., neural networks). We therefore focus on \emph{$\sH$-consistency bounds}
\citep{awasthi2022h,MaoMohriZhong2023ranking,mao2023cross}, which provide
non-asymptotic guarantees specific to $\sH$. A surrogate $\Phi$ is
$\sH$-consistent if there exists a non-decreasing concave function $\Gamma$
(with $\Gamma(0)=0$) such that for all $h \in \sH$:
$ \sR(h) - \sR^*(\sH) + \sM(\sH) \leq \Gamma \paren*{\sR_{\Phi}(h) -
  \sR_{\Phi}^*(\sH) + \sM_{\Phi}(\sH)}$.  This bound guarantees that minimizing
the surrogate estimation error effectively minimizes the target estimation
error, accounting for the approximation limitations captured by the
minimizability gaps \citep{mao2024universal}.

\section{Inconsistency of Unconstrained Ranking}
\label{sec:inconsistency}

While convenient, surrogate risk minimization does not inherently guarantee
consistency with respect to the true target risk. We begin by formally defining
the necessary properties for the hypothesis set and then present a negative
result of $\sH$-consistency.

\subsection{Equicontinuity and Regularity}

We characterize the hypothesis set $\sH$ of scoring functions
$h: \sX \times \sY \to \Rset$ using two standard assumptions:
\begin{enumerate}
\item \emph{Equicontinuity:} For any $\epsilon > 0$, there exist $x, y, y'$ such
  that $\sup_{h \in \sH} |h(x, y) - h(x, y')| < \epsilon$. Equicontinuity
  implies that the hypothesis class can produce arbitrarily small score
  differences for certain input pairs (e.g., when responses are
  near-synonyms). Because modern LLMs represent continuous mappings in a
  high-dimensional space, they naturally satisfy this property. Note that our
  definition is a weaker, non-standard condition compared to uniform
  equicontinuity. We chose this formulation precisely because it makes our
  negative result (Theorem~\ref{Thm:negative-general}) mathematically stronger:
  consistency completely fails even if the model only produces small score
  differences on a tiny subset of inputs.
\item \emph{Regularity:} For any tuple $(x, y, y', w)$, there exists $h \in \sH$
  such that $\sign(\Delta h) = w$. This merely assumes the model class is
  expressive enough to correctly rank a given pair (i.e., it can achieve the
  correct sign), which overparameterized LLMs easily satisfy.
\end{enumerate}
These assumptions highlight that without a margin constraint, models can
``cheat'' by shrinking scores on ambiguous pairs rather than learning robust
rankings.

\subsection{A Vacuous \texorpdfstring{$\sH$}{H}-Consistency Bound}

We now demonstrate that for regular and equicontinuous hypothesis sets, any
$\sH$-consistency bound is inherently vacuous. Our result and proof technique
build upon the $\sH$-consistency framework and the fundamental inconsistency
findings for pairwise ranking introduced by \citet[Theorem
2.1]{MaoMohriZhong2023ranking}. We adapt and extend their analysis to
specifically address the equicontinuous hypothesis sets typical of deep
networks.

\begin{restatable}[\textbf{Negative Results for Equicontinuous
    $\sH$}]{theorem}{Negative}
  \label{Thm:negative-general}
  Assume that $\sH$ is \emph{regular}, \emph{equicontinuous}, and contains the
  zero function ($h_0 \equiv 0$). If a non-decreasing function
  $\Gamma: [0, \infty) \to [0, \infty)$, continuous at $0$, satisfies the
  $\sH$-consistency bound for all $h \in \sH$ and any distribution:
  \begin{equation*}
    \sR(h) - \sR^*(\sH) + \sM(\sH) \leq \Gamma \paren*{\sR_{\Phi}(h) - \sR_{\Phi}^*(\sH) + \sM_{\Phi}(\sH)},
  \end{equation*}
  then, $\Gamma(t) \geq 1$ for all $t \geq 0$.
\end{restatable}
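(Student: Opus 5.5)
The plan is to adapt the argument of \citet[Theorem 2.1]{MaoMohriZhong2023ranking}. Fix $\epsilon > 0$. By equicontinuity, pick a triple $(x_0, y_0, y'_0)$ with $\sup_{h \in \sH} |h(x_0, y_0) - h(x_0, y'_0)| < \epsilon$. Take $\sD$ to be the point mass on this triple, with conditional probability $\eta(x_0, y_0, y'_0) = \eta$ to be chosen in $(1/2, 1]$. Because $\sD$ is supported on a single triple, best-in-class errors coincide with best-in-class conditional errors. Hence $\sM(\sH) = \sM_\Phi(\sH) = 0$, and both sides of the bound reduce to conditional regrets at $(x_0,y_0,y'_0)$:
\[
\Delta\sC_{\sH}(h) \leq \Gamma\paren*{\Delta\sC_{\Phi,\sH}(h)}.
\]

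\textbf{Step 1 (target side).} Evaluate the bound at $h = h_0 \equiv 0$. Then $\Delta h_0 = 0$. With the convention that $\sign(0)$ equals neither $\pm 1$, or by whatever tie convention the paper uses, this gives $\sC(h_0) = 1$. By regularity there is $h \in \sH$ with $\sign(\Delta h) = +1$, so $\sC^*(\sH) \leq 1 - \eta$. Hence $\Delta \sC_{\sH}(h_0) \geq \eta$.

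If instead the convention is $\sign(0) = 1$, so that $h_0$ predicts $w = 1$, I would use $\eta < 1/2$ with the roles reversed. By regularity some $h$ attains $\sign(\Delta h) = -1$, giving regret at least $1 - 2\eta$. Letting $\eta \to 0$ makes the left side tend to $1$ under either convention.

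\textbf{Step 2 (surrogate side).} Take $\Phi$ continuous, as for the convex margin losses considered. For every $h \in \sH$, $\Delta h \in (-\epsilon, \epsilon)$, so
\[
\sC_\Phi(h) = \eta\,\Phi(\Delta h) + (1-\eta)\,\Phi(-\Delta h)
\]
lies within $\omega(\epsilon)$ of $\Phi(0) = \sC_\Phi(h_0)$, where $\omega$ is the modulus of continuity of $\Phi$ on $[-\epsilon,\epsilon]$. Hence $\Delta \sC_{\Phi,\sH}(h_0) \leq \omega(\epsilon)$.

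Combining the two steps gives $\eta \leq \Gamma(\omega(\epsilon))$ (or $1 - 2\eta \leq \Gamma(\omega(\epsilon))$ in the reversed case). The triple depends only on $\epsilon$, not on $\eta$, so taking $\eta \to 1$ (respectively $\eta \to 0$) yields $\Gamma(\omega(\epsilon)) \geq 1$ for every $\epsilon > 0$. Since $\omega(\epsilon) \to 0$, continuity of $\Gamma$ at $0$ gives $\Gamma(0) \geq 1$. Monotonicity then gives $\Gamma(t) \geq \Gamma(0) \geq 1$ for all $t \geq 0$.

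\textbf{Main obstacle.} The delicate step is Step 1's dependence on the tie convention for $\sign(0)$ together with regularity. We need a hypothesis in $\sH$ that strictly beats $h_0$ on the chosen triple, while equicontinuity forces all score differences to be tiny there. Regularity only guarantees the correct sign, not a large margin, but that is exactly enough: the target loss sees only signs, whereas the surrogate cannot distinguish such tiny differences. I would check that the argument needs only continuity of $\Phi$ at $0$, not convexity, and that equicontinuity is used only to produce this single triple.
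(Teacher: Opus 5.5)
Your proof is correct and follows the paper's argument: a point mass on an equicontinuity triple makes both minimizability gaps vanish. On that triple $h_0$ has target regret (tending to) $1$, its surrogate regret is bounded by a quantity that vanishes with $\epsilon$ by continuity of $\Phi$ at $0$, and continuity of $\Gamma$ at $0$ plus monotonicity give $\Gamma(t)\ge 1$. The differences are cosmetic. The paper fixes the deterministic label $w=-1$ (implicitly taking $\sign(0)=1$), so no limit in $\eta$ is needed. It also bounds the surrogate range by $[\Phi(\epsilon),\Phi(-\epsilon)]$ using monotonicity of $\Phi$, whereas your modulus-of-continuity bound needs only continuity of $\Phi$ at $0$.
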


\begin{proof}[Proof Sketch]
  The core obstruction is the decoupling of surrogate error and ranking error in
  the absence of a margin. We construct a \emph{bad} sequence of hypotheses
  $\{h_k\}$ that converges to the zero function $h_0 \equiv 0$ in the surrogate
  space but remains maximally incorrect in the ranking space.

  Specifically, consider a distribution concentrated on a single pair
  $y \succ y'$. For this concentrated distribution, the minimizability gaps
  vanish ($\sM(\sH) = \sM_{\Phi}(\sH) = 0$). By equicontinuity, we can find
  hypotheses that squash the score difference $h(x,y) - h(x,y')$ arbitrarily
  close to 0. Since $\Phi$ is continuous, the surrogate loss approaches its
  minimum $\Phi(0)$. However, because the sign is never strictly corrected (or
  is effectively random near 0), the 0-1 ranking error remains 1. Thus,
  $\sR_\Phi \to \sR_\Phi^*$ does not imply $\sR \to \sR^*$. A margin $\gamma$ is
  required to force the score difference away from this ambiguous region.
\end{proof}

Theorem~\ref{Thm:negative-general} establishes that for hypothesis sets typical
of unconstrained neural networks, any $\sH$-consistency bound is
\emph{vacuous}: even when minimizability gaps vanish, a small surrogate
estimation error, $\sR_{\Phi}(h) - \sR_{\Phi}^*(\sH) \approx 0$, only
guarantees $\sR(h) - \sR^*(\sH) \leq 1$, providing no meaningful guarantee
that the true ranking error is small. This extends the inconsistency findings of
\citet[Theorem 2.1]{MaoMohriZhong2023ranking} to the equicontinuity properties
inherent to modern LLMs.

\section{\texorpdfstring{$\sH$}{H}-Consistency of Margin-Constrained Ranking}

To resolve the inconsistency, we restrict our analysis to hypothesis sets
$\cH_\gamma$ that enforce a minimum separation margin $\gamma > 0$ between
scores. Formally, we define the family of admissible hypothesis sets as:
$ \cH_\gamma = \curl[\big]{\sH \subseteq \sH_{\rm{all}} | \forall h \in \sH, x,
  y \neq y': \abs*{h(x, y) - h(x, y')} \geq \gamma, \exists h_+, h_- \in \sH
  \text{ s.t. } \forall x, y \neq y': h_+(x, y) - h_+(x, y') = \gamma \text{ and
  } h_-(x, y) - h_-(x, y') = -\gamma } $.  This ensures every hypothesis
separates pairs by at least $\gamma$, and the set contains hypotheses $h_+$ and
$h_-$ achieving this boundary exactly.

Under this constraint, we can derive an $\sH$-consistency bound between the
surrogate loss and the target ranking loss for any hypothesis set
$\sH \in \cH_\gamma$.

\begin{restatable}[\textbf{General $\sH$-Consistency Bound}] {theorem}{Positive}
  \label{Thm:positive-general}
  Let $\sH \in \cH_\gamma$ be a regular hypothesis set. Let
  $\Phi: \Rset \to \Rset_{+}$ be a convex, non-increasing function. For any
  $h \in \sH$, the following $\sH$-consistency bound holds:
  \ifdim\columnwidth=\textwidth {
      \begin{equation*}
        \sR(h) - \sR^*\paren*{\sH} + \sM\paren*{\sH} \leq \frac{1}{\Phi(-\gamma) - \Phi(\gamma)} \paren*{\sR_{\Phi}(h) - \sR_{\Phi}^*\paren*{\sH} + \sM_{\Phi}\paren*{\sH}}
      \end{equation*}
    }\else {
      \begin{multline*}
        \sR(h) - \sR^*\paren*{\sH} + \sM\paren*{\sH}\\
        \leq \frac{1}{\Phi(-\gamma) - \Phi(\gamma)} \paren*{\sR_{\Phi}(h) - \sR_{\Phi}^*\paren*{\sH} + \sM_{\Phi}\paren*{\sH}}
      \end{multline*}
    }\fi
\end{restatable}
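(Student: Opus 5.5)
The plan is to follow the standard $\sH$-consistency recipe: prove a pointwise inequality between conditional regrets and then integrate it. Concretely, I will show that for every $(x, y, y')$ and every $h \in \sH$,
\begin{equation*}
  \Delta \sC_{\sH}(h) \leq \frac{1}{\Phi(-\gamma) - \Phi(\gamma)} \, \Delta \sC_{\Phi, \sH}(h).
\end{equation*}
Taking expectations then gives the claim. Indeed, $\E[\Delta \sC_{\sH}(h)] = \sR(h) - \E[\sC^*(\sH)] = \sR(h) - \sR^*(\sH) + \sM(\sH)$ by the definition of the minimizability gap, and the surrogate side decomposes the same way. Since the bound is linear, no concavity or Jensen step is needed.

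To obtain the pointwise bound, fix $(x,y,y')$ and write $\eta = \eta(x,y,y')$ and $u = \Delta h$. The conditional errors are
\begin{equation*}
  \sC(h) = \eta\, 1_{u < 0} + (1-\eta)\, 1_{u > 0}, \qquad \sC_\Phi(h) = \eta \Phi(u) + (1-\eta)\Phi(-u).
\end{equation*}
Because $\sH \in \cH_\gamma$, every achievable $u$ satisfies $|u| \geq \gamma$, and both $u = \gamma$ (via $h_+$) and $u = -\gamma$ (via $h_-$) are achievable. Regularity together with $h_\pm$ gives $\sC^*(\sH) = \min(\eta, 1-\eta)$.

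On the surrogate side, I will bound $\sC^*_\Phi(\sH)$ from below by the infimum over $|u| \geq \gamma$. The key observation is that, for $u \geq \gamma$ and $\eta \geq 1/2$, the function $g(u) = \eta\Phi(u) + (1-\eta)\Phi(-u)$ is convex. The plan is to compare it with the values at the endpoints $\pm\gamma$, and to show that any $h$ on the wrong side of the optimal sign pays at least a fixed excess.

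Now consider the two cases. If $h$ ranks correctly, so that $\sign(u)$ matches the larger of $\eta$ and $1-\eta$, then $\Delta \sC_{\sH}(h) = 0$ and the inequality is trivial because the right-hand side is nonnegative. If $h$ ranks incorrectly, say $\eta \geq 1/2$ but $u \leq -\gamma$, then $\Delta \sC_{\sH}(h) = 2\eta - 1$. In this case I need
\begin{equation*}
  \sC_\Phi(h) - \sC^*_\Phi(\sH) \geq (2\eta - 1)\paren*{\Phi(-\gamma) - \Phi(\gamma)}.
\end{equation*}
I will upper bound $\sC^*_\Phi(\sH)$ by $\sC_\Phi(h_+) = \eta\Phi(\gamma) + (1-\eta)\Phi(-\gamma)$. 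For the lower bound on $\sC_\Phi(h)$ with $u = -v$ and $v \geq \gamma$, I have $\sC_\Phi(h) = \eta\Phi(-v) + (1-\eta)\Phi(v)$. Since $\Phi$ is non-increasing, $\Phi(-v) \geq \Phi(-\gamma)$, but $\Phi(v) \leq \Phi(\gamma)$, so the second term points the wrong way.

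Handling this second term is the main obstacle. I will use convexity to control $\Phi(-v) - \Phi(-\gamma)$ against $\Phi(\gamma) - \Phi(v)$. For a convex non-increasing $\Phi$, the slopes are non-decreasing, so the decrease of $\Phi$ on $[\gamma, v]$ is at most the increase of $\Phi$ on $[-v, -\gamma]$:
\begin{equation*}
  \Phi(\gamma) - \Phi(v) \leq \Phi(-v) - \Phi(-\gamma).
\end{equation*}
Using $\eta \geq 1-\eta$, it follows that
\begin{equation*}
  \sC_\Phi(h) \geq \eta\Phi(-\gamma) + (1-\eta)\Phi(\gamma) + (2\eta-1)\bigl(\Phi(-v) - \Phi(-\gamma)\bigr).
\end{equation*}
The last term is nonnegative, so $\sC_\Phi(h) \geq \eta\Phi(-\gamma) + (1-\eta)\Phi(\gamma)$. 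Subtracting $\sC_\Phi(h_+)$ yields exactly $(2\eta-1)(\Phi(-\gamma) - \Phi(\gamma))$. The case $\eta < 1/2$ is symmetric, using $h_-$ in place of $h_+$.

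Finally, I divide by $\Phi(-\gamma) - \Phi(\gamma)$, which is implicitly assumed positive (the bound is vacuous otherwise), and integrate over $(x, y, y')$ to conclude.
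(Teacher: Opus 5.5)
Your proof is correct and is essentially the paper's argument: you upper-bound $\sC^*_\Phi(\sH)$ by $\sC_\Phi(h_\pm)$ and lower-bound the surrogate conditional error of a misranking hypothesis by $\eta\Phi(-\gamma)+(1-\eta)\Phi(\gamma)$ using the secant-slope comparison of $\Phi$ on $[-v,-\gamma]$ versus $[\gamma,v]$, which is precisely the paper's monotonicity argument for $g$ specialized to $u_1=-v$, $u_2=-\gamma$. The only blemish is the stray plan to bound $\sC^*_\Phi(\sH)$ from below, which points the wrong way and is never used; the upper bound via $h_\pm$ is what the argument actually needs.
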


\begin{proof}[Proof Sketch]
  The proof relies on decomposing the conditional regret $\Delta\sC_{\sH}(h)$
  into surrogate terms. The critical step is establishing a pointwise lower
  bound on the surrogate conditional error $\sC_{\Phi}(h)$ for any hypothesis
  that misranks a pair (i.e., $\Delta h < 0$). Due to the margin constraint on
  $\cH_\gamma$, any such misranking implies a score violation of at least
  $\gamma$ (i.e., $\Delta h \le -\gamma$). By the convexity and monotonicity of
  $\Phi$, we show that the surrogate regret scales with the ranking regret:
  $\Delta \sC_{\Phi, \sH}(h) \ge (\Phi(-\gamma) - \Phi(\gamma)) \Delta
  \sC_{\sH}(h)$. Rearranging this inequality yields the consistency coefficient.
\end{proof}

Theorem~\ref{Thm:positive-general} establishes that in realizable settings where
minimizability gaps vanish, a surrogate estimation error of $\e$ guarantees that
the target estimation error $\sR(h) - \sR^*(\sH)$ is upper bounded by
$\frac{\e}{\Phi(-\gamma) - \Phi(\gamma)}$. The standard DPO formulation uses the
logistic surrogate loss $\Phi_{\rm{log}}(u) = \log(1 + e^{-\beta u})$ for some
temperature parameter $\beta > 0$.  We can obtain the specific bound for DPO by
evaluating the coefficient $\frac{1}{\Phi(-\gamma) - \Phi(\gamma)}$
explicitly. Substituting the logistic function:
$ \Phi_{\rm{log}}(-\gamma) - \Phi_{\rm{log}}(\gamma) = \log(e^{\beta \gamma}) =
\beta \gamma.$ Thus, for the specific case of DPO, the general coefficient
simplifies to $\frac{1}{\beta \gamma}$.
\begin{restatable}[\textbf{$\sH$-Consistency Bound for DPO}]
  {corollary}{PositiveLog}
  \label{Thm:positive-log}
  Let $\sH \in \cH_\gamma$ be a regular hypothesis set. For any $h \in \sH$ and
  the logistic surrogate
  $\Phi_{\rm{log}} \colon u \mapsto \log(1 + e^{-\beta u})$, the following
  $\sH$-consistency bound holds:
  \begin{align*}
    &\sR(h) - \sR^*\paren*{\sH} + \sM\paren*{\sH}\\
    &\qquad \leq \frac{1}{\beta \gamma} \paren*{\sR_{\Phi_{\rm{log}}}(h) - \sR_{\Phi_{\rm{log}}}^*\paren*{\sH}
      + \sM_{\Phi_{\rm{log}}}\paren*{\sH}}
  \end{align*}
\end{restatable}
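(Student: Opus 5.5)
This is a direct specialization of Theorem~\ref{Thm:positive-general} to the logistic surrogate, so the plan is to check that theorem's hypotheses for $\Phi_{\rm{log}}$ and then evaluate its coefficient.

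\textbf{Hypotheses.} We need $\Phi_{\rm{log}}(u) = \log(1 + e^{-\beta u})$ with $\beta > 0$ to map $\Rset$ into $\Rset_+$ and to be convex and non-increasing. Nonnegativity holds because $1 + e^{-\beta u} > 1$. The derivative is
\[
\Phi_{\rm{log}}'(u) = -\beta \frac{e^{-\beta u}}{1 + e^{-\beta u}} = -\frac{\beta}{1 + e^{\beta u}} < 0,
\]
so $\Phi_{\rm{log}}$ is decreasing. The second derivative is
\[
\Phi_{\rm{log}}''(u) = \frac{\beta^2 e^{\beta u}}{(1 + e^{\beta u})^2} > 0,
\]
so $\Phi_{\rm{log}}$ is convex. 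The assumption that $\sH \in \cH_\gamma$ is regular is carried over verbatim. Theorem~\ref{Thm:positive-general} therefore applies.

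\textbf{Coefficient.} We compute
\[
\Phi_{\rm{log}}(-\gamma) - \Phi_{\rm{log}}(\gamma) = \log \frac{1 + e^{\beta\gamma}}{1 + e^{-\beta\gamma}}.
\]
Multiplying numerator and denominator by $e^{\beta\gamma}$ shows $1 + e^{\beta\gamma} = e^{\beta\gamma}(1 + e^{-\beta\gamma})$, so the ratio equals $e^{\beta\gamma}$ and the difference equals $\beta\gamma$.

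Since $\beta\gamma > 0$, the coefficient $\frac{1}{\Phi_{\rm{log}}(-\gamma) - \Phi_{\rm{log}}(\gamma)} = \frac{1}{\beta\gamma}$ is finite. Substituting it into Theorem~\ref{Thm:positive-general} gives exactly the stated bound.

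There is no real obstacle here. The only points needing care are the positivity of $\beta\gamma$, which keeps the coefficient well defined, and the simplification of the log ratio.
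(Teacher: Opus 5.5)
Your proposal is correct and follows the paper's route exactly: the paper likewise obtains the corollary by substituting $\Phi_{\rm{log}}$ into Theorem~\ref{Thm:positive-general} and simplifying $\Phi_{\rm{log}}(-\gamma) - \Phi_{\rm{log}}(\gamma) = \log(e^{\beta\gamma}) = \beta\gamma$. Your explicit check that $\Phi_{\rm{log}}$ is nonnegative, convex, and non-increasing, which the paper leaves implicit, is a welcome addition but does not change the argument.
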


The coefficient $\frac{1}{\beta \gamma}$ indicates that a larger margin $\gamma$
tightens the bound, implying that surrogate minimization is more effective at
reducing the true ranking loss when the model maintains high confidence.
However, identifying a hypothesis set that strictly satisfies this margin in
practice is challenging. Similarly, a larger $\beta$ (lower temperature) reduces
the multiplicative factor, enhancing consistency, though excessively large
$\beta$ values may degrade the optimization landscape of $\Phi_{\rm{log}}$ via
vanishing gradients. Collectively, these results suggest that if an LLM (or
reward model) maintains a minimum confidence gap $\gamma$ between response
scores, minimizing the DPO objective serves as an effective proxy for minimizing
the true ranking error.

\textbf{Topological Constraints and Non-Emptiness.}  The class $\cH_\gamma$ is
technically non-empty for discrete domains or hypothesis sets with
discontinuities, such as decision trees. Concrete instantiations include
\emph{Global Preference Models}, where functions $h(x, y) = s_y$ have fixed
scores separated by $\gamma$; \emph{Discontinuous Models} like decision trees
that partition $\sX$ into regions, allowing preferences to flip discontinuously
at boundaries while satisfying local margins; and \emph{Quantized Models}, such
as neural networks with discrete output activations (e.g.,
$h(x, y) \in \curl*{k \gamma \colon k \in \Zset}$) that ensure all distinct
score differences are multiples of $\gamma$.

However, for continuous hypothesis sets such as neural networks on connected
input domains $\sX$, the strict margin condition
$|h(x, y) - h(x, y')| \geq \gamma$ imposes a severe topological constraint. By
the Intermediate Value Theorem, if the model were to change its preference
between $y$ and $y'$ as the input $x$ varies, the score difference would
necessarily pass through zero, violating the $\gamma$-condition. Consequently,
Theorem~\ref{Thm:positive-general} strictly applies only to models with fixed
global preferences or discontinuous decision boundaries. While this establishes
the theoretical \emph{sufficiency} of margins for consistency, enforcing
$\cH_\gamma$ directly is too restrictive for practical deep learning. This
limitation motivates our proposal of \emph{Margin-Shifted Surrogates}
in Section~\ref{sec:shifted}, which enforce margins via a soft penalty in the
loss rather than a hard constraint on $\sH$. This transition introduces a margin
approximation gap $\sA_\gamma(\sH)$ that quantifies the trade-off: we gain
topological flexibility but incur a penalty proportional to the model's
inability to fully satisfy the margin.

\section{\texorpdfstring{$\gamma$}{gamma}-Approximate
  \texorpdfstring{$\sH$}{H}-Consistency via Margin-Shifted Surrogates}
\label{sec:shifted}

Strict margin constraints ensure consistency but often render the hypothesis set
non-convex. To maintain computational tractability, we propose the
\emph{margin-shifted surrogate loss}. Instead of constraining $\sH$ directly, we
shift the loss function to penalize correct classifications that fail to achieve
a target confidence margin $\gamma > 0$.

\begin{definition}[Margin-Shifted Surrogate]
  Let $\Phi\colon \Rset \to \Rset_+$ be a convex, non-increasing surrogate
  function. For a target margin $\gamma > 0$, we define the margin-shifted
  surrogate loss $\sfL_{\Phi_\gamma}$ as:
  \begin{equation}
    \sfL_{\Phi_\gamma}(h, x, y, y', w)
    = \Phi\paren*{w \cdot (h(x, y) - h(x, y')) - \gamma}.
  \end{equation}
\end{definition}

\subsection{\texorpdfstring{$\gamma$}{Gamma}-Approximate
  \texorpdfstring{$\sH$}{H}-Consistency Bound}

Crucially, since the argument $u \mapsto u - \gamma$ is affine and $\Phi$ is
convex, the composition $\sfL_{\Phi_\gamma}$ remains convex with respect to
$h$. This allows for standard gradient-based optimization without the
feasibility issues of non-convex constraints.  We now derive a
$\gamma$-approximate $\sH$-consistency bound for this loss.

\begin{restatable}[$\gamma$-Shifted $\sH$-Consistency Bound]{theorem}{ShiftedH}
  \label{th:gamma-shifted-H-consistency}
  Let $\Phi \colon \Rset \to [0, +\infty)$ be non-increasing with
  $\Phi(-\gamma) > 0$ for some $\gamma > 0$. Define the shifted surrogate
  $\Phi_\gamma(u) = \Phi(u - \gamma)$. Then, for all $h \in \sH$:
  \ifdim\columnwidth=\textwidth {
      \begin{equation*}
        \sR(h) - \sR^*(\sH) + \sM(\sH)
        \leq \frac{1}{\Phi(-\gamma)}
        \bracket*{\sR_{\Phi_\gamma}(h) - \sR_{\Phi_\gamma}^*(\sH)
          + \sM_{\Phi_\gamma}(\sH)} + \sA_{\gamma}(\sH),
      \end{equation*}
    }\else {
      \begin{multline*}
        \sR(h) - \sR^*(\sH) + \sM(\sH)\\
        \leq \frac{1}{\Phi(-\gamma)}
        \bracket*{\sR_{\Phi_\gamma}(h) - \sR_{\Phi_\gamma}^*(\sH)
          + \sM_{\Phi_\gamma}(\sH)} + \sA_{\gamma}(\sH),
      \end{multline*}
    }\fi where
  $\sA_{\gamma}(\sH) = \frac{\E[\sC^*_{\sfL_{\Phi_\gamma}}(\sH)]}{\Phi(-\gamma)}
  - \E[\sC^*(\sH)]$ is the \emph{margin approximation gap}.
\end{restatable}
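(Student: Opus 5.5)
The plan is to reduce everything to a pointwise inequality between conditional errors and then integrate. The key observation is that the shifted surrogate dominates a scaled 0-1 loss. For every $h$ and $(x,y,y',w)$ we have
\[
\Phi\paren*{w\,\Delta h - \gamma} \geq \Phi(-\gamma)\, 1_{w\,\Delta h \leq 0}.
\]
This holds because $\Phi$ is non-increasing and nonnegative. If $w\,\Delta h \le 0$, then $w\,\Delta h - \gamma \le -\gamma$, so $\Phi(w\Delta h-\gamma)\ge \Phi(-\gamma)$. Otherwise the right-hand side is zero.

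To handle ties, I will adopt the convention that $\sign(0)$ counts as an error, which only strengthens the inequality. Taking conditional expectations then gives
\[
\sC(h) \le \frac{\sC_{\sfL_{\Phi_\gamma}}(h)}{\Phi(-\gamma)}
\]
for every $h\in\sH$ and every $(x,y,y')$. No convexity of $\Phi$ and no regularity of $\sH$ are needed here; the assumption $\Phi(-\gamma)>0$ is used only to divide.

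Next I write the conditional regret as
\[
\Delta\sC_{\sH}(h) = \sC(h) - \sC^*(\sH) \le \frac{1}{\Phi(-\gamma)}\bracket*{\sC_{\sfL_{\Phi_\gamma}}(h) - \sC^*_{\sfL_{\Phi_\gamma}}(\sH)} + \frac{\sC^*_{\sfL_{\Phi_\gamma}}(\sH)}{\Phi(-\gamma)} - \sC^*(\sH).
\]
This is the previous inequality with $\sC^*_{\sfL_{\Phi_\gamma}}(\sH)/\Phi(-\gamma)$ added and subtracted. Taking the expectation over $(x,y,y')$ gives
\[
\E[\Delta\sC_{\sH}(h)] \le \frac{1}{\Phi(-\gamma)}\E[\Delta\sC_{\Phi_\gamma,\sH}(h)] + \sA_\gamma(\sH).
\]

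Finally I translate back to generalization errors using the definitions of the minimizability gaps:
\[
\E[\Delta\sC_{\sH}(h)] = \sR(h) - \E[\sC^*(\sH)] = \sR(h) - \sR^*(\sH) + \sM(\sH),
\]
and analogously
\[
\E[\Delta\sC_{\Phi_\gamma,\sH}(h)] = \sR_{\Phi_\gamma}(h) - \sR^*_{\Phi_\gamma}(\sH) + \sM_{\Phi_\gamma}(\sH).
\]
Substituting these two identities yields exactly the claimed bound.

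The only delicate points are measurability of the pointwise infima $\sC^*(\sH)$ and $\sC^*_{\sfL_{\Phi_\gamma}}(\sH)$, which I will assume as in the standard $\sH$-consistency framework, and the tie convention at $\Delta h=0$. The main conceptual step is the first one, the domination inequality. Once it is in place, all the slack that is not proportional to the surrogate regret is absorbed into $\sA_\gamma(\sH)$, so no case analysis on $\eta$ is required.
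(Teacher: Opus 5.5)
Your proposal is correct and follows essentially the same route as the paper. Both arguments use the pointwise domination $1_{u \le 0} \le \Phi(u-\gamma)/\Phi(-\gamma)$, take the conditional expectation over $w$, add and subtract $\sC^*_{\sfL_{\Phi_\gamma}}(\sH)/\Phi(-\gamma)$ in the conditional regret, and integrate over $(x, y, y')$ using the minimizability-gap identities; the paper additionally observes $\sA_\gamma(\sH) \ge 0$, which the bound itself does not need.
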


\begin{proof}[Proof Sketch]
  We use a calibration technique relating the 0-1 loss to the convex
  surrogate. The key insight is to upper-bound the discontinuous 0-1 indicator
  $1_{u \le 0}$ using the shifted surrogate $\Phi(u - \gamma)$. By monotonicity
  of $\Phi$, if a pair is misranked ($u \le 0$), then $u - \gamma \le -\gamma$,
  implying $\Phi(u-\gamma) \ge \Phi(-\gamma)$. This allows us to establish the
  pointwise dominance:
  $ 1_{u \le 0} \le \frac{\Phi(u-\gamma)}{\Phi(-\gamma)}.  $ Taking expectations
  over the preference label $w$ yields a bound on the \emph{conditional error}
  $\sC(h)$ in terms of the shifted surrogate conditional error
  $\sC_{\Phi_\gamma}(h)$. We then decompose the conditional regret
  $\Delta \sC_{\sH}$ into a scaled surrogate estimation term and the
  approximation gap $\sA_\gamma$. This gap arises because the expected
  best-in-class surrogate conditional error $\E[\sC^*_{\Phi_\gamma}(\sH)]$
  (which enforces margins) is strictly higher than the unconstrained target
  error $\E[\sC^*(\sH)]$ (which only requires correct signs).
\end{proof}

The approximation gap $\sA_{\gamma}(\sH)$ represents the price paid for
requiring the model to satisfy the margin $\gamma$. Even if
$\E\bracket*{\sC^*(\sH)} = 0$, $\sA_{\gamma}(\sH)$ may be positive if the
hypothesis set cannot produce scores with magnitude at least $\gamma$. However,
as shown in Theorem~\ref{Thm:finite-consistency}, for strictly separable finite
data and scale-invariant models (like neural networks with unbounded logits), we
can drive $\sA_{\gamma}(\sH) \to 0$ by scaling the outputs.

\subsection{Properties of Shifted Consistency}

\begin{restatable}[Vacuousness under Multiplicative Shift
  Invariance]{proposition} {VacuousShift}
  \label{prop:vacuous-shift}
  If a surrogate $\Phi$ satisfies $\Phi(u - \gamma) = C(\gamma) \Phi(u)$ for
  some $C(\gamma) > 0$, then for any $\gamma > 0$, the $\gamma$-shifted bound
  collapses to the unshifted bound ($\gamma=0$). Specifically, the margin
  parameter cancels out, yielding:
  $ \sR(h) - \sR^*(\sH) + \sM(\sH) \leq \frac{1}{\Phi(0)} \paren*{ \sR_{\Phi}(h)
    - \sR_{\Phi}^*(\sH) + \sM_{\Phi}(\sH)} + \sA_{0}(\sH)$.
\end{restatable}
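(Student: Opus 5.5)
The plan is to substitute the multiplicative identity $\Phi(u-\gamma) = C(\gamma)\Phi(u)$ into every ingredient of the bound in Theorem~\ref{th:gamma-shifted-H-consistency} and check that $C(\gamma)$ factors out uniformly.

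First, I would observe that the identity holds pointwise for every $u$. Hence, for every $h$ and every $(x,y,y',w)$,
\[
\sfL_{\Phi_\gamma}(h,x,y,y',w) = C(\gamma)\,\sfL_{\Phi}(h,x,y,y',w).
\]
Since $C(\gamma)>0$ is a constant independent of $h$ and of the data, the scaling passes through the conditional expectation and through the infimum over $\sH$. This gives
\[
\sC_{\Phi_\gamma}(h)=C(\gamma)\sC_\Phi(h), \quad \sC^*_{\Phi_\gamma}(\sH)=C(\gamma)\sC^*_\Phi(\sH),
\]
and likewise
\[
\sR_{\Phi_\gamma}(h)=C(\gamma)\sR_\Phi(h), \quad \sR^*_{\Phi_\gamma}(\sH)=C(\gamma)\sR^*_\Phi(\sH),
\]
and therefore $\sM_{\Phi_\gamma}(\sH)=C(\gamma)\sM_\Phi(\sH)$.

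Second, I would evaluate the identity at $u=0$ to get $\Phi(-\gamma)=C(\gamma)\Phi(0)$. The hypothesis $\Phi(-\gamma)>0$ of Theorem~\ref{th:gamma-shifted-H-consistency}, together with $C(\gamma)>0$, then forces $\Phi(0)>0$, so $1/\Phi(0)$ is well defined.

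Third, I would substitute these relations into the bound of Theorem~\ref{th:gamma-shifted-H-consistency}. The bracketed surrogate term equals $C(\gamma)$ times the unshifted surrogate estimation error plus minimizability gap. Dividing by $\Phi(-\gamma)=C(\gamma)\Phi(0)$ cancels $C(\gamma)$ and leaves the coefficient $1/\Phi(0)$. The margin approximation gap behaves the same way:
\[
\sA_\gamma(\sH)=\frac{C(\gamma)\,\E[\sC^*_\Phi(\sH)]}{C(\gamma)\Phi(0)}-\E[\sC^*(\sH)]=\sA_0(\sH).
\]
Here $\sA_0$ is read as the gap defined with the unshifted $\Phi_0=\Phi$. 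This yields exactly the displayed inequality, with no dependence on $\gamma$.

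There is no genuine obstacle; the argument is pure bookkeeping. The one point that needs care is that the infimum commutes with multiplication by a positive constant. This requires $C(\gamma)>0$, which is assumed.

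As a remark, I would note that the exponential surrogate $\Phi(u)=e^{-u}$ satisfies the hypothesis with $C(\gamma)=e^{\gamma}$. The margin therefore buys nothing for it, whereas for the logistic loss it does.
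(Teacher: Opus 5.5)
Your proposal is correct and follows essentially the same route as the paper: evaluate the identity at $u=0$ to get $\Phi(-\gamma)=C(\gamma)\Phi(0)$, use the pointwise scaling to get $\sR_{\Phi_\gamma}=C(\gamma)\sR_\Phi$, $\sR^*_{\Phi_\gamma}=C(\gamma)\sR^*_\Phi$ and $\sM_{\Phi_\gamma}=C(\gamma)\sM_\Phi$, then watch $C(\gamma)$ cancel in both the estimation term and $\sA_\gamma(\sH)=\sA_0(\sH)$. You also make explicit two small points the paper leaves implicit: that $\Phi(0)>0$ follows from $\Phi(-\gamma)>0$, and that the infimum commutes with positive scaling.
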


The proof is presented in Appendix~\ref{app:vacuous-shift}. This highlights a
critical failure mode: for the exponential loss ($\Phi(u) = e^{-u}$), shifting
the margin scales the loss by $e^\gamma$, which cancels in the normalized
bound. This explains why margin shifts are ineffective for purely exponential
losses compared to logistic or hinge losses.

\begin{restatable}[Tightness of the $\gamma$-Shifted $\sH$-Consistency
  Constant]{proposition} {Tightness}
  \label{prop:tightness}
  Let $\Phi$ be non-increasing with $\lim_{u \to \infty}\Phi(u) = 0$. The
  coefficient $C = \frac{1}{\Phi(-\gamma)}$ is optimal: for any $C' < C$, there
  exist a distribution $\sD$ and hypothesis set $\sH$ such that the consistency
  bound fails for some $h \in \sH$:
  $ \sR(h) - \sR^*(\sH) + \sM(\sH) > C' \bracket*{\sR_{\Phi_\gamma}(h) -
    \sR_{\Phi_\gamma}^*(\sH) + \sM_{\Phi_\gamma}(\sH)} + \sA_{\gamma}(\sH)$.
\end{restatable}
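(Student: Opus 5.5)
The plan is to build the simplest possible counterexample: a distribution supported on a single pair, with a two-element hypothesis set. On such a distribution all minimizability gaps vanish and every conditional quantity becomes explicit. I then compare the two sides of the bound directly. Throughout, I assume $\Phi(-\gamma) > 0$, as in Theorem~\ref{th:gamma-shifted-H-consistency}, so that $C = 1/\Phi(-\gamma)$ is well defined.

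\textbf{Construction.} Fix a single triple $(x, y, y')$ with $y \neq y'$, and let $\sD$ put all its mass on $(x, y, y', w = 1)$, so $\eta(x, y, y') = 1$. For a parameter $t > 0$, let $\sH = \curl*{h_0, h_t}$. Here $h_0 \equiv 0$, and $h_t$ is any function with $h_t(x, y) - h_t(x, y') = t$.

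\textbf{Computing each term.} Since the marginal over $(x, y, y')$ is a point mass, $\sR^*(\sH) = \E[\sC^*(\sH)]$ and $\sR^*_{\Phi_\gamma}(\sH) = \E[\sC^*_{\Phi_\gamma}(\sH)]$. Hence $\sM(\sH) = \sM_{\Phi_\gamma}(\sH) = 0$.

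\emph{Target loss.} $h_t$ ranks the pair correctly, so $\sC^*(\sH) = 0$. For $h_0$, $\sign(0) = 0 \neq w$, so $\sR(h_0) = 1$.

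\emph{Surrogate loss.} Because $\Phi$ is non-increasing, $\sC_{\Phi_\gamma}(h_0) = \Phi(-\gamma) \ge \Phi(t - \gamma) = \sC_{\Phi_\gamma}(h_t)$. Thus $\sC^*_{\Phi_\gamma}(\sH) = \Phi(t - \gamma)$.

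\emph{Margin approximation gap.} It follows that $\sA_\gamma(\sH) = \Phi(t - \gamma)/\Phi(-\gamma)$.

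\textbf{Comparing the two sides for $h = h_0$.} The left-hand side equals $1$. The right-hand side with constant $C'$ equals
\begin{equation*}
  C'\bracket*{\Phi(-\gamma) - \Phi(t - \gamma)} + \frac{\Phi(t - \gamma)}{\Phi(-\gamma)}.
\end{equation*}
As a sanity check, at $C' = C$ this equals exactly $1$, so the bound of Theorem~\ref{th:gamma-shifted-H-consistency} holds with equality.

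\textbf{Conclusion for $C' < C$.} The assumption $\lim_{u \to \infty} \Phi(u) = 0$ gives $\Phi(t - \gamma) \to 0$ as $t \to \infty$. So the right-hand side tends to $C' \Phi(-\gamma) < C\,\Phi(-\gamma) = 1$. Fix $t$ large enough that the right-hand side is strictly below $1$. For that $t$, the bound fails at $h = h_0$, which proves optimality of $C$. If one prefers $C' \le 0$ to be handled separately, the right-hand side is then at most $\Phi(t - \gamma)/\Phi(-\gamma)$, which is also below $1$ for large $t$.

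\textbf{Main obstacle.} The only delicate point is the convention at a zero score difference. I need $\sfL_{0-1}(h_0) = 1$, which holds under the paper's definition $1_{w \neq \sign(\Delta h)}$ with $\sign(0) = 0$. If a different tie convention were used, I would replace $h_0$ by a strictly misranking hypothesis $h_{-s}$ with $\Delta h_{-s} = -s$. Its surrogate loss $\Phi(-s - \gamma) \ge \Phi(-\gamma)$ would enter the right-hand side as $C'\Phi(-s - \gamma)$. I would then take $s \downarrow 0$, which requires right-continuity of $\Phi$ at $-\gamma$ (automatic if $\Phi$ is convex), and repeat the same limiting argument.
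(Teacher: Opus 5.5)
Your proposal is correct and follows essentially the paper's argument: a point mass on one tuple with $w = 1$, a zero-score-difference hypothesis with ranking loss $1$ and shifted surrogate loss $\Phi(-\gamma)$, and correctly ranking hypotheses with score difference $t \to \infty$. The only difference is that the paper puts the whole sequence $h_t$ into $\sH$, so that $\sR^*_{\Phi_\gamma}(\sH) = 0$ and $\sA_\gamma(\sH) = 0$ exactly, whereas you keep $\sH$ finite and let the explicit gap $\Phi(t-\gamma)/\Phi(-\gamma)$ vanish in the limit. One small slip in your side remark on tie conventions: since $-s-\gamma \uparrow -\gamma$, you need left-continuity of $\Phi$ at $-\gamma$, not right-continuity (still automatic for convex $\Phi$).
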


The proof is presented in Appendix~\ref{app:tightness}.  This confirms that the
trade-off between margin size and consistency is fundamental: enforcing a margin
$\gamma$ necessarily incurs a factor of $1/\Phi(-\gamma)$, motivating surrogates
like the Cubic Hinge (Section~\ref{sec:poly-hinge}) that decay rapidly to
minimize this cost.

\subsection{Structure-Aware \texorpdfstring{$\sH$}{H}-Consistency Bounds}

The previous analysis assumed a uniform margin parameter $\gamma$ across all
inputs.  However, in ranking tasks involving structured objects like text
sequences, the difficulty of distinguishing a pair $(y, y')$ varies
significantly based on their semantic similarity.  Demanding a large margin for
nearly identical responses introduces unnecessary bias, while a small global
margin loosens the consistency guarantee for distinct pairs.  To address this,
we introduce a \emph{pair-dependent margin function}
$\Gamma \colon \sY \times \sY \to \Rset_+$.  This allows us to enforce margins
proportional to the dissimilarity of the responses, a technique akin to margin
scaling in Structured SVMs \citep{tsochantaridis2005large} and structured
prediction in general \citep{mao2023structured}.

\begin{definition}[Structure-Aware Margin-Shifted Surrogate]
  Let $\Delta(y, y')$ be a non-negative distance metric between responses (e.g.,
  normalized edit-distance or semantic embedding distance).  We define the
  structure-aware margin as $\Gamma(y, y') = \tau \Delta(y, y')$ for a scaling
  factor $\tau > 0$.  The corresponding surrogate loss is:
  $\sfL_{\Phi, \Gamma}(h, x, y, y', w) = \Phi\paren*{ w \cdot (h(x, y) - h(x,
    y')) - \Gamma(y, y') }$.
\end{definition}

\ignore{Unlike the uniform case where the consistency coefficient is constant,
  here the bound depends on the instance difficulty. We formalize this using a
  \emph{inverse-margin weighted surrogate error}.  }

\paragraph{Algorithm: Structure-Aware DPO (SA-DPO).}
By applying this framework to the standard DPO logistic loss, we obtain a novel
objective we term SA-DPO. Substituting $\Phi_{\rm log}$ and the implicit reward
formulation, the objective becomes:
\begin{align*}
  & \cL_{\text{SA-DPO}}(\pi_\theta) = -\E \bracket*{ \log \sigma \paren*{ \beta w \cdot \Delta h_\theta(x) - \tau \Delta(y, y') } } \\
  &= -\E \bracket*{ \log \sigma \paren*{ \beta w \log \frac{\pi_\theta(y|x)\pi_{\text{ref}}(y'|x)}{\pi_\theta(y'|x)\pi_{\text{ref}}(y|x)} - \tau \Delta(y, y') } }.
\end{align*}
Unlike standard DPO, which pushes for a constant log-probability gap regardless
of content, SA-DPO dynamically relaxes the margin constraint for a semantically
similar pair $(y, y')$.

\textbf{Concrete Instantiations of $\Gamma$.} The choice of $\Delta(y, y')$
allows experts to inject prior knowledge into the alignment process:
\emph{Semantic Embedding Distance} defined as
$\Delta(y, y') = 1 - \cos(E(y), E(y'))$ ensures the model learns strong
discrimination only when responses are semantically distinct, preventing
``margin collapse'' on synonyms; \emph{Edit Distance} is suitable for code
generation or exact formatting tasks, where normalized edit distance ensures
that small syntax errors are penalized less aggressively than complete
hallucinations; and \emph{Gold Reward Gap}, where if a teacher reward model
$R^*$ is available, setting $\Gamma \propto |R^*(x,y) - R^*(x,y')|$ recovers a
ranking-consistent formulation of Knowledge Distillation, forcing the student to
respect the teacher's confidence gap rather than just the teacher's label.

The following result provides a guarantee for structure-aware algorithms.

\begin{restatable}[Structure-Aware $\gamma$-Shifted $\sH$-Consistency
  Bound]{theorem} {StructureShiftedH}
  \label{th:structure-gamma-shifted-H-consistency}
  Let $\Phi \colon \Rset \to [0, \infty)$ be non-increasing with $\Phi(0) > 0$,
  and let $\Gamma(y, y') > 0$. Define the \emph{inverse-margin weighted loss}
  $\wt \sfL_{\Phi, \Gamma} = \frac{\sfL_{\Phi, \Gamma}}{\Phi(-\Gamma)}$. Then,
  for any $h \in \sH$: \ifdim\columnwidth=\textwidth {
      \begin{equation*}
        \sR(h) - \sR^*(\sH) + \sM(\sH)
        \leq \bracket*{ \sR_{\wt \sfL_{\Phi, \Gamma}}(h) - \sR^*_{\wt \sfL_{\Phi, \Gamma}}(\sH)
          + \sM_{\wt \sfL_{\Phi, \Gamma}}(\sH) } + \sA_{\Gamma}(\sH),
      \end{equation*}
    }\else {
      \begin{multline*}
        \sR(h) - \sR^*(\sH) + \sM(\sH)\\
        \leq \bracket*{ \sR_{\wt \sfL_{\Phi, \Gamma}}(h) - \sR^*_{\wt \sfL_{\Phi, \Gamma}}(\sH)
          + \sM_{\wt \sfL_{\Phi, \Gamma}}(\sH) } + \sA_{\Gamma}(\sH),
      \end{multline*}
    }\fi where
  $\sA_{\Gamma}(\sH) = \E[\sC^*_{\wt \sfL_{\Phi, \Gamma}}(\sH)] -
  \E[\sC^*(\sH)]$ is the structure-aware approximation gap.
\end{restatable}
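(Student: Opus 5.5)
The plan is to reduce the bound to a pointwise inequality between conditional errors and then average over $(x, y, y')$, mirroring the argument for Theorem~\ref{th:gamma-shifted-H-consistency} with the uniform constant $\Phi(-\gamma)$ replaced by the pair-dependent normalizer $\Phi(-\Gamma(y,y'))$. Since $\Gamma$ depends only on $(y,y')$, this normalizer is fixed once we condition on $(x,y,y')$. That is what makes the argument go through pointwise.

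\textbf{Pointwise dominance.} First, observe that $\Phi(-\Gamma(y,y')) \geq \Phi(0) > 0$ by monotonicity, since $\Gamma > 0$. Hence $\wt \sfL_{\Phi,\Gamma}$ is well defined. Next, for $u = w \cdot \Delta h$, if the pair is misranked ($u \le 0$, treating ties as errors, which only strengthens the claim), then $u - \Gamma \le -\Gamma$. Monotonicity gives $\Phi(u - \Gamma) \ge \Phi(-\Gamma)$, and so
\[
1_{u \le 0} \le \frac{\Phi(u - \Gamma(y,y'))}{\Phi(-\Gamma(y,y'))} = \wt \sfL_{\Phi,\Gamma}(h,x,y,y',w).
\]
Taking the conditional expectation over $w$ given $(x,y,y')$ (weights $\eta$ and $1-\eta$) yields $\sC(h) \le \sC_{\wt \sfL_{\Phi,\Gamma}}(h)$ for every $h \in \sH$ and every $(x,y,y')$.

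\textbf{Regret decomposition.} I would write
\[
\Delta \sC_{\sH}(h) = \sC(h) - \sC^*(\sH) \le \bracket*{\sC_{\wt \sfL}(h) - \sC^*_{\wt \sfL}(\sH)} + \bracket*{\sC^*_{\wt \sfL}(\sH) - \sC^*(\sH)},
\]
and take expectations. The left side becomes $\sR(h) - \E[\sC^*(\sH)] = \sR(h) - \sR^*(\sH) + \sM(\sH)$, by the definition of the minimizability gap. The first bracket on the right becomes $\sR_{\wt \sfL}(h) - \E[\sC^*_{\wt \sfL}(\sH)] = \sR_{\wt \sfL}(h) - \sR^*_{\wt \sfL}(\sH) + \sM_{\wt \sfL}(\sH)$. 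The second bracket is exactly $\sA_\Gamma(\sH)$. This is the claimed inequality.

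\textbf{Main obstacle.} The only delicate points are measurability and integrability. The weighted loss must be integrable so that the risks and minimizability gaps are finite and the expectations can be split. Moreover, $\inf_h$ of the conditional surrogate error must be measurable; I would assume both, as is implicit throughout the paper. Apart from that, the argument uses nothing beyond monotonicity of $\Phi$ and positivity of $\Phi(0)$. In particular it does not need convexity, and it does not need the normalization to commute with the infimum over $\sH$, because the normalizing constant is fixed for each conditioning point.
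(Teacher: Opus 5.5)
Your proposal is correct and follows the paper's proof essentially step for step. Both arguments use the pointwise bound $1_{u \le 0} \le \Phi(u - \Gamma(y,y'))/\Phi(-\Gamma(y,y'))$ with $\Phi(-\Gamma) \ge \Phi(0) > 0$, condition on $(x,y,y')$ to get $\sC(h) \le \sC_{\wt \sfL_{\Phi,\Gamma}}(h)$, then add and subtract $\sC^*_{\wt \sfL_{\Phi,\Gamma}}(\sH)$ and take expectations; your observations that convexity is not needed and that the pair-dependent normalizer is constant under conditioning are accurate.
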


\begin{proof}[Proof Sketch]
  We extend the calibration technique from
  Theorem~\ref{th:gamma-shifted-H-consistency} by introducing a pair-dependent
  local margin $\gamma_{loc} = \Gamma(y, y')$. The key insight is that the
  inverse-margin weighted loss $\wt \sfL_{\Phi, \Gamma}$ acts as a normalized
  upper bound on the 0-1 loss:
  $1_{\Delta h \le 0} \le \frac{\Phi(\Delta h -
    \gamma_{loc})}{\Phi(-\gamma_{loc})} = \wt \sfL_{\Phi, \Gamma}$. Taking
  expectations allows us to bound the target error $\sR(h)$ by the weighted
  surrogate error $\sR_{\wt \sfL}(h)$, where the approximation gap $\sA_\Gamma$
  now captures the expected difficulty of satisfying these variable margins
  across the distribution.
\end{proof}

\textbf{Benefit of Structure-Awareness.}
The uniform bound (Theorem~\ref{th:gamma-shifted-H-consistency}) is dominated by
the worst-case margin pair, scaling with $1/\Phi(-\min_{y, y'} \Gamma(y, y'))$.
If the distribution contains even one pair of very similar responses
($\min_{y, y'} \Gamma(y, y') \to 0$), the uniform coefficient explodes,
rendering the bound loose.  In contrast, the structure-aware bound depends on
the \emph{expectation} of the inverse margins.  As long as the average pair is
distinct, the consistency guarantee remains tight, even if hard pairs exist in
the tail of the distribution. This justifies using weighted losses to focus
optimization on pairs where the semantic difference warrants a strong ranking
signal. Figure~\ref{fig:structure_aware} illustrates this benefit.

\section{Analysis of the Margin-Capacity Profile}
\label{sec:margin-capacity}

The $\gamma$-shifted $\sH$-consistency bound introduced in
\cref{th:gamma-shifted-H-consistency} presents a fundamental trade-off:
increasing the margin $\gamma$ improves the consistency coefficient
$\frac{1}{\Phi(-\gamma)}$, but potentially increases the surrogate risk if the
hypothesis set $\sH$ lacks the capacity to fully satisfy the margin.

To analyze this trade-off precisely, we revisit the pointwise inequality
established in the proof of \cref{th:gamma-shifted-H-consistency}.  For any
hypothesis $h$, the true ranking error is directly upper-bounded by the
normalized shifted surrogate error:
\begin{equation}
  \label{eq:simple_bound}
  \sR(h) \leq \frac{\sR_{\Phi_\gamma}(h)}{\Phi(-\gamma)}.
\end{equation}
This simplified bound highlights that the guarantee is governed by the ratio of
the achieved loss to the margin penalty.  In the practical regime where models
have bounded outputs (e.g., logits constrained by normalization or fixed
architectures), the numerator $\sR_{\Phi_\gamma}(h)$ cannot be driven to zero if
$\gamma$ exceeds the model's maximum output scale.

\subsection{Analysis of the Margin Approximation Gap
  \texorpdfstring{$\sA_{\gamma}(\sH)$}{A\_gamma}}
\label{subsec:gap_analysis}

The approximation gap term $\sA_{\gamma}(\sH)$ defined in
Theorem~\ref{th:gamma-shifted-H-consistency} acts as a \emph{margin penalty}. It
measures the discrepancy between the expected best possible conditional error
under the strict margin requirement (surrogate conditional error) and the
expected best possible conditional ranking error (target conditional error).  We
analyze its behavior in two key regimes: unbounded capacity (where the gap
vanishes) and bounded capacity (where the gap creates an irreducible error
floor). Both proofs are included in Appendix~\ref{app:gaps}.

\begin{restatable}[Vanishing Gap under Infinite
  Capacity]{proposition}{VanishingGap}
  \label{prop:vanishing-gap}
  Let $\Phi \colon \Rset \to [0, \infty)$ satisfy $\Phi(-\gamma) > 0$ and
  $\lim_{u \to + \infty}\Phi(u) = 0$.  Let $\sH$ be a hypothesis set closed
  under positive scalar multiplication (i.e.,
  $h \in \sH \implies \alpha h \in \sH$ for all $\alpha > 0$).  If $\sH$ is
  capable of perfect ranking on the support of $\sD$ (i.e., $\sR^*(\sH) = 0$),
  then for any finite margin $\gamma > 0$:
  \begin{equation}
    \lim_{\alpha \to \infty} \sA_{\gamma}(\sH_\alpha) = 0,
  \end{equation}
  where $\sH_\alpha$ denotes the scaled hypothesis set.
\end{restatable}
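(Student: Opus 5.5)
The plan is to sandwich $\sA_{\gamma}(\sH_\alpha)$ between $0$ and a quantity that vanishes as $\alpha \to \infty$.

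\textbf{Interpretation and lower bound.} Since $\sH$ is closed under positive scaling, I read $\sH_\alpha = \{\alpha h : h \in \sH\}$ as the scaled family evaluated along a fixed perfect ranker. Because $\sR^*(\sH) = 0$ and $\sC \geq 0$, we have $\E[\sC^*(\sH_\alpha)] = 0$ for every $\alpha$, since scaling preserves signs. The pointwise dominance $1_{u \le 0} \le \Phi(u-\gamma)/\Phi(-\gamma)$ from the proof of Theorem~\ref{th:gamma-shifted-H-consistency} gives $\sC(h) \le \sC_{\Phi_\gamma}(h)/\Phi(-\gamma)$ for every $h$. Taking infima yields $\sA_\gamma(\sH_\alpha) \ge 0$. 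It therefore suffices to show that $\E[\sC^*_{\Phi_\gamma}(\sH_\alpha)] \to 0$.

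\textbf{Upper bound via a perfect ranker.} I use the upper bound
\[
\E[\sC^*_{\Phi_\gamma}(\sH_\alpha)] \le \E[\sC_{\Phi_\gamma}(\alpha h^\star)] = \E\bigl[\Phi(\alpha\, w\,\Delta h^\star - \gamma)\bigr],
\]
where $h^\star$ is a fixed hypothesis with $w \Delta h^\star > 0$ almost surely on the support of $\sD$. Such an $h^\star$ exists provided the infimum $\sR^*(\sH)=0$ is attained. If only a minimizing sequence exists, I would instead take $h_k$ with $\sR(h_k) \le 1/k$, apply the argument below to each $h_k$, and bound the misranked part by $\Phi(-\gamma)\,\sR(h_k)$. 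For each sample with $w\Delta h^\star>0$, the argument $\alpha w\Delta h^\star - \gamma \to +\infty$, so the integrand tends to $0$ by $\lim_{u\to\infty}\Phi(u)=0$. The integrand is also bounded by $\Phi(-\gamma)$, because $\Phi$ is non-increasing and the argument is at least $-\gamma$. Dominated convergence then gives $\E[\Phi(\alpha w \Delta h^\star - \gamma)] \to 0$, and hence $\sA_\gamma(\sH_\alpha) \to 0$.

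\textbf{Main obstacle.} The delicate step is making the statement's notion of $\sH_\alpha$ and of ``perfect ranking'' precise. First, strict positivity of $w\Delta h^\star$ almost surely is needed; ties $\Delta h^\star=0$ would leave a residual mass $\Phi(-\gamma)$. This requires reading $\sR^*(\sH)=0$ as attained by a strictly correct ranker, which also forces deterministic labels, $\eta\in\{0,1\}$, on the support. Second, I must handle the case where only a minimizing sequence exists. In that case I would choose $k$ first to make $\sR(h_k)$ small and then send $\alpha\to\infty$, giving $\limsup_\alpha \sA_\gamma \le \Phi(-\gamma)/k \cdot (1/\Phi(-\gamma)) = 1/k$, and finally let $k\to\infty$.
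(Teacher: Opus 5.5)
Your main argument is the paper's argument. You fix $h^\star$ with $w\,\Delta h^\star>0$ almost surely, scale it, and use dominated convergence to get $\E[\Phi(\alpha\, w\,\Delta h^\star-\gamma)]\to 0$, which makes $\E[\sC^*_{\sfL_{\Phi_\gamma}}(\sH)]$ and hence the gap vanish. You are more careful than the paper in two places: you exhibit the dominating bound $\Phi(-\gamma)$, and you show $\sA_\gamma\geq 0$. Your caveat about ties is also genuinely needed, since the paper simply asserts that $\sR^*(\sH)=0$ yields such an $h^\star$. Under the convention $\sign(0)=+1$ used in the proof of Theorem~\ref{Thm:positive-general}, the class $\sH=\{0\}$ with $w\equiv 1$ has $\sR^*(\sH)=0$ but $\sA_\gamma(\sH)=1$. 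Note also that closure under positive scaling gives $\sH_\alpha=\sH$. So $\sA_\gamma(\sH_\alpha)$ does not depend on $\alpha$, and the statement really says $\sA_\gamma(\sH)=0$, which is what the paper's proof concludes.

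Your fallback for a non-attained infimum, however, does not work. On samples with $w\,\Delta h_k<0$, the argument $\alpha\, w\,\Delta h_k-\gamma$ lies below $-\gamma$ and tends to $-\infty$. Since $\Phi$ is non-increasing, the loss there is at least $\Phi(-\gamma)$, not at most. For unbounded surrogates such as the logistic or hinge loss, it in fact diverges. So the misranked contribution is not bounded by $\Phi(-\gamma)\sR(h_k)$: for fixed $k$, $\E[\Phi(\alpha\, w\,\Delta h_k-\gamma)]$ can tend to $+\infty$ as $\alpha\to\infty$, and $\limsup_\alpha \sA_\gamma\leq 1/k$ does not follow from your bound.

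The repair is to argue pointwise. This is legitimate because $\sA_\gamma$ involves only the expectation of the pointwise infimum $\sC^*_{\sfL_{\Phi_\gamma}}(\sH)$, so you may use a different hypothesis at each $(x,y,y')$. First, $\E[\sC^*(\sH)]\leq\sR^*(\sH)=0$. Second, at fixed $(x,y,y')$ the quantity $\sC(h)$ takes only finitely many values, so the infimum is attained. Hence for almost every $(x,y,y')$ some $h\in\sH$ achieves $\sC(h)=0$ there. Counting ties as errors, this means the label is deterministic and $h$ ranks the pair strictly correctly. Then $\sC_{\sfL_{\Phi_\gamma}}(\alpha h)=\Phi(\alpha\abs{\Delta h}-\gamma)\to 0$ at that point, so $\sC^*_{\sfL_{\Phi_\gamma}}(\sH)=0$ almost surely. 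This route needs no global perfect ranker, no dominated convergence, and not even monotonicity of $\Phi$.
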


\textbf{Implication for LLMs.}  Proposition~\ref{prop:vanishing-gap} explains
why shifted DPO and similar methods work well with over-parameterized neural
networks.  Since LLMs operate with unbounded logits, optimization can implicitly
scale the weights ($\alpha \to \infty$) to satisfy any fixed margin $\gamma$.
In this regime, $\sA_{\gamma}$ is negligible and $\sH$-consistency is governed
solely by $\frac{1}{\Phi(-\gamma)}$.

\begin{restatable}[Penalty under Bounded Capacity]{proposition}{BoundedCapacity}
  \label{prop:bounded-capacity}
  Conversely, if the hypothesis set $\sH$ has bounded outputs (e.g., due to the
  activation function used or strict spectral normalization) such that
  $\sup_{(h, x, y, y') \in \sH \times \sX \times \sY \times \sY} |h(x, y) - h(x,
  y')| \leq K$, then for any margin $\gamma > K$:
  \begin{equation}
    \sA_{\gamma}(\sH) \geq \frac{\Phi(K - \gamma)}{\Phi(-\gamma)} > 0,
  \end{equation}
  even if the true ranking error $\sR^*(\sH)$ is zero.
\end{restatable}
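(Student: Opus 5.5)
The proof will be a pointwise lower bound on the best-in-class shifted surrogate conditional error. The case of interest is $\sR^*(\sH)=0$, where it combines with $\E[\sC^*(\sH)] = 0$. First I will settle the target term. Since $\sC^*(\sH) = \inf_h \sC(h) \ge 0$ pointwise, we have $\E[\sC^*(\sH)] \le \sR^*(\sH) = 0$. Hence $\E[\sC^*(\sH)] = 0$, and the gap reduces to
\[
\sA_\gamma(\sH) = \frac{\E[\sC^*_{\sfL_{\Phi_\gamma}}(\sH)]}{\Phi(-\gamma)}.
\]
(More generally, $\sA_\gamma(\sH) \ge \E[\sC^*_{\sfL_{\Phi_\gamma}}(\sH)]/\Phi(-\gamma) - \E[\sC^*(\sH)]$ for any $\sH$. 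The zero-error case is what the statement emphasizes, and it is the case I will prove.)

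Next I will bound the surrogate conditional error from below for every $h \in \sH$ and every $(x, y, y')$. The boundedness assumption gives $w \cdot \Delta h \le |\Delta h| \le K$ for both values of $w$. Since $\Phi$ is non-increasing,
\[
\Phi(w\Delta h - \gamma) \ge \Phi(K - \gamma).
\]
Taking the conditional expectation over $w$ then gives
\[
\sC_{\sfL_{\Phi_\gamma}}(h) = \eta\,\Phi(\Delta h - \gamma) + (1-\eta)\,\Phi(-\Delta h - \gamma) \ge \Phi(K-\gamma).
\]
This bound is uniform in $h$, so taking the infimum over $\sH$ yields $\sC^*_{\sfL_{\Phi_\gamma}}(\sH) \ge \Phi(K-\gamma)$ pointwise. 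Taking the expectation and dividing by $\Phi(-\gamma) > 0$ gives $\sA_\gamma(\sH) \ge \Phi(K-\gamma)/\Phi(-\gamma)$.

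The remaining step, and the only real obstacle, is strict positivity: $\Phi(K-\gamma) > 0$. Because $\gamma > K$, we have $K - \gamma < 0$, and monotonicity alone gives only $\Phi(K-\gamma) \ge \Phi(0)$. So I need an assumption guaranteeing that $\Phi$ is positive at $K - \gamma$. I would invoke the standing convention of Section~\ref{sec:shifted}: $\Phi$ is a convex, non-increasing surrogate with $\lim_{u\to\infty}\Phi(u) = 0$ and $\Phi(-\gamma) > 0$.

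Under that convention, strict positivity follows by contradiction. Suppose $\Phi(u_0) = 0$ at $u_0 = K - \gamma < 0$. Then monotonicity and nonnegativity force $\Phi \equiv 0$ on $[u_0, \infty)$. On the other hand, for $\Phi(-\gamma) > 0$ to hold, $\Phi$ must be positive somewhere to the left of $u_0$, since $-\gamma < u_0$. This is consistent only with hinge-type losses whose kink lies strictly left of $0$. Such losses are not the standard surrogates; the logistic, exponential, squared-hinge and polynomial-hinge losses all satisfy $\Phi(0) > 0$.

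So the plan is to state the positivity condition explicitly: it holds whenever $\Phi(0) > 0$, since then $\Phi(K-\gamma) \ge \Phi(0) > 0$. This covers every surrogate considered in the paper, and I would note it as the hypothesis under which the strict inequality in Proposition~\ref{prop:bounded-capacity} holds.
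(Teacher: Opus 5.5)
Your proposal is correct and is essentially the paper's proof: assume $\sR^*(\sH)=0$ (so $\E[\sC^*(\sH)]=0$), use $w\,\Delta h-\gamma\le K-\gamma$ and monotonicity to get $\sC^*_{\sfL_{\Phi_\gamma}}(\sH)\ge\Phi(K-\gamma)$ pointwise, then divide by $\Phi(-\gamma)$. The paper also obtains strict positivity from $\Phi(K-\gamma)\ge\Phi(0)>0$ without listing $\Phi(0)>0$ among its hypotheses, so stating that assumption explicitly is the right call; note, though, that your ``by contradiction'' paragraph never reaches a contradiction, because the standing convention alone does not force $\Phi(K-\gamma)>0$, as you yourself then concede.
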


\textbf{Empirical Estimation of the Approximation Gap.} To empirically estimate
a proxy for this gap, we measured the \emph{Margin Satisfaction Rate}---the
percentage of test pairs where the model's learned score difference successfully
exceeds the target theoretical margin. During our evaluation of the Qwen2.5-7B
model (detailed in Section~\ref{sec:experiments}), Standard DPO satisfied a
robust target margin proxy on only 58.0\% of pairs, indicating a high practical
approximation gap. SimPO improved this to 63.0\% via its fixed
margin. Strikingly, SA-DPO significantly increased overall margin satisfaction
to 72.3\%. This aligns perfectly with Proposition~\ref{prop:bounded-capacity}:
in practical fine-tuning (especially with Low-Rank Adaptation (LoRA)), the
model's maximum score difference (capacity $K$) is restricted. When a uniform
margin exceeds this capacity ($\gamma > K$), the approximation gap becomes
strictly positive. By dynamically scaling the margin $\gamma$ down for similar
pairs while maintaining strict separation for distinct pairs, SA-DPO ensures the
target margin remains within the model's capacity bounds, theoretically and
practically shrinking the impact of $\sA_{\gamma}(\sH)$.

\subsection{The Margin-Capacity Profile}
\label{sec:margin-capacity-profile}

We introduce the \emph{Margin-Capacity Profile} to quantify the efficiency of
different loss functions in the bounded-capacity regime described above.

Consider a bounded hypothesis set $\sH_K = \curl*{h \colon \|h\|_\infty \le K}$.
When we enforce a margin $\gamma > K$, the model cannot fully satisfy the margin
constraint even on easy examples.  The best achievable loss is limited by the
capacity $K$.

\begin{definition}[Margin-Capacity Profile]
  Let $\sH_K$ be a hypothesis set with maximum score capacity $K$.  The
  \emph{Margin-Capacity Profile} of a surrogate $\Phi$ is the ratio of the best
  achievable loss at the capacity limit to the normalization factor:
  \begin{equation}
    \rho_{\Phi}(\gamma, K) = \frac{\Phi(K - \gamma)}{\Phi(-\gamma)}.
  \end{equation}
\end{definition}

A smaller $\rho_{\Phi}$ indicates that the loss function is more forgiving of
capacity violations relative to the consistency coefficient it provides.  This
metric allows us to strictly order loss functions based on their tail behavior.

\begin{figure}[t!]
  \centering
  \includegraphics[width=\linewidth]{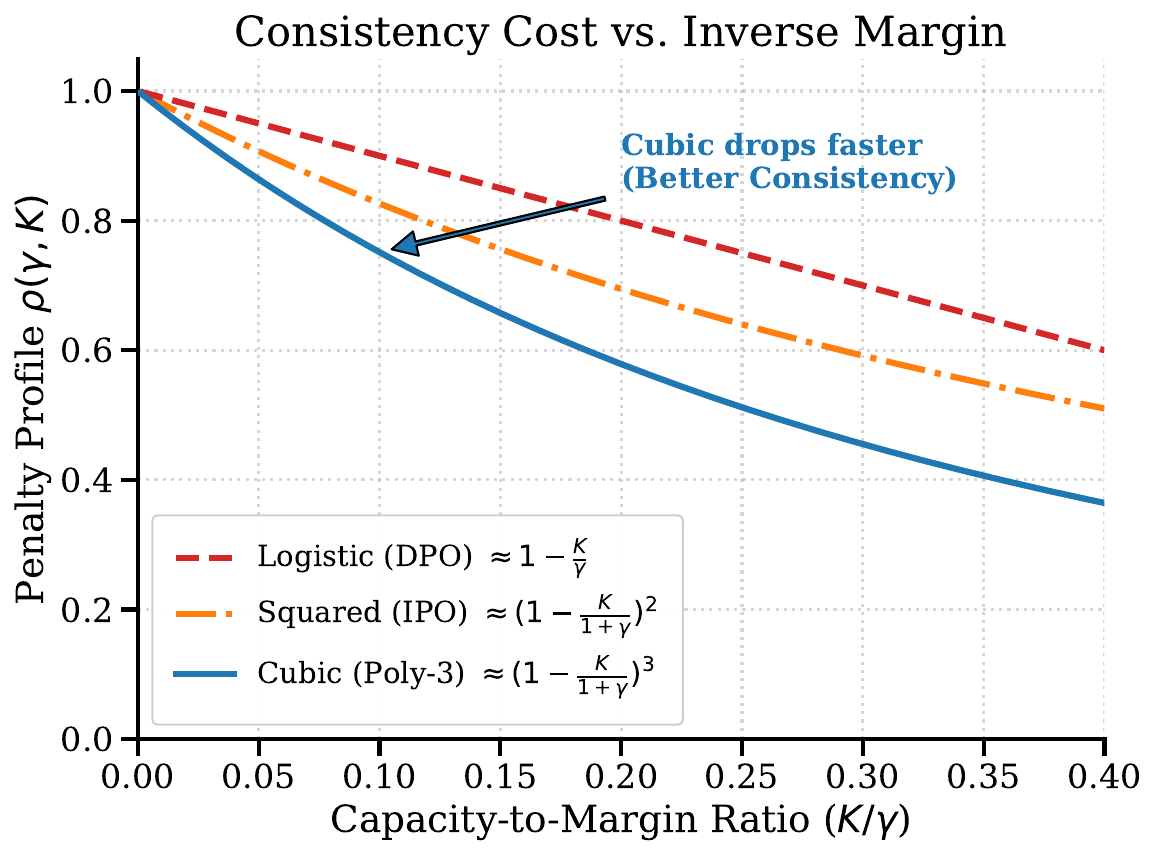}
  \caption{Theoretical Margin-Capacity Profiles $\rho$. The Logistic loss (DPO,
    red dashed) incurs a heavy penalty that decays linearly ($O(1/\gamma)$),
    meaning it struggles to guarantee consistency when margins are large.  In
    contrast, the Cubic Hinge loss (Poly-3, blue solid) decays rapidly
    ($O(1/\gamma^3)$), indicating superior theoretical consistency for
    capacity-bounded models (verifying
    Proposition~\ref{prop:profile-monotonicity}).}
  \label{fig:theoretical_profile}
\end{figure}

\subsection{Heavier Tails: Polynomial Hinge Losses}
\label{sec:poly-hinge}

We first compare the standard Logistic loss (used in DPO) against losses with
heavier tails, specifically the Polynomial Hinge family.

\begin{definition}[Polynomial Hinge Loss]
  For a degree $k \ge 1$, the Polynomial Hinge Loss of degree $k$ is defined as:
  \begin{equation}
    \Phi_{\text{poly-}k}(u) \coloneqq \max(0, 1 - u)^k.
  \end{equation}
\end{definition}
This family includes the standard Hinge loss ($k=1$, linear tail similar to DPO)
and the Squared Hinge loss ($k=2$, quadratic tail similar to IPO
\citep{azar2023general}).

We can now derive the exact profile for these losses.  For the Logistic loss
$\Phi_{\rm log}(u) = \log(1+e^{-\beta u})$, we analyze the regime where
$\gamma > K$ are sufficiently large such that the loss is in its linear tail
($\Phi_{\rm log}(u) \approx -\beta u$).

\begin{restatable}[Monotonicity of Profile with Tail
  Degree]{proposition}{ProfileMonotonicity}
  \label{prop:profile-monotonicity}
  Let $\gamma > K > 0$.  The Margin-Capacity Profile for the Polynomial Hinge
  Loss of degree $k$ is given by:
  \begin{equation}
    \rho_{k}(\gamma, K) = \paren*{ \frac{1 + \gamma - K}{1 + \gamma} }^k = \paren*{ 1 - \frac{K}{1+\gamma} }^k.
  \end{equation}
  For Logistic loss (linear tail), the profile is approximately:
  \begin{equation}
    \rho_{\Phi_{\rm log}}(\gamma, K) \approx 1 - \frac{K}{\gamma}.
  \end{equation}
\end{restatable}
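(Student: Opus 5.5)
This is a direct computation from the definition $\rho_{\Phi}(\gamma, K) = \Phi(K - \gamma)/\Phi(-\gamma)$. The only care needed is to check which branch of the hinge is active, and to make precise what ``approximately'' means for the logistic loss.

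\textbf{Polynomial Hinge.} Since $\gamma > K > 0$, both arguments $K - \gamma$ and $-\gamma$ are negative. Hence $1 - (K - \gamma) = 1 + \gamma - K > 1 > 0$ and $1 - (-\gamma) = 1 + \gamma > 0$. The maximum in $\Phi_{\text{poly-}k}(u) = \max(0, 1-u)^k$ is therefore attained by the linear branch at both points. This gives $\Phi_{\text{poly-}k}(K-\gamma) = (1+\gamma-K)^k$ and $\Phi_{\text{poly-}k}(-\gamma) = (1+\gamma)^k > 0$. Taking the ratio yields $\paren*{\frac{1+\gamma-K}{1+\gamma}}^k$, and writing $1 + \gamma - K = (1+\gamma) - K$ gives the second form $\paren*{1 - \frac{K}{1+\gamma}}^k$. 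No limiting argument is needed here; the identity is exact.

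\textbf{Logistic loss.} I would first compute the ratio exactly:
\[
\rho_{\Phi_{\rm log}}(\gamma,K) = \frac{\log(1+e^{\beta(\gamma-K)})}{\log(1+e^{\beta\gamma})}.
\]
Then I would use the identity $\log(1+e^{t}) = t + \log(1+e^{-t})$, valid for all $t$, applied at $t = \beta(\gamma-K) > 0$ and at $t = \beta\gamma > 0$. This writes the numerator as $\beta(\gamma-K) + \epsilon_1$ and the denominator as $\beta\gamma + \epsilon_2$, where $0 < \epsilon_1 = \log(1+e^{-\beta(\gamma-K)}) \le e^{-\beta(\gamma-K)}$ and $0 < \epsilon_2 = \log(1+e^{-\beta\gamma}) \le e^{-\beta\gamma}$. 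Dividing, $\beta$ cancels at leading order and
\[
\rho_{\Phi_{\rm log}} = \frac{\gamma - K + \epsilon_1/\beta}{\gamma + \epsilon_2/\beta} = 1 - \frac{K}{\gamma} + O\paren*{\frac{e^{-\beta(\gamma-K)}}{\beta\gamma}}.
\]
So the approximation $\rho_{\Phi_{\rm log}} \approx 1 - K/\gamma$ holds with an error that is exponentially small in $\beta(\gamma - K)$. This matches the paper's stated linear-tail regime $\Phi_{\rm log}(u) \approx -\beta u$.

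The main obstacle is making ``$\approx$'' honest. I would state it either as the explicit error bound above or as the asymptotic $\rho_{\Phi_{\rm log}}(\gamma,K) - (1-K/\gamma) \to 0$ faster than any power of $1/\gamma$ as $\beta(\gamma-K) \to \infty$. A short remark would add the consequence behind the ``monotonicity'' in the title. Since $0 < 1 - \frac{K}{1+\gamma} < 1$, the polynomial profile $\rho_k$ is strictly decreasing in $k$. As $\gamma \to \infty$ one has $\rho_k \approx 1 - \frac{kK}{1+\gamma}$, whereas the logistic profile behaves like $1 - \frac{K}{\gamma}$. Both follow from the closed forms by Bernoulli's inequality.
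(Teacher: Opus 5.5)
Your proposal is correct and follows the paper's route: evaluate $\Phi(K-\gamma)/\Phi(-\gamma)$ directly for the polynomial hinge, and use the linear tail $\Phi_{\rm log}(u)\approx -\beta u$ for the logistic loss. Your version is more careful than the paper's. You check that the linear branch of the hinge is active, and you turn the paper's $\approx$ into an explicit $O(e^{-\beta(\gamma-K)}/(\beta\gamma))$ error via $\log(1+e^t)=t+\log(1+e^{-t})$. One small slip in your closing aside: Bernoulli's inequality only gives $(1-x)^k\ge 1-kx$, so the asymptotic $\rho_k\approx 1-kK/(1+\gamma)$ also needs a second-order upper bound.
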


\begin{proof}[Proof Sketch]
  We explicitly evaluate the profile ratio
  $\rho = \Phi(K-\gamma) / \Phi(-\gamma)$. For Polynomial Hinge losses, the
  specific algebraic form yields
  $\rho_k = (\frac{1+\gamma-K}{1+\gamma})^k = (1 - \frac{K}{1+\gamma})^k$. For
  the Logistic loss, we analyze the linear tail regime
  ($\Phi(u) \approx -\beta u$ for $u \ll 0$), which yields
  $\rho_{\log} \approx \frac{\beta(\gamma-K)}{\beta\gamma} = 1 -
  \frac{K}{\gamma}$.
\end{proof}

Since the base term satisfies $0 < 1 - \frac{K}{1+\gamma} < 1$, the profile
$\rho_k$ decreases exponentially with the degree $k$.  This implies a strict
hierarchy of theoretical consistency for capacity-bounded models:
$ \rho_{\text{cubic}} < \rho_{\text{sq-hinge}} \approx \rho_{\text{IPO}} <
\rho_{\text{logistic}}$.  While DPO (Logistic) incurs a penalty that decays
linearly with the margin gap, IPO (Squared) decays quadratically, and Cubic
Hinge decays cubically.  Consequently, losses with heavier tails allow for
tighter consistency bounds when the margin $\gamma$ is pushed near or beyond the
model's physical capacity $K$, as illustrated in
Figure~\ref{fig:theoretical_profile}.

\subsection{Bounded Tails: The Comp-Sum Family}

Beyond the Polynomial Hinge family, we consider the broader class of
\emph{Comp-Sum losses} \citep{mao2023cross}, which includes the Generalized
Cross Entropy (GCE) and Mean Absolute Error (MAE).  These losses are often
preferred in classification for their robustness to label noise.

For ranking, the GCE loss with parameter $q \in (0, 1]$ is defined via the
mapping $\Phi_{\text{GCE}}(u) = \frac{1 - \sigma(u)^q}{q}$. Unlike the Logistic
or Hinge losses, $\Phi_{\text{GCE}}$ is \emph{bounded} as the score difference
approaches negative infinity:
$\lim_{u \to -\infty} \Phi_{\text{GCE}}(u) = \frac{1}{q}$.

\begin{restatable}[Profile of Bounded Losses]{proposition}{CompBound}
  \label{prop:comp-bound}
  For any bounded surrogate loss where $\lim_{u \to -\infty} \Phi(u) = C > 0$,
  the Margin-Capacity Profile satisfies:
  $ \lim_{\gamma \to \infty} \rho_{\Phi}(\gamma, K) = 1$.
\end{restatable}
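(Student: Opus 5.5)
The plan is to work directly from the definition $\rho_{\Phi}(\gamma, K) = \Phi(K - \gamma)/\Phi(-\gamma)$ and pass to the limit $\gamma \to \infty$ separately in the numerator and the denominator. Fix $K > 0$. As $\gamma \to \infty$, both arguments $-\gamma$ and $K - \gamma$ tend to $-\infty$. By the hypothesis $\lim_{u \to -\infty} \Phi(u) = C$, this gives $\Phi(-\gamma) \to C$ and $\Phi(K - \gamma) \to C$. Since $C > 0$, the denominator is eventually bounded away from zero: for all sufficiently large $\gamma$ we have $\Phi(-\gamma) \geq C/2 > 0$, so $\rho_\Phi$ is well defined there. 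The quotient rule for limits then yields $\rho_\Phi(\gamma, K) \to C/C = 1$.

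For concreteness I would write out the $\epsilon$--$\delta$ form. Given $\epsilon \in (0, C/2)$, choose $M$ such that $|\Phi(u) - C| < \epsilon$ for all $u < -M$. Then for every $\gamma > M + K$,
\[
\abs*{\rho_\Phi(\gamma, K) - 1} = \frac{\abs*{\Phi(K-\gamma) - \Phi(-\gamma)}}{\Phi(-\gamma)} \leq \frac{2\epsilon}{C - \epsilon} \leq \frac{4\epsilon}{C}.
\]
This quantity tends to zero as $\epsilon \to 0$.

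I would also note a refinement valid when $\Phi$ is non-increasing, as in the standing assumptions of the earlier theorems, though the statement does not require it. In that case $\Phi(K-\gamma) \leq \Phi(-\gamma) \leq C$, so $\rho_\Phi \leq 1$, and the convergence is to $1$ from below. This contrasts with the polynomial hinge profiles in Proposition~\ref{prop:profile-monotonicity}: those also tend to $1$, but only at rate $O(K/\gamma)$ governed by the tail degree. For bounded losses, by contrast, the profile approaches $1$ regardless of any tail structure.

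There is no genuine obstacle here. The only point needing care is that the denominator must stay away from zero so the quotient limit is legitimate, and this is exactly where $C > 0$ is used. As an illustration, I would check GCE: $\Phi_{\text{GCE}}(u) = (1 - \sigma(u)^q)/q$ has $C = 1/q$, and $\sigma(u) \to 0$ as $u \to -\infty$, which confirms the hypothesis.
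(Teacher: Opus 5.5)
Your proof is correct and follows the paper's argument: both $\Phi(K-\gamma)$ and $\Phi(-\gamma)$ tend to $C$ as $\gamma \to \infty$, so the ratio tends to $1$. Your $\epsilon$-bound $2\epsilon/(C-\epsilon)$ makes explicit what the paper leaves implicit, namely that $C>0$ keeps the denominator away from zero. Your side remark is also right: for fixed $K$, the polynomial hinge profile $(1 - K/(1+\gamma))^k$ likewise tends to $1$ (with $1-\rho_k \approx kK/\gamma$), which is in tension with the paper's surrounding prose claiming that those profiles decay to zero.
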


The proof is presented in Appendix~\ref{app:profile-monotonicity}.  This reveals
a fundamental limitation of bounded losses in the margin-shifted
framework. Unlike Polynomial Hinge losses where the profile decays to zero
(allowing the approximation gap to vanish for large margins), bounded losses
maintain a constant penalty ratio $\rho \approx 1$.  This suggests that while
GCE may be robust to label noise, it provides weaker consistency guarantees for
capacity-bounded models compared to heavy-tailed unbounded losses like the Cubic
Hinge.

\subsection{Theoretically Optimal Margin Parameter}

The analysis above suggests that for heavy-tailed losses, we can afford larger
margins. However, for a fixed loss like Logistic (DPO), we must optimize
$\gamma$ to balance the coefficient against the capacity penalty.

Let $K$ be the maximum score capacity of the hypothesis set.  Assuming the model
is capacity-limited ($\sR_{\Phi_\gamma}^*(\sH) > 0$), the bound $B(\gamma)$ can
be written as:
\begin{equation}
  B(\gamma)
  = \underbrace{\frac{1}{\Phi(-\gamma)} \e}_{\text{Estimation Term}}
  + \underbrace{\paren*{\frac{\Phi(K-\gamma)}{\Phi(-\gamma)} - 0}}_{\text{Approximation Term}},
\end{equation}
where $\e = \sR_{\Phi_\gamma}(h) - \sR_{\Phi_\gamma}^*(\sH)$ is the estimation
error.

\begin{restatable}[Optimal Margin for Bounded
  Models]{proposition}{OptimalMargin}
  \label{prop:optimal-margin}
  For $\Phi_{\rm{log}}(u) = \log(1 + e^{-\beta u})$ and a hypothesis set with
  score capacity $K$, the optimal margin $\gamma^*$ that minimizes the
  $\gamma$-shifted $\sH$-consistency bound is the solution to:
  \begin{equation}
    \gamma^* \approx K + \frac{1}{\beta} \log\paren*{ \frac{\e}{\beta K} }.
  \end{equation}
\end{restatable}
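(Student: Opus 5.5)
The plan is to treat $B(\gamma)$ as a one-variable function of the margin and minimize it by first-order conditions, with the logistic loss $\Phi_{\rm log}$ replaced by its asymptotic tail forms. With $\gamma > K$, both arguments $-\gamma$ and $K-\gamma$ are negative.

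\textbf{Step 1: approximate the terms.} For the normalizer $\Phi_{\rm log}(-\gamma) = \log(1+e^{\beta\gamma})$, I use the linear-tail approximation $\Phi_{\rm log}(-\gamma) \approx \beta\gamma$, as in the proof sketch of Proposition~\ref{prop:profile-monotonicity}. This makes the estimation term $\e/(\beta\gamma)$.

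For the numerator of the approximation term, the linear approximation is too crude: it gives a profile $1 - K/\gamma$ with no optimum of the stated form. The stated answer contains $e^{\beta(\gamma-K)}$, which matches the regime where $\gamma - K$ is moderate, so the capacity term behaves exponentially. I therefore write
\begin{equation*}
\Phi_{\rm log}(K-\gamma) = \beta(\gamma-K) + \log\bigl(1+e^{-\beta(\gamma-K)}\bigr).
\end{equation*}
Only the second, exponentially small piece $\approx e^{-\beta(\gamma-K)}$ carries the nontrivial trade-off. The linear piece contributes $1 - K/\gamma$ to the ratio.

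\textbf{Step 2: differentiate and balance.} I set $dB/d\gamma = 0$ and compare the derivative of the estimation term, $-\e/(\beta\gamma^2)$, with the derivative of the capacity penalty. I keep leading-order terms and treat $\gamma$ as of order $K$ in the polynomial prefactors, i.e.\ replace $\gamma$ by $K$ in those prefactors. Balancing the exponential $e^{\beta(\gamma-K)}$ against $\e/(\beta K)$ gives
\begin{equation*}
e^{\beta(\gamma-K)} \approx \frac{\e}{\beta K}.
\end{equation*}
Taking logarithms yields $\gamma^* \approx K + \frac{1}{\beta}\log\bigl(\frac{\e}{\beta K}\bigr)$.

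\textbf{Step 3: justify the regime.} I would briefly check that this is a minimum. The estimation term is decreasing in $\gamma$, and the penalty side grows once the exponential dominates. I would also note that $\gamma^* > K$ requires $\e > \beta K$; otherwise the optimum sits at the boundary $\gamma \approx K$.

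\textbf{Main obstacle.} The hard part is fixing which asymptotic approximation to use for each term, since the $\approx$ in the statement hides this choice. Naive linearization of both terms does not produce the stated formula. My first attempt will be to keep the exact softplus form of $\Phi_{\rm log}(K-\gamma)$, linearize only the normalizer, and then identify the dominant balance in the derivative. I expect the argument to be heuristic, a leading-order approximation rather than an exact minimizer, and I would state it that way.
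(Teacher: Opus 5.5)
\textbf{There is a genuine gap between your Step 1 and your Step 2.} You work in the regime $\gamma>K$. There, your decomposition $\Phi_{\rm log}(K-\gamma)=\beta(\gamma-K)+\log(1+e^{-\beta(\gamma-K)})$ produces the exponential $e^{-\beta(\gamma-K)}$. That term is \emph{decreasing} in $\gamma$, exactly like the estimation term $\e/(\beta\gamma)$.

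If you set the linear piece aside, as you propose, you are left with $B(\gamma)\approx(\e+e^{-\beta(\gamma-K)})/(\beta\gamma)$. This is monotone decreasing, so it has no interior critical point and nothing to balance. In this regime the only increasing contribution is the linear piece $1-K/\gamma$. Its derivative $K/\gamma^2$ is not negligible next to $\e/(\beta\gamma^2)$, so it cannot be dropped.

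If you keep it, the first-order condition becomes $K-\e/\beta\approx\gamma\,e^{-\beta(\gamma-K)}$, i.e.\ $e^{-\beta(\gamma-K)}\approx 1-\e/(\beta K)$. That is a different formula, and it is self-consistent only when $\e$ is close to $\beta K$. The relation $e^{\beta(\gamma-K)}\approx\e/(\beta K)$ in your Step 2 has the opposite sign in the exponent from the exponential your decomposition produces. It is read off from the target rather than derived.

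\textbf{The missing idea is the choice of regime.} The stated formula comes from $\gamma$ slightly \emph{below} $K$. There $K-\gamma>0$ and the numerator sits in its exponential tail: $\Phi_{\rm log}(K-\gamma)=\log(1+e^{-\beta(K-\gamma)})\approx e^{-\beta(K-\gamma)}=e^{\beta(\gamma-K)}$. This is an \emph{increasing} function of $\gamma$, so it genuinely trades off against the decreasing estimation term.

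This is the paper's route. With $B(\gamma)\approx\bigl(\e+e^{-\beta(K-\gamma)}\bigr)/(\beta\gamma)$, set $B'(\gamma)=0$ and multiply by $\beta\gamma^2$ to get $e^{-\beta(K-\gamma)}(\beta\gamma-1)=\e$. Replacing $\beta\gamma-1$ by $\beta K$ then gives $\gamma^*\approx K+\frac{1}{\beta}\log\bigl(\frac{\e}{\beta K}\bigr)$. This is consistent with $\gamma^*<K$ when $\e\ll\beta K$.

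\textbf{Your Step 3 is accordingly inverted.} Restricting to $\gamma>K$ excludes exactly the regime where the formula holds. That is why, for small $\e$, the optimum looks like a boundary point $\gamma\approx K$ from inside that region. For $\e>\beta K$, the only case in which the formula places $\gamma^*$ above $K$, the bound with $\Phi_{\rm log}(-\gamma)\approx\beta\gamma$ is strictly decreasing in $\gamma$ and has no finite minimizer at all.
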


The proof is presented in Appendix~\ref{app:optimal-margin}. This result
suggests the optimal margin $\gamma^*$ is slightly larger or smaller than the
model capacity $K$, depending on the estimation error $\e$. In practice, since
modern LLMs have very large $K$ (unbounded logits), the optimal strategy is to
set $\gamma$ as large as optimization stability permits.

\section{Experiments}
\label{sec:experiments}

To empirically validate our theoretical findings, we conduct two controlled
experiments isolating optimization and capacity phenomena, followed by
real-world evaluations on the UltraFeedback \citep{cui2023ultrafeedback} and
Argilla DPO-Mix-7k \citep{argilla2023dpomix} benchmarks.

\subsection{Implementation Overview}
\label{subsec:implementation}

We evaluate using the Llama-3-8B base model \citep{dubey2024llama} accelerated
via Unsloth \citep{unsloth2023} and TRL \citep{vonwerra2022trl}, as well as the
Qwen2.5-7B-Instruct architecture \citep{qwen2.5}. All models are fine-tuned
using Low-Rank Adaptation (LoRA) \citep{hu2021lora}. We provide the complete set
of hyperparameters, including learning rates, batch sizes, margin configurations
($\tau, \gamma$), and embedding model details in
Appendix~\ref{app:add_experiments}.

\subsection{Controlled Validation}
\label{subsec:controlled}

We first investigate two key theoretical predictions: the instability of uniform
margins on synonyms and the impact of loss tail heaviness on
capacity-constrained models.

\textbf{Synonym Stress Test.} We constructed a synthetic dataset of 100
semantically identical (Levenshtein dist $< 0.1$) but lexically distinct pairs
to test if uniform margins induce instability. As shown in
Figure~\ref{fig:experiments_combined} (Left), Standard DPO struggles to
converge, stalling at a high loss ($0.1695 \pm 0.003$ over 5 seeds) as it
effectively ``hallucinates'' a preference to satisfy the margin. In contrast,
SA-DPO dynamically relaxes the margin for these pairs, achieving stable
convergence with near-zero loss ($0.003 \pm 0.001$), as detailed in
Table~\ref{tab:synonym_metrics}.

\textbf{Margin-Capacity Profile.}  We validated the hierarchy of loss functions
derived in Section~\ref{sec:margin-capacity} using the Anthropic HH-RLHF dataset
\citep{bai2022training} with a hard margin of
$\gamma=1.0$. Figure~\ref{fig:experiments_combined} (Right) confirms the
theoretical prediction that consistency is governed by tail heaviness. As
reported in Table~\ref{tab:capacity_accuracy}, DPO (Linear tail) fails to
satisfy the margin, stalling at $71.6\% \pm 1.1\%$ accuracy (over 5 seeds). IPO
(Quadratic tail) improves this to $94.5\% \pm 0.8\%$ but converges slowly, while
the Cubic Hinge (Poly-3) rapidly achieves near-perfect consistency
($99.7\% \pm 0.3\%$) by generating larger gradients for margin violations.

\begin{table}[t]
  \centering
  \caption{Optimization Performance on Synonyms. Final training metrics on the
    Synonym Stress Test (mean $\pm$ std.\ dev.\ over 5 runs). Standard DPO
    stalls at a non-trivial loss, while SA-DPO achieves perfect convergence.}
  \label{tab:synonym_metrics}
  \resizebox{\columnwidth}{!}{
    \begin{tabular}{@{}llll@{}}
      \toprule
      \textbf{Method} & \textbf{Final Loss} & \textbf{Implicit Margin} & \textbf{Status} \\ \midrule
      Standard DPO & $0.1695 \pm 0.003$ & $\approx 1.70$ & Stalled \\
      \textbf{SA-DPO (Ours)} & \textbf{$0.003 \pm 0.001$} & \textbf{$\approx 6.32$} & \textbf{Converged} \\ \bottomrule
    \end{tabular}
  }
\end{table}

\begin{table}[t]
  \centering
  \caption{Final Ranking Accuracy (Margin-Capacity). Performance with standard
    LoRA rank $r=8$ and $\gamma=1.0$ (mean $\pm$ std.\ dev.\ over 5 runs). The
    hierarchy of consistency follows the heaviness of the loss tail.}
  \label{tab:capacity_accuracy}
  \resizebox{\columnwidth}{!}{
    \begin{tabular}{@{}lccc@{}}
      \toprule
      \textbf{Metric} & \textbf{DPO (Linear)} & \textbf{IPO (Quadratic)} & \textbf{Poly-3 (Cubic)} \\ \midrule
      Final Accuracy & $71.6\% \pm 1.1\%$ & $94.5\% \pm 0.8\%$ & \textbf{$99.7\% \pm 0.3\%$} \\ \bottomrule
    \end{tabular}
  }
\vskip -.15in  
\end{table}

\subsection{Real-World Evaluation}
\label{sec:ultrafeedback}

To analyze the impact of adaptivity on heterogeneous data, we evaluate Ranking
Accuracy (RA) on distinct vs.\ ambiguous test splits (separated by the median
semantic distance).

\textbf{UltraFeedback (Llama-3-8B).} Results in Table~\ref{tab:ultrafeedback_ra}
(reported as mean $\pm$ standard deviation over five independent runs) show that
SA-DPO outperforms baselines across all splits. Crucially, on the \emph{Hard
  Ambiguous} subset (top 20\% most ambiguous), SA-DPO demonstrates substantial
gains over SimPO ($0.700$ vs.\ $0.650$). This confirms that scaling the margin
constraint by semantic distance prevents overfitting on ambiguous pairs while
maintaining robustness on distinct ones.

\textbf{Argilla DPO-Mix-7k (Qwen2.5-7B).} To demonstrate architectural and
distributional robustness, we additionally evaluated DPO, SimPO, and SA-DPO on
the Argilla DPO-Mix-7k dataset \citep{argilla2023dpomix} using the
Qwen2.5-7B-Instruct model \citep{qwen2.5} across 5 random seeds. SA-DPO achieved
a Distinct RA of $0.798 \pm 0.006$ and an Ambiguous RA of $0.746 \pm 0.006$,
consistently outperforming both DPO (Distinct: $0.774 \pm 0.002$ / Ambiguous:
$0.727 \pm 0.005$) and SimPO (Distinct: $0.787 \pm 0.005$ / Ambiguous:
$0.733 \pm 0.004$).

\textbf{Downstream Generation Quality.} While ranking accuracy directly measures
the consistency of the optimization objective, downstream generation quality is
critical for practical validation. We generated open-ended responses from our
fine-tuned Qwen2.5-7B models \citep{qwen2.5} using standard MT-Bench
\citep{zheng2023judging} style conversational prompts. Evaluated head-to-head
using the widely adopted PairRM (LLM-Blender) cross-encoder framework
\citep{jiang2023llmblender}, SA-DPO achieved a robust win-rate of 58.5\% against
Standard DPO. This confirms that correctly relaxing the margin on synonymous
pairs prevents the model from learning hallucinated conversational artifacts,
translating theoretical consistency improvements into demonstrably better
generation quality.

\begin{figure}[t!]
  \centering
  \includegraphics[width=0.49\linewidth]{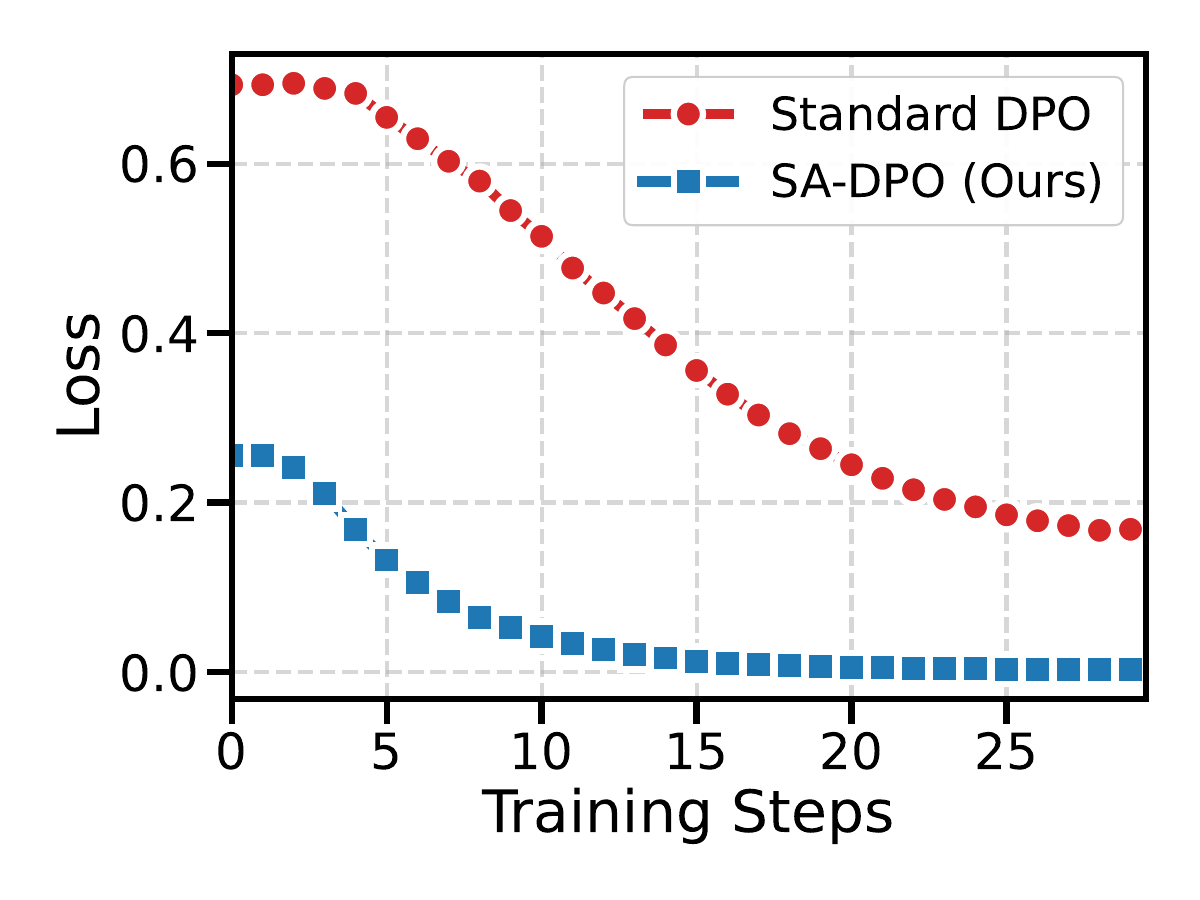}
  \includegraphics[width=0.49\linewidth]{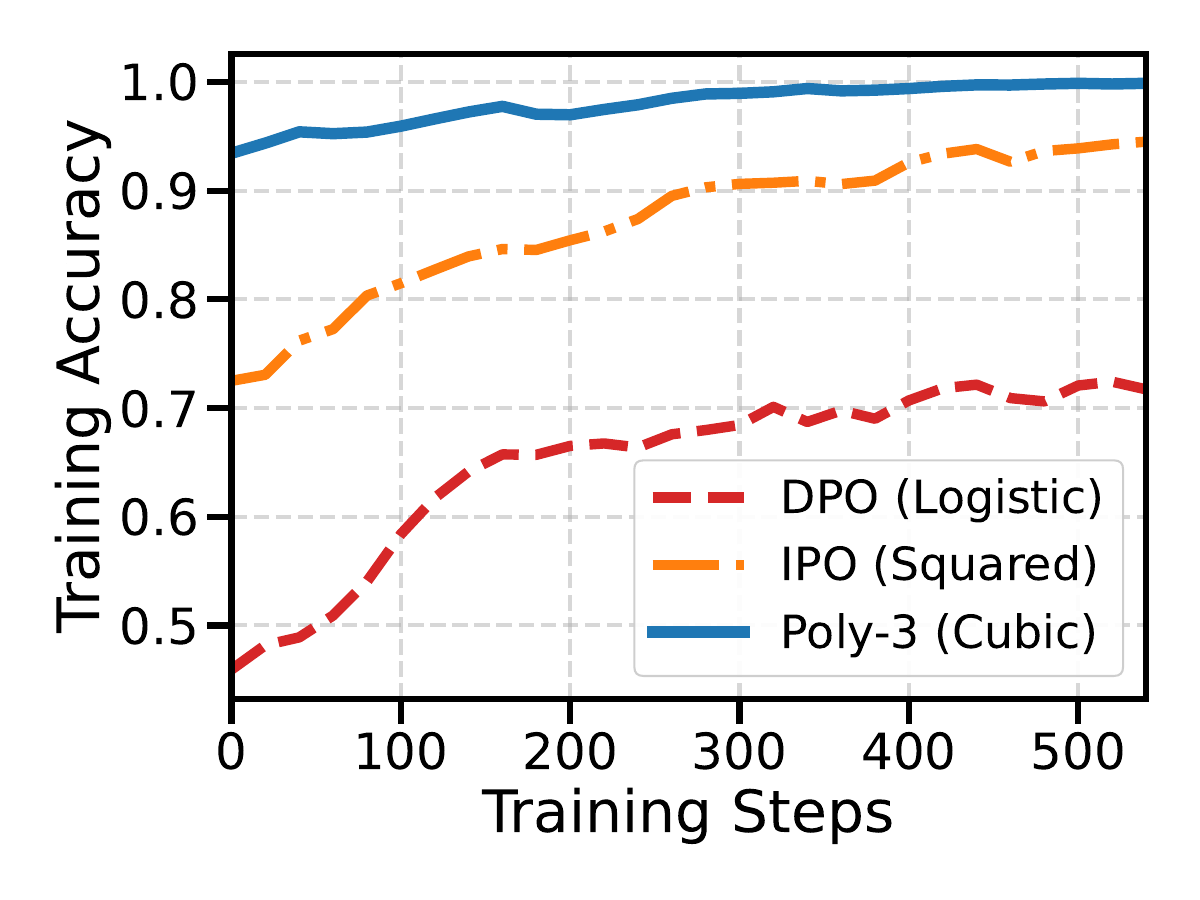}
  \caption{Controlled Validation. (Left) Synonym Stability: Standard DPO
    (dashed) stalls by enforcing margins on identical pairs, while SA-DPO
    (solid) adapts to achieve smooth convergence. (Right) Margin-Capacity
    Profile: DPO (Linear) fails to satisfy the margin. IPO (Quadratic) converges
    slowly, while Poly-3 (Cubic) rapidly achieves near-perfect consistency,
    confirming that heavier tails drive efficient constraint satisfaction.}
  \label{fig:experiments_combined}
\end{figure}

\begin{table}[t]
  \centering
  \caption{UltraFeedback Results. Ranking Accuracy (RA) on test splits (mean
    $\pm$ std.\ dev.\ over 5 runs). SA-DPO gains are most significant on the
    \emph{Hard Ambiguous} subset.}
  \label{tab:ultrafeedback_ra}
  \resizebox{\columnwidth}{!}{
    \begin{tabular}{@{}lccc@{}}
      \toprule
      \textbf{Metric} & \textbf{DPO} & \textbf{SimPO} & \textbf{SA-DPO} \\
      \midrule
      RA (Distinct) & 0.766 $\pm$ 0.005 & 0.780 $\pm$ 0.003 & \textbf{0.790 $\pm$ 0.004} \\
      RA (Ambiguous) & 0.716 $\pm$ 0.005 & 0.722 $\pm$ 0.003 & \textbf{0.734 $\pm$ 0.004} \\
      RA (Hard Subset) & 0.590 $\pm$ 0.015 & 0.650 $\pm$ 0.013 & \textbf{0.700 $\pm$ 0.003}\\
      \bottomrule
    \end{tabular}
  } \vskip -0.2in
\end{table}

\section{Conclusion}

We have shown that the surrogate losses underlying DPO and related alignment
methods are \emph{inconsistent} for the equicontinuous hypothesis sets
characteristic of neural networks: minimizing these losses provides no guarantee
that the true ranking error decreases.  This is a fundamental limitation of the
most widely deployed alignment paradigm.  We resolve it by proving that
enforcing a confidence margin $\gamma$ is both necessary and sufficient for
$\sH$-consistency, and introduce SA-DPO, which adapts this margin to the
semantic distance between responses to avoid instability on near-synonymous
pairs.  Our analysis of the Margin-Capacity Profile further reveals a strict
hierarchy: heavy-tailed losses (IPO, Cubic Hinge) offer provably superior
consistency for capacity-bounded models compared to the logistic loss of DPO.
These theoretical findings are validated empirically: SA-DPO consistently
outperforms both DPO and SimPO on ranking accuracy across datasets and
architectures, and achieves a 58.5\% win-rate in downstream generation quality.
We also unify Bregman-regularized RLHF methods under this framework
(Appendix~\ref{app:extensions}).  Future directions include extending these
bounds to listwise ranking, online exploration, and non-transitive preference
models.

\newpage
\section*{Acknowledgments}

We thank our colleague Jon Schneider for discussions about a preliminary
version of this paper.

\section*{Impact Statement}

This paper presents work whose goal is to advance the field of Machine
Learning. There are many potential societal consequences of our work, none which
we feel must be specifically highlighted here.

\bibliographystyle{icml2026}
\bibliography{rllm}

\begin{thebibliography}{81}
\providecommand{\natexlab}[1]{#1}
\providecommand{\url}[1]{\texttt{#1}}
\expandafter\ifx\csname urlstyle\endcsname\relax
  \providecommand{\doi}[1]{doi: #1}\else
  \providecommand{\doi}{doi: \begingroup \urlstyle{rm}\Url}\fi

\bibitem[Agarwal et~al.(2025)Agarwal, Dann, and Marinov]{agarwaldesign}
Agarwal, A., Dann, C., and Marinov, T.~V.
\newblock Design considerations in offline preference-based {RL}.
\newblock In \emph{International Conference on Machine Learning}, 2025.

\bibitem[Ailon \& Mohri(2010)Ailon and Mohri]{ailon2010preference}
Ailon, N. and Mohri, M.
\newblock Preference-based learning to rank.
\newblock \emph{Machine Learning}, 80\penalty0 (2):\penalty0 189--211, 2010.

\bibitem[{Argilla}(2023)]{argilla2023dpomix}
{Argilla}.
\newblock {DPO-Mix-7k} dataset.
\newblock \url{https://huggingface.co/datasets/argilla/dpo-mix-7k}, 2023.

\bibitem[Awasthi et~al.(2022{\natexlab{a}})Awasthi, Mao, Mohri, and Zhong]{awasthi2022h}
Awasthi, P., Mao, A., Mohri, M., and Zhong, Y.
\newblock {$H$}-consistency bounds for surrogate loss minimizers.
\newblock In \emph{International Conference on Machine Learning}, pp.\  1117--1174, 2022{\natexlab{a}}.

\bibitem[Awasthi et~al.(2022{\natexlab{b}})Awasthi, Mao, Mohri, and Zhong]{awasthi2022multi}
Awasthi, P., Mao, A., Mohri, M., and Zhong, Y.
\newblock Multi-class {$ H $}-consistency bounds.
\newblock In \emph{Advances in Neural Information Processing Systems}, pp.\  782--795, 2022{\natexlab{b}}.

\bibitem[Azar et~al.(2024)Azar, Guo, Piot, Munos, Rowland, Valko, and Calandriello]{azar2023general}
Azar, M.~G., Guo, Z.~D., Piot, B., Munos, R., Rowland, M., Valko, M., and Calandriello, D.
\newblock A general theoretical paradigm to understand learning from human preferences.
\newblock In \emph{International Conference on Artificial Intelligence and Statistics}, pp.\  4447--4455, 2024.

\bibitem[Bai et~al.(2022)Bai, Jones, Ndousse, Askell, Chen, DasSarma, Drain, Fort, Ganguli, Henighan, et~al.]{bai2022training}
Bai, Y., Jones, A., Ndousse, K., Askell, A., Chen, A., DasSarma, N., Drain, D., Fort, S., Ganguli, D., Henighan, T., et~al.
\newblock Training a helpful and harmless assistant with reinforcement learning from human feedback.
\newblock \emph{arXiv preprint arXiv:2204.05862}, 2022.

\bibitem[Bartlett et~al.(2006)Bartlett, Jordan, and McAuliffe]{bartlett2006convexity}
Bartlett, P.~L., Jordan, M.~I., and McAuliffe, J.~D.
\newblock Convexity, classification, and risk bounds.
\newblock \emph{Journal of the American Statistical Association}, 101\penalty0 (473):\penalty0 138--156, 2006.

\bibitem[Calauzenes et~al.(2012)Calauzenes, Usunier, and Gallinari]{calauzenes2012calibration}
Calauzenes, C., Usunier, N., and Gallinari, P.
\newblock On the (non-) existence of convex, calibrated surrogate losses for ranking.
\newblock In \emph{Advances in Neural Information Processing Systems}, 2012.

\bibitem[Chen et~al.(2024)Chen, Malladi, Zhang, Chen, Zhang, Ranganath, and Cho]{chen2024preference}
Chen, A., Malladi, S., Zhang, L.~H., Chen, X., Zhang, Q., Ranganath, R., and Cho, K.
\newblock Preference learning algorithms do not learn preference rankings.
\newblock In \emph{Advances in Neural Information Processing Systems}, pp.\  101928--101968, 2024.

\bibitem[Christiano et~al.(2017)Christiano, Leike, Brown, Martic, Legg, and Amodei]{christiano2017deep}
Christiano, P.~F., Leike, J., Brown, T., Martic, M., Legg, S., and Amodei, D.
\newblock Deep reinforcement learning from human preferences.
\newblock In \emph{Advances in neural information processing systems}, 2017.

\bibitem[Cortes \& Vapnik(1995)Cortes and Vapnik]{cortes1995support}
Cortes, C. and Vapnik, V.
\newblock Support-vector networks.
\newblock \emph{Machine Learning}, 20:\penalty0 273--297, 1995.

\bibitem[Cortes et~al.(2024)Cortes, Mao, Mohri, Mohri, and Zhong]{cortes2024cardinality}
Cortes, C., Mao, A., Mohri, C., Mohri, M., and Zhong, Y.
\newblock Cardinality-aware set prediction and top-$ k $ classification.
\newblock In \emph{Advances in Neural Information Processing Systems}, 2024.

\bibitem[Cortes et~al.(2025{\natexlab{a}})Cortes, Mao, Mohri, and Zhong]{cortes2025balancing}
Cortes, C., Mao, A., Mohri, M., and Zhong, Y.
\newblock Balancing the scales: A theoretical and algorithmic framework for learning from imbalanced data.
\newblock In \emph{International Conference on Machine Learning}, 2025{\natexlab{a}}.

\bibitem[Cortes et~al.(2025{\natexlab{b}})Cortes, Mohri, and Zhong]{cortes2025improved}
Cortes, C., Mohri, M., and Zhong, Y.
\newblock Improved balanced classification with theoretically grounded loss functions.
\newblock In \emph{Advances in Neural Information Processing Systems}, 2025{\natexlab{b}}.

\bibitem[Cortes et~al.(2026{\natexlab{a}})Cortes, Mao, Mohri, and Zhong]{CortesMaoMohriZhong2026defid}
Cortes, C., Mao, A., Mohri, M., and Zhong, Y.
\newblock Optimized deferral for imbalanced settings.
\newblock In \emph{International Conference on Machine Learning}, 2026{\natexlab{a}}.

\bibitem[Cortes et~al.(2026{\natexlab{b}})Cortes, Mohri, and Zhong]{CortesMohriZhong2026mod}
Cortes, C., Mohri, M., and Zhong, Y.
\newblock A theoretical framework for modular learning of robust generative models.
\newblock In \emph{International Conference on Machine Learning}, 2026{\natexlab{b}}.

\bibitem[Cui et~al.(2024)Cui, Yuan, Ding, Yao, He, Zhu, Ni, Xie, Xie, Lin, et~al.]{cui2023ultrafeedback}
Cui, G., Yuan, L., Ding, N., Yao, G., He, B., Zhu, W., Ni, Y., Xie, G., Xie, R., Lin, Y., et~al.
\newblock {UltraFeedback}: Boosting language models with scaled {AI} feedback.
\newblock In \emph{International Conference on Machine Learning}, 2024.

\bibitem[Daniel~Han \& team(2023)Daniel~Han and team]{unsloth2023}
Daniel~Han, M.~H. and team, U.
\newblock Unsloth.
\newblock \url{http://github.com/unslothai/unsloth}, 2023.

\bibitem[DeSalvo et~al.(2025)DeSalvo, Mohri, Mohri, and Zhong]{desalvo2025budgeted}
DeSalvo, G., Mohri, C., Mohri, M., and Zhong, Y.
\newblock Budgeted multiple-expert deferral.
\newblock \emph{arXiv preprint arXiv:2510.26706}, 2025.

\bibitem[Dubey et~al.(2024)Dubey, Jauhri, Pandey, Kadian, Al-Dahle, Letman, Mathur, Schelten, Yang, Fan, et~al.]{dubey2024llama}
Dubey, A., Jauhri, A., Pandey, A., Kadian, A., Al-Dahle, A., Letman, A., Mathur, A., Schelten, A., Yang, A., Fan, A., et~al.
\newblock The llama 3 herd of models.
\newblock \emph{arXiv preprint arXiv:2407.21783}, 2024.

\bibitem[Duchi et~al.(2010)Duchi, Mackey, and Jordan]{duchi2010consistency}
Duchi, J., Mackey, L., and Jordan, M.
\newblock On the consistency of ranking algorithms.
\newblock In \emph{International Conference on Machine Learning}, pp.\  327--334, 2010.

\bibitem[Ethayarajh et~al.(2024)Ethayarajh, Xu, Muennighoff, Jurafsky, and Kiela]{ethayarajh2024kto}
Ethayarajh, K., Xu, W., Muennighoff, N., Jurafsky, D., and Kiela, D.
\newblock {KTO}: Model alignment as prospect theoretic optimization.
\newblock In \emph{International Conference on Machine Learning}, 2024.

\bibitem[G{\"o}lz et~al.(2026)G{\"o}lz, Haghtalab, and Yang]{golz2025distortion}
G{\"o}lz, P., Haghtalab, N., and Yang, K.
\newblock Distortion of ai alignment: Does preference optimization optimize for preferences?
\newblock In \emph{Advances in Neural Information Processing Systems}, pp.\  25969--26007, 2026.

\bibitem[Guo et~al.(2024)Guo, Zhang, Liu, Liu, Khalman, Llinares, Rame, Mesnard, Zhao, Piot, et~al.]{guo2024direct}
Guo, S., Zhang, B., Liu, T., Liu, T., Khalman, M., Llinares, F., Rame, A., Mesnard, T., Zhao, Y., Piot, B., et~al.
\newblock Direct language model alignment from online {AI} feedback.
\newblock \emph{arXiv preprint arXiv:2402.04792}, 2024.

\bibitem[Herbrich et~al.(2000)Herbrich, Graepel, and Obermayer]{herbrich2000large}
Herbrich, R., Graepel, T., and Obermayer, K.
\newblock Large margin rank boundaries for ordinal regression.
\newblock In \emph{Advances in Large Margin Classifiers}, pp.\  115--132. MIT Press, 2000.

\bibitem[Hu et~al.(2022)Hu, Shen, Wallis, Allen-Zhu, Li, Wang, Wang, Chen, et~al.]{hu2021lora}
Hu, E.~J., Shen, Y., Wallis, P., Allen-Zhu, Z., Li, Y., Wang, S., Wang, L., Chen, W., et~al.
\newblock {LoRA}: Low-rank adaptation of large language models.
\newblock In \emph{International Conference on Learning Representations}, 2022.

\bibitem[Jiang et~al.(2023)Jiang, Ren, and Lin]{jiang2023llmblender}
Jiang, D., Ren, X., and Lin, B.~Y.
\newblock {LLM-Blender}: Ensembling large language models with pairwise ranking and generative fusion.
\newblock In \emph{Proceedings of the 61st Annual Meeting of the Association for Computational Linguistics}, pp.\  14165--14178, 2023.

\bibitem[Joachims(2002)]{joachims2002optimizing}
Joachims, T.
\newblock Optimizing search engines using clickthrough data.
\newblock In \emph{Proceedings of the Eighth ACM SIGKDD International Conference on Knowledge Discovery and Data Mining}, pp.\  133--142, 2002.

\bibitem[Mao(2025)]{mao2025theory}
Mao, A.
\newblock \emph{Theory and Algorithms for Learning with Multi-Class Abstention and Multi-Expert Deferral}.
\newblock PhD thesis, New York University, 2025.

\bibitem[Mao et~al.(2023{\natexlab{a}})Mao, Mohri, Mohri, and Zhong]{MaoMohriMohriZhong2023twostage}
Mao, A., Mohri, C., Mohri, M., and Zhong, Y.
\newblock Two-stage learning to defer with multiple experts.
\newblock In \emph{Advances in Neural Information Processing Systems}, 2023{\natexlab{a}}.

\bibitem[Mao et~al.(2023{\natexlab{b}})Mao, Mohri, and Zhong]{MaoMohriZhong2023characterization}
Mao, A., Mohri, M., and Zhong, Y.
\newblock {H}-consistency bounds: Characterization and extensions.
\newblock In \emph{Advances in Neural Information Processing Systems}, 2023{\natexlab{b}}.

\bibitem[Mao et~al.(2023{\natexlab{c}})Mao, Mohri, and Zhong]{MaoMohriZhong2023ranking}
Mao, A., Mohri, M., and Zhong, Y.
\newblock {H}-consistency bounds for pairwise misranking loss surrogates.
\newblock In \emph{International Conference on Machine learning}, 2023{\natexlab{c}}.

\bibitem[Mao et~al.(2023{\natexlab{d}})Mao, Mohri, and Zhong]{MaoMohriZhong2023rankingabs}
Mao, A., Mohri, M., and Zhong, Y.
\newblock Ranking with abstention.
\newblock In \emph{ICML 2023 Workshop The Many Facets of Preference-Based Learning}, 2023{\natexlab{d}}.

\bibitem[Mao et~al.(2023{\natexlab{e}})Mao, Mohri, and Zhong]{mao2023cross}
Mao, A., Mohri, M., and Zhong, Y.
\newblock Cross-entropy loss functions: Theoretical analysis and applications.
\newblock In \emph{International Conference on Machine Learning}, 2023{\natexlab{e}}.

\bibitem[Mao et~al.(2023{\natexlab{f}})Mao, Mohri, and Zhong]{mao2023structured}
Mao, A., Mohri, M., and Zhong, Y.
\newblock Structured prediction with stronger consistency guarantees.
\newblock In \emph{Advances in Neural Information Processing Systems}, pp.\  46903--46937, 2023{\natexlab{f}}.

\bibitem[Mao et~al.(2024{\natexlab{a}})Mao, Mohri, and Zhong]{MaoMohriZhong2024deferral}
Mao, A., Mohri, M., and Zhong, Y.
\newblock Principled approaches for learning to defer with multiple experts.
\newblock In \emph{International Symposium on Artificial Intelligence and Mathematics}, 2024{\natexlab{a}}.

\bibitem[Mao et~al.(2024{\natexlab{b}})Mao, Mohri, and Zhong]{MaoMohriZhong2024predictor}
Mao, A., Mohri, M., and Zhong, Y.
\newblock Predictor-rejector multi-class abstention: Theoretical analysis and algorithms.
\newblock In \emph{International Conference on Algorithmic Learning Theory}, 2024{\natexlab{b}}.

\bibitem[Mao et~al.(2024{\natexlab{c}})Mao, Mohri, and Zhong]{MaoMohriZhong2024score}
Mao, A., Mohri, M., and Zhong, Y.
\newblock Theoretically grounded loss functions and algorithms for score-based multi-class abstention.
\newblock In \emph{International Conference on Artificial Intelligence and Statistics}, 2024{\natexlab{c}}.

\bibitem[Mao et~al.(2024{\natexlab{d}})Mao, Mohri, and Zhong]{mao2024h}
Mao, A., Mohri, M., and Zhong, Y.
\newblock {$ H $}-consistency guarantees for regression.
\newblock In \emph{International Conference on Machine Learning}, pp.\  34712--34737, 2024{\natexlab{d}}.

\bibitem[Mao et~al.(2024{\natexlab{e}})Mao, Mohri, and Zhong]{mao2024multi}
Mao, A., Mohri, M., and Zhong, Y.
\newblock Multi-label learning with stronger consistency guarantees.
\newblock In \emph{Advances in Neural Information Processing Systems}, 2024{\natexlab{e}}.

\bibitem[Mao et~al.(2024{\natexlab{f}})Mao, Mohri, and Zhong]{mao2024realizable}
Mao, A., Mohri, M., and Zhong, Y.
\newblock Realizable {$ H $}-consistent and {B}ayes-consistent loss functions for learning to defer.
\newblock In \emph{Advances in Neural Information Processing Systems}, 2024{\natexlab{f}}.

\bibitem[Mao et~al.(2024{\natexlab{g}})Mao, Mohri, and Zhong]{mao2024regression}
Mao, A., Mohri, M., and Zhong, Y.
\newblock Regression with multi-expert deferral.
\newblock In \emph{International Conference on Machine Learning}, pp.\  34738--34759, 2024{\natexlab{g}}.

\bibitem[Mao et~al.(2024{\natexlab{h}})Mao, Mohri, and Zhong]{mao2024universal}
Mao, A., Mohri, M., and Zhong, Y.
\newblock A universal growth rate for learning with smooth surrogate losses.
\newblock In \emph{Advances in Neural Information Processing Systems}, 2024{\natexlab{h}}.

\bibitem[Mao et~al.(2025{\natexlab{a}})Mao, Mohri, and Zhong]{MaoMohriZhong2025mastering}
Mao, A., Mohri, M., and Zhong, Y.
\newblock Mastering multiple-expert routing: Realizable {$H$}-consistency and strong guarantees for learning to defer.
\newblock In \emph{International Conference on Machine Learning}, 2025{\natexlab{a}}.

\bibitem[Mao et~al.(2025{\natexlab{b}})Mao, Mohri, and Zhong]{MaoMohriZhong2025principled}
Mao, A., Mohri, M., and Zhong, Y.
\newblock Principled algorithms for optimizing generalized metrics in binary classification.
\newblock In \emph{International Conference on Machine Learning}, 2025{\natexlab{b}}.

\bibitem[Mao et~al.(2025{\natexlab{c}})Mao, Mohri, and Zhong]{mao2025enhanced}
Mao, A., Mohri, M., and Zhong, Y.
\newblock Enhanced {$\sH $}-consistency bounds.
\newblock In \emph{International Conference on Algorithmic Learning Theory}, 2025{\natexlab{c}}.

\bibitem[Martins \& Astudillo(2016)Martins and Astudillo]{martins2016sparsemax}
Martins, A. and Astudillo, R.
\newblock From softmax to sparsemax: A sparse model of attention and multi-label classification.
\newblock In \emph{International Conference on Machine Learning}, pp.\  1614--1623, 2016.

\bibitem[Meng et~al.(2024)Meng, Xia, and Chen]{meng2024simpo}
Meng, Y., Xia, M., and Chen, D.
\newblock {SimPO}: Simple preference optimization with a reference-free reward.
\newblock In \emph{Advances in Neural Information Processing Systems}, pp.\  124198--124235, 2024.

\bibitem[Mohri et~al.(2024)Mohri, Andor, Choi, Collins, Mao, and Zhong]{MohriAndorChoiCollinsMaoZhong2024learning}
Mohri, C., Andor, D., Choi, E., Collins, M., Mao, A., and Zhong, Y.
\newblock Learning to reject with a fixed predictor: Application to decontextualization.
\newblock In \emph{International Conference on Learning Representations}, 2024.

\bibitem[Mohri \& Zhong(2026{\natexlab{a}})Mohri and Zhong]{MohriZhong2026slin}
Mohri, M. and Zhong, Y.
\newblock Linear-core surrogates: Smooth loss functions with linear rates for classification and structured prediction.
\newblock In \emph{International Conference on Machine Learning}, 2026{\natexlab{a}}.

\bibitem[Mohri \& Zhong(2026{\natexlab{b}})Mohri and Zhong]{mohri2025beyond}
Mohri, M. and Zhong, Y.
\newblock Beyond {T}sybakov: Model margin noise and {H}-consistency bounds.
\newblock In \emph{International Symposium on Artificial Intelligence and Mathematics}, 2026{\natexlab{b}}.

\bibitem[Mohri \& Zhong(2026{\natexlab{c}})Mohri and Zhong]{mohri2026principled}
Mohri, M. and Zhong, Y.
\newblock Principled algorithms for optimizing generalized metrics in multi-label learning.
\newblock \emph{arXiv preprint arXiv:2605.28767}, 2026{\natexlab{c}}.

\bibitem[Mohri et~al.(2026)Mohri, Schneider, and Zhong]{mohri2026generalized}
Mohri, M., Schneider, J., and Zhong, Y.
\newblock Generalized distributional alignment games for unbiased answer-level fine-tuning.
\newblock \emph{arXiv preprint arXiv:2605.02435}, 2026.

\bibitem[Montreuil et~al.(2025{\natexlab{a}})Montreuil, Carlier, Ng, and Ooi]{montreuil2025adversarial}
Montreuil, Y., Carlier, A., Ng, L.~X., and Ooi, W.~T.
\newblock Adversarial robustness in two-stage learning-to-defer: Algorithms and guarantees.
\newblock In \emph{International Conference on Machine Learning}, 2025{\natexlab{a}}.

\bibitem[Montreuil et~al.(2025{\natexlab{b}})Montreuil, Yeo, Carlier, Ng, and Ooi]{montreuil2025two}
Montreuil, Y., Yeo, S.~H., Carlier, A., Ng, L.~X., and Ooi, W.~T.
\newblock A two-stage learning-to-defer approach for multi-task learning.
\newblock In \emph{International Conference on Machine Learning}, 2025{\natexlab{b}}.

\bibitem[Montreuil et~al.(2026{\natexlab{a}})Montreuil, Carlier, Ng, and Ooi]{montreuil2026beyond}
Montreuil, Y., Carlier, A., Ng, L.~X., and Ooi, W.~T.
\newblock Beyond augmented-action surrogates for multi-expert learning-to-defer.
\newblock \emph{arXiv preprint arXiv:2604.09414}, 2026{\natexlab{a}}.

\bibitem[Montreuil et~al.(2026{\natexlab{b}})Montreuil, Carlier, Ng, and Ooi]{montreuil2026why}
Montreuil, Y., Carlier, A., Ng, L.~X., and Ooi, W.~T.
\newblock Why ask one when you can ask $ k $? learning-to-defer to the top-$ k $ experts.
\newblock In \emph{International Conference on Learning Representations}, 2026{\natexlab{b}}.

\bibitem[Montreuil et~al.(2026{\natexlab{c}})Montreuil, Dang, Meyer, Ng, Carlier, and Ooi]{montreuil2026online}
Montreuil, Y., Dang, H.~D., Meyer, M., Ng, L.~X., Carlier, A., and Ooi, W.~T.
\newblock Online learning-to-defer with varying experts.
\newblock In \emph{International Conference on Artificial Intelligence and Statistics}, 2026{\natexlab{c}}.

\bibitem[Montreuil et~al.(2026{\natexlab{d}})Montreuil, Heng, Carlier, Ng, and Ooi]{montreuil2026optimal}
Montreuil, Y., Heng, Y.~S., Carlier, A., Ng, L.~X., and Ooi, W.~T.
\newblock Optimal query allocation in extractive {QA} with {LLM}s: A learning-to-defer framework with theoretical guarantees.
\newblock In \emph{International Conference on Artificial Intelligence and Statistics}, 2026{\natexlab{d}}.

\bibitem[Montreuil et~al.(2026{\natexlab{e}})Montreuil, Letian, Carlier, Ng, and Ooi]{montreuil2026adversarial}
Montreuil, Y., Letian, Y., Carlier, A., Ng, L.~X., and Ooi, W.~T.
\newblock Adversarial robustness in one-stage learning-to-defer.
\newblock In \emph{International Conference on Artificial Intelligence and Statistics}, 2026{\natexlab{e}}.

\bibitem[Montreuil et~al.(2026{\natexlab{f}})Montreuil, Montreuil, Carlier, Ng, and Ooi]{montreuil2026learning}
Montreuil, Y., Montreuil, L., Carlier, A., Ng, L.~X., and Ooi, W.~T.
\newblock Learning-to-defer with expert-conditioned advice.
\newblock \emph{arXiv preprint arXiv:2603.14324}, 2026{\natexlab{f}}.

\bibitem[Montreuil et~al.(2026{\natexlab{g}})Montreuil, Yu, Carlier, Ng, and Ooi]{montreuil2026time}
Montreuil, Y., Yu, L., Carlier, A., Ng, L.~X., and Ooi, W.~T.
\newblock Learning to defer in non-stationary time series via switching state-space models.
\newblock \emph{arXiv preprint arXiv:2601.22538}, 2026{\natexlab{g}}.

\bibitem[Munos et~al.(2024)Munos, Valko, Calandriello, Azar, Rowland, Guo, Tang, Geist, Mesnard, Fiegel, et~al.]{munos2023nash}
Munos, R., Valko, M., Calandriello, D., Azar, M.~G., Rowland, M., Guo, Z.~D., Tang, Y., Geist, M., Mesnard, T., Fiegel, C., et~al.
\newblock Nash learning from human feedback.
\newblock In \emph{International Conference on Machine Learning}, 2024.

\bibitem[Rafailov et~al.(2023)Rafailov, Sharma, Mitchell, Manning, Ermon, and Finn]{rafailov2023direct}
Rafailov, R., Sharma, A., Mitchell, E., Manning, C.~D., Ermon, S., and Finn, C.
\newblock Direct preference optimization: Your language model is secretly a reward model.
\newblock In \emph{Advances in Neural Information Processing Systems}, pp.\  53728--53741, 2023.

\bibitem[Schulman et~al.(2017)Schulman, Wolski, Dhariwal, Radford, and Klimov]{schulman2017proximal}
Schulman, J., Wolski, F., Dhariwal, P., Radford, A., and Klimov, O.
\newblock Proximal policy optimization algorithms.
\newblock In \emph{arXiv preprint arXiv:1707.06347}, 2017.

\bibitem[Soudry et~al.(2018)Soudry, Hoffer, Nacson, Gunasekar, and Srebro]{soudry2018implicit}
Soudry, D., Hoffer, E., Nacson, M.~S., Gunasekar, S., and Srebro, N.
\newblock The implicit bias of gradient descent on separable data.
\newblock \emph{Journal of Machine Learning Research}, 19\penalty0 (70):\penalty0 1--57, 2018.

\bibitem[Steinwart(2007)]{steinwart2007compare}
Steinwart, I.
\newblock How to compare different loss functions and their risks.
\newblock \emph{Constructive Approximation}, 26\penalty0 (2):\penalty0 225--287, 2007.

\bibitem[Stiennon et~al.(2020)Stiennon, Ouyang, Wu, Ziegler, Lowe, Voss, Radford, Amodei, and Christiano]{stiennon2020learning}
Stiennon, N., Ouyang, L., Wu, J., Ziegler, D., Lowe, R., Voss, C., Radford, A., Amodei, D., and Christiano, P.~F.
\newblock Learning to summarize with human feedback.
\newblock In \emph{Advances in Neural Information Processing Systems}, pp.\  3008--3021, 2020.

\bibitem[Tsochantaridis et~al.(2005)Tsochantaridis, Joachims, Hofmann, Altun, and Singer]{tsochantaridis2005large}
Tsochantaridis, I., Joachims, T., Hofmann, T., Altun, Y., and Singer, Y.
\newblock Large margin methods for structured and interdependent output variables.
\newblock \emph{Journal of Machine Learning Research}, 6:\penalty0 1453--1484, 2005.

\bibitem[von Werra et~al.(2020)von Werra, Belkada, Tunstall, Beeching, Thrush, Lambert, Huang, Rasul, and Gallouédec]{vonwerra2022trl}
von Werra, L., Belkada, Y., Tunstall, L., Beeching, E., Thrush, T., Lambert, N., Huang, S., Rasul, K., and Gallouédec, Q.
\newblock Trl: Transformer reinforcement learning.
\newblock \url{https://github.com/huggingface/trl}, 2020.

\bibitem[Xiao et~al.(2023)Xiao, Liu, Zhang, Muennighoff, Lian, and yun Nie]{xiao2024c}
Xiao, S., Liu, Z., Zhang, P., Muennighoff, N., Lian, D., and yun Nie, J.
\newblock C-pack: Packed resources for general chinese embeddings.
\newblock \emph{Proceedings of the 47th International ACM SIGIR Conference on Research and Development in Information Retrieval}, 2023.

\bibitem[Xiong et~al.(2024)Xiong, Dong, Ye, Wang, Zhong, Ji, Jiang, and Zhang]{xiong2024iterative}
Xiong, W., Dong, H., Ye, C., Wang, Z., Zhong, H., Ji, H., Jiang, N., and Zhang, T.
\newblock Iterative preference learning from human feedback: Bridging theory and practice for {RLHF} under {KL}-constraint.
\newblock In \emph{International Conference on Machine Learning}, pp.\  54715--54754, 2024.

\bibitem[Yang et~al.(2024)Yang, Yang, Zhang, Hui, Zheng, Yu, Li, Liu, Huang, Wei, Lin, Yang, Tu, Zhang, Yang, Yang, Zhou, Lin, Dang, Lu, Bao, Yang, Yu, Li, Xue, Zhang, Zhu, Men, Lin, Li, Xia, Ren, Ren, Fan, Su, Zhang, Wan, Liu, Cui, Zhang, and Qiu]{qwen2.5}
Yang, A., Yang, B., Zhang, B., Hui, B., Zheng, B., Yu, B., Li, C., Liu, D., Huang, F., Wei, H., Lin, H., Yang, J., Tu, J., Zhang, J., Yang, J., Yang, J., Zhou, J., Lin, J., Dang, K., Lu, K., Bao, K., Yang, K., Yu, L., Li, M., Xue, M., Zhang, P., Zhu, Q., Men, R., Lin, R., Li, T., Xia, T., Ren, X., Ren, X., Fan, Y., Su, Y., Zhang, Y., Wan, Y., Liu, Y., Cui, Z., Zhang, Z., and Qiu, Z.
\newblock {Qwen2.5} technical report.
\newblock \emph{arXiv preprint arXiv:2412.15115}, 2024.

\bibitem[Yuan et~al.(2023)Yuan, Yuan, Tan, Wang, Huang, and Huang]{yuan2023rrhf}
Yuan, H., Yuan, Z., Tan, C., Wang, W., Huang, S., and Huang, F.
\newblock {RRHF}: Rank responses to align language models with human feedback.
\newblock In \emph{Advances in Neural Information Processing Systems}, 2023.

\bibitem[Zadrozny et~al.(2003)Zadrozny, Langford, and Abe]{zadrozny2003cost}
Zadrozny, B., Langford, J., and Abe, N.
\newblock Cost-sensitive learning by cost-proportionate weighting of examples.
\newblock In \emph{Third IEEE International Conference on Data Mining}, pp.\  435--442, 2003.

\bibitem[Zhang(2004)]{Zhang2003}
Zhang, T.
\newblock Statistical behavior and consistency of classification methods based on convex risk minimization.
\newblock \emph{The Annals of Statistics}, 32\penalty0 (1):\penalty0 56--85, 2004.

\bibitem[Zhao et~al.(2023)Zhao, Joshi, Liu, Khalman, Saleh, and Liu]{zhao2023slic}
Zhao, Y., Joshi, R., Liu, T., Khalman, M., Saleh, M., and Liu, P.~J.
\newblock {SLiC-HF}: Sequence likelihood calibration with human feedback.
\newblock In \emph{arXiv preprint arXiv:2305.10425}, 2023.

\bibitem[Zheng et~al.(2023)Zheng, Chiang, Sheng, Zhuang, Wu, Zhuang, Lin, Li, Li, Xing, et~al.]{zheng2023judging}
Zheng, L., Chiang, W.-L., Sheng, Y., Zhuang, S., Wu, Z., Zhuang, Y., Lin, Z., Li, Z., Li, D., Xing, E., et~al.
\newblock Judging {LLM}-as-a-judge with {MT-Bench} and {Chatbot Arena}.
\newblock In \emph{Advances in Neural Information Processing Systems}, pp.\  46595--46623, 2023.

\bibitem[Zhong(2025)]{zhong2025fundamental}
Zhong, Y.
\newblock \emph{Fundamental Novel Consistency Theory: {H}-Consistency Bounds}.
\newblock PhD thesis, New York University, 2025.

\bibitem[Zhou et~al.(2023)Zhou, Liu, Shao, Yue, Yang, Ouyang, and Qiao]{wang2023beyond}
Zhou, Z., Liu, J., Shao, J., Yue, X., Yang, C., Ouyang, W., and Qiao, Y.
\newblock Beyond one-preference-fits-all alignment: Multi-objective direct preference optimization.
\newblock In \emph{Annual Meeting of the Association for Computational Linguistics}, 2023.

\end{thebibliography}

\newpage
\appendix
\onecolumn

\renewcommand{\contentsname}{Contents of Appendix}
\tableofcontents
\addtocontents{toc}{\protect\setcounter{tocdepth}{3}}
\clearpage

\section{Extended Related Work}
\label{app:related-work}

Our work intersects with three distinct bodies of literature: direct preference
alignment, learning theory for ranking, and structured prediction.

\paragraph{Direct Alignment from Preferences.}
The paradigm of aligning LLMs via preference data was popularized by RLHF
\citep{christiano2017deep, stiennon2020learning}, which involves learning an
explicit reward model. Recently, implicit approaches have gained
prominence. \citet{rafailov2023direct} introduced Direct Preference Optimization
(DPO), deriving the policy update directly from the optimal reward
condition. Subsequent works have proposed variations to address DPO's
limitations: \citet{azar2023general} introduced IPO to prevent over-fitting by
regularizing the reward gap, which we analyze as a margin-shifted squared loss.
\citet{zhao2023slic} proposed SLiC-HF, effectively using a hinge loss on the
probability ratios, which aligns with our analysis of linear-tail margins. More
recently, \citet{meng2024simpo} empirically demonstrated the benefits of
incorporating a hard margin term into the DPO objective (SimPO) and removing the
reference model. Our theory provides a rigorous justification for such
modifications, proving them necessary for consistency. Other non-pairwise
approaches like KTO \citep{ethayarajh2024kto} model utility directly, offering
an alternative to the ranking formulation we study here.
While recent works have extended preference optimization to the online setting
to address exploration and distribution shift
\citep{xiong2024iterative,guo2024direct}, our analysis focuses on the
consistency of the optimization objective itself. Since online methods typically
optimize a surrogate loss in their inner loop, our findings on
margin-consistency remain relevant in iterative settings.

\paragraph{Consistency of Ranking.}
The consistency of surrogate losses is a classical problem in statistical
learning theory. \citet{bartlett2006convexity} provided the foundational
characterization of convex surrogates consistent with the 0-1 binary
classification loss. For the ranking setting, \citet{duchi2010consistency} and
\citet{calauzenes2012calibration} analyzed consistency, highlighting the
difficulty of achieving it without strong domain assumptions. However, these
classical results typically assume the hypothesis space is the set of all
measurable functions ($\sH_{\rm{all}}$). In deep learning tasks, $\sH$ is
restricted (e.g., to a family of neural networks), rendering these universal
guarantees inapplicable. Our analysis builds upon the framework of
\emph{$\sH$-consistency} developed by \citet*{awasthi2022h,awasthi2022multi} and
recently extended by \citet{mao2023cross,
  MaoMohriZhong2023ranking,MaoMohriMohriZhong2023twostage,
  MaoMohriZhong2023characterization,MaoMohriZhong2023rankingabs,
  MaoMohriZhong2024deferral,MaoMohriZhong2024predictor,
  MaoMohriZhong2024score,mao2024h,mao2024multi,mao2024realizable,
  mao2024regression,mao2025enhanced,MaoMohriZhong2025mastering,
  MaoMohriZhong2025principled,MohriAndorChoiCollinsMaoZhong2024learning,cortes2024cardinality,cortes2025balancing,cortes2025improved,CortesMaoMohriZhong2026defid,CortesMohriZhong2026mod,mao2025theory,zhong2025fundamental,desalvo2025budgeted,montreuil2025two,montreuil2025adversarial,montreuil2026adversarial,montreuil2026why,montreuil2026optimal,montreuil2026beyond,montreuil2026online,montreuil2026learning,montreuil2026time,mohri2026principled,mohri2025beyond,MohriZhong2026slin,mohri2026generalized}. We show that standard $\sH$-consistency
fails for equicontinuous hypothesis sets typical of neural networks. Instead, we
propose the notion of \emph{$\gamma$-shifted $\sH$-consistency} and prove that
this guarantee holds for the margin-shifted versions of DPO and IPO used in the
LLM setting.

\paragraph{Margins and Structured Prediction.}
The concept of enforcing margins for consistency draws on the rich history of
Support Vector Machines (SVMs) \citep{cortes1995support} and their extension to
ranking \citep{herbrich2000large, joachims2002optimizing}. Our proposal for
\emph{Structure-Aware} margins is inspired by Structured SVMs
\citep{tsochantaridis2005large} and structured prediction
\citep*{mao2023structured}, as well as cost-sensitive classification
\citep{zadrozny2003cost}, which scale penalties based on the severity of the
error. In the context of LLMs, this connects to \emph{margin-aware} fine-tuning
methods that have shown empirical promise \citep{yuan2023rrhf, wang2023beyond}
but lacked a unifying theoretical consistency proof until now.

\paragraph{Social Choice and Game-Theoretic Perspectives.}
Recent theoretical works have also critically examined DPO from perspectives
beyond statistical consistency. \citet{golz2025distortion} use social choice
theory to show that DPO suffers from high distortion in aggregating diverse
human preferences, effectively acting as a Borda count. While our work focuses
on statistical consistency under a ground-truth assumption, our structure-aware
framework offers a potential mechanism to mitigate social distortion by
assigning lower margins to controversial pairs. Similarly, \citet{munos2023nash}
challenge the scalar reward assumption of the Bradley-Terry model, proposing
Nash Learning from Human Feedback (NLHF) to handle non-transitive preferences
via game-theoretic equilibria. Our work complements these approaches by focusing
on the optimization landscape: we address the inconsistency of the learning
objective itself, rather than the aggregation rule or the underlying preference
model.

\section{Proof of Theorem~\ref{Thm:negative-general}}
\label{app:negative-general}

\Negative*
\begin{proof}
  Consider a distribution $\sD$ that is concentrated entirely on a single tuple
  $(x, y, y', w)$ with deterministic label $w = -1$ (implying $y' \succ y$). The
  true optimal error is
  $\sR^*(\sH) = \inf_{h \in \sH} 1_{w \neq \sign(h(x, y) - h(x, y'))}$. Since
  $\sH$ is regular, there exists $h \in \sH$ such that
  $-1 = \sign(h(x, y) - h(x, y'))$, implying $\sR^*(\sH) = 0$. Let $h_0$ be the
  zero function ($h_0(x, y) = 0$). Then $\sR(h_0) = 1_{-1 \neq 1} = 1$. By the
  \emph{equicontinuity} of $\sH$, for any $\e > 0$, we can choose the support
  elements $x, y \neq y'$ such that $\abs*{h(x, y) - h(x, y')} < \e$ for all
  $h \in \sH$.

  The surrogate error for any $h \in \sH$ is
  $\sR_{\Phi}(h) = \Phi(-1 \cdot (h(x, y) - h(x, y')))$. Then
  $\sR_{\Phi}(h_0) = \Phi(0)$. Since the function $\Phi$ is non-increasing, the
  range of possible surrogate errors is bounded:
  \[
    \sR_{\Phi}(h) \in \bracket*{\Phi(\e), \Phi(-\e)}
  \]
  Therefore, the optimal surrogate error is lower-bounded by
  $\sR_{\Phi}^*(\sH) \geq \Phi(\e)$. Applying the assumed $\sH$-consistency
  bound to the zero function $h_0$, we get:
  \[
    \sR(h_0) - \sR^*(\sH) + \sM(\sH) \leq \Gamma \paren*{\sR_{\Phi}(h_0) -
      \sR_{\Phi}^*(\sH) + \sM_{\Phi}(\sH)},
  \]
  where $\sM(\sH) = \sM_{\Phi}(\sH) = 0$ for this distribution concentrated on a
  single tuple.  Substituting the known values and lower bound:
  \[
    1 - 0 \leq \Gamma \paren*{\Phi(0) - \sR_{\Phi}^*(\sH)} \leq \Gamma
    \paren*{\Phi(0) - \Phi(\e)}
  \]
  Since $\Phi$ is continuous, taking the limit as $\e \to 0$ yields
  $\Phi(\e) \to \Phi(0)$. Given that $\Gamma$ is continuous at $0$ and
  non-decreasing, we conclude that:
  \[
    \Gamma(0) \geq 1
  \]
  By the non-decreasing nature of $\Gamma$, this implies $\Gamma(t) \geq 1$ for
  all $t \geq 0$.
\end{proof}

\section{Proof of Theorem~\ref{Thm:positive-general}}
\label{app:positive-general}

\Positive*
\begin{proof}
  Fix a tuple $(x, y, y')$ and let $\eta = \eta(x, y, y')$. The conditional
  error for the target loss is
  $\sC(h) = \eta 1_{\Delta h < 0} + (1 - \eta) 1_{\Delta h \geq 0}$. Since $\sH$
  is regular, the best-in-class conditional error is
  $\sC^*(\sH) = \min \curl*{\eta, 1 - \eta}$.
    
  Consider the set of hypotheses $\ov \sH(x, y, y')$ that misclassify the pair
  relative to the Bayes optimal decision. For any $h \in \ov \sH(x, y, y')$, the
  conditional regret is:
  \[
    \Delta \sC_{\sH}(h) = \sC(h) - \sC^*(\sH) = \abs*{2 \eta - 1}.
  \]
  For the surrogate loss, the conditional error is
  $\sC_{\Phi}(h) = \eta \Phi(\Delta h) + (1 - \eta) \Phi(-\Delta h)$. By the
  margin assumption, $\sH$ contains hypotheses $h_+, h_-$ achieving score
  differences $\Delta h = \pm \gamma$. The best-in-class surrogate conditional
  error is upper bounded by the minimum error among these:
  \begin{align*}
    \sC^*_{\Phi}(\sH) 
    & \leq \min\curl*{\sC_{\Phi}(h_{+}), \sC_{\Phi}(h_{-})}\\
    & = \max \curl*{\eta, 1 - \eta} \Phi (\gamma)
      + \min \curl*{\eta, 1 - \eta} \Phi (-\gamma).
  \end{align*}
  Now, consider a misclassifying hypothesis $h \in \ov \sH(x, y, y')$. Without
  loss of generality, assume $\eta \geq 1/2$. For $h$ to misclassify, we must
  have $\Delta h < 0$. Furthermore, the $\gamma$-margin condition on $\sH$
  implies that the magnitude of this violation is at least $\gamma$, i.e.,
  $\Delta h \leq -\gamma$.
    
  Let $g(u) \coloneqq \eta \Phi(u) + (1 - \eta) \Phi(-u)$ be the surrogate error
  as a function of the score difference $u = \Delta h$. We show that $g(u)$ is
  non-increasing for $u < 0$. Consider scores $u_1 < u_2 \leq -\gamma < 0$. Let
  $S_1, S_2$ be the secant slopes of $\Phi$ on $[u_1, u_2]$ and $[-u_2, -u_1]$
  respectively. Since $\Phi$ is convex and non-increasing, we have
  $S_1 \leq S_2 \leq 0$. The change in error is:
  \begin{align*}
    g(u_1) - g(u_2) 
    &= \eta [\Phi(u_1) - \Phi(u_2)] + (1 - \eta) [\Phi(-u_1) - \Phi(-u_2)] \\
    &= (u_2 - u_1) \bracket*{ (1 - \eta)S_2 - \eta S_1 } \geq 0,
  \end{align*}
  where the inequality holds because $\eta \geq 1 - \eta$ and
  $-S_1 \geq -S_2 \geq 0$. Thus, $g(u)$ is minimized at the boundary
  $u = -\gamma$:
  \[
    \sC_{\Phi}(h) \geq g(-\gamma) = \eta \Phi(-\gamma) + (1 - \eta)
    \Phi(\gamma).
  \]
  Subtracting the upper bound for $\sC^*_{\Phi}(\sH)$ from this lower bound
  yields the surrogate conditional regret:
  \begin{align*}
    \Delta \sC_{\Phi, \sH}(h)
    &= \sC_{\Phi}(h) - \sC^*_{\Phi}(\sH) \\
    & \geq \bracket*{\eta \Phi(-\gamma) + (1 - \eta) \Phi(\gamma)}
      - \bracket*{\eta \Phi(\gamma) + (1 - \eta) \Phi(-\gamma)} \\
    & = (2 \eta - 1) \bracket*{\Phi(-\gamma) - \Phi(\gamma)} \\
    & = \paren*{\Phi(-\gamma) - \Phi(\gamma)} \Delta \sC_{\sH}(h).
  \end{align*} 
  Taking the expectation over $\sD$ yields the stated bound.
\end{proof}

\section{Proof of Theorem~\ref{th:gamma-shifted-H-consistency}}
\label{app:gamma-shifted-H-consistency}

\ShiftedH*
\begin{proof}
  Fix $h \in \sH$. For any tuple $(x, y, y', w)$, let
  $\Delta h(x) \coloneqq w (h(x, y) - h(x, y'))$. The 0-1 pairwise ranking loss
  is upper bounded by $1_{\Delta h(x) \leq 0}$. Using the definition
  $\Phi_\gamma(u) = \Phi(u-\gamma)$ and the monotonicity of $\Phi$:
  \[
    u \leq 0 \implies u - \gamma \leq -\gamma \implies \Phi_\gamma(u) = \Phi(u -
    \gamma) \geq \Phi(-\gamma) > 0.
  \]
  Thus, we have the pointwise inequality
  $1_{u\leq 0} \leq \frac{\Phi_\gamma(u)}{\Phi(-\gamma)}$. Applying this with
  $u = \Delta h(x)$:
  \begin{equation*}
    \sfL_{0-1}(h, x, y, y', w) \leq \frac{\sfL_{\Phi_\gamma}(h, x, y, y', w)}{\Phi(-\gamma)}.
  \end{equation*}
  Taking expectations over $w$ yields
  $\sC(h) \leq \frac{\sC_{\sfL_{\Phi_\gamma}}(h)}{\Phi(-\gamma)}$ (where we
  suppress the arguments $x, y, y'$ for brevity). Since this holds for all
  $h \in \sH$, it holds for the infimum:
  \[
    \sC^*(\sH) \leq \inf_{h \in \sH}
    \frac{\sC_{\sfL_{\Phi_\gamma}}(h)}{\Phi(-\gamma)} =
    \frac{\sC^*_{\sfL_{\Phi_\gamma}}(\sH)}{\Phi(-\gamma)}.
  \]
  This implies the approximation gap is non-negative:
  $\sA_{\gamma}(\sH) \geq 0$. We now decompose the conditional regret:
  \begin{align*}
    \Delta \sC_{\sH}(h)
    &= \sC(h) - \sC^*(\sH) \\
    &\leq \frac{\sC_{\sfL_{\Phi_\gamma}}(h)}{\Phi(-\gamma)} - \sC^*(\sH) \\
    &= \frac{1}{\Phi(-\gamma)}\bracket*{\sC_{\sfL_{\Phi_\gamma}}(h) - \sC^*_{\sfL_{\Phi_\gamma}}(\sH)} + \bracket*{\frac{\sC^*_{\sfL_{\Phi_\gamma}}(\sH)}{\Phi(-\gamma)} - \sC^*(\sH)} \\
    &= \frac{1}{\Phi(-\gamma)} \Delta \sC_{\sfL_{\Phi_\gamma}, \sH}(h) + \bracket*{\frac{\sC^*_{\sfL_{\Phi_\gamma}}(\sH)}{\Phi(-\gamma)} - \sC^*(\sH)}.
  \end{align*}
  Taking the expectation over $(x, y, y')$ yields the final result:
  \[
    \sR(h) - \sR^*(\sH) + \sM(\sH) \leq
    \frac{1}{\Phi(-\gamma)}\bracket*{\sR_{\Phi_\gamma}(h) -
      \sR_{\Phi_\gamma}^*(\sH) + \sM_{\Phi_\gamma}(\sH)} + \sA_{\gamma}(\sH).
  \]
\end{proof}

\section{Consistency on Finite Domains}

While Theorem~\ref{Thm:negative-general} establishes inconsistency for general
equicontinuous hypothesis sets, the specific domain of Large Language Models
offers a structural advantage: the input space consists of sequences over a
finite vocabulary. In practice, the support of the preference distribution $\sD$
is restricted to a finite set $S \subset \sX \times \sY \times \sY$.

We can show that under the assumption of finite support and model realizability,
consistency is guaranteed because the hypothesis set of over-parameterized
neural networks can achieve an arbitrarily large separation margin.

\begin{definition}[Finite Realizable Support]
  The support $S = \supp(\sD)$ is finite. Furthermore, the distribution is
  realizable on $S$: for any $(x, y, y') \in S$ with $y \neq y'$, the preference
  label $w$ is deterministic ($|\eta(x, y, y') - 1/2| = 1/2$) and
  non-contradictory.
\end{definition}

\begin{definition}[Strict Separability via Logit Scaling]
  We assume the hypothesis set $\sH$ is parameterized by logits
  $z_\theta(x, \cdot)$ such that the policy is
  $\pi_\theta(y|x) \propto \exp(z_\theta(x, y))$. We say $\sH$ is \emph{strictly
    separable} on $S$ if there exists a parameter $\theta$ such that for all
  $(x, y, y', w) \in S$:
  \[
    z_\theta(x, y_{\text{win}}) > z_\theta(x, y_{\text{lose}}),
  \]
  where $y_{\text{win}}$ is the preferred response. Furthermore, we assume $\sH$
  is closed under positive scalar multiplication of the logits ($z \to \alpha z$
  for $\alpha > 0$).
\end{definition}

\begin{theorem}[Consistency via Margin Scaling]
  \label{Thm:finite-consistency}
  Under the assumptions of Finite Realizable Support and Strict Separability,
  minimizing the DPO logistic surrogate loss is consistent with respect to the
  0-1 ranking loss. Specifically, for any error tolerance $\epsilon > 0$, there
  exists a hypothesis $h \in \sH$ (achieved by scaling logits) such that the
  $\sH$-consistency bound holds with a coefficient sufficiently small to
  guarantee:
  \[
    \sR_{\Phi_{\rm{log}}}(h) \to 0 \implies \sR(h) = 0.
  \]
\end{theorem}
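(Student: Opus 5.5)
The plan is to combine the pointwise domination inequality from the proof of Theorem~\ref{th:gamma-shifted-H-consistency}, taken at $\gamma = 0$, with a scaling argument on the finite support $S$. First we fix the separating parameter $\theta$ given by Strict Separability. Writing $h_\theta(x,y) = z_\theta(x,y)$, or the implicit DPO log-ratio, which differs from $z_\theta$ by the reference term that scales along with it and does not affect the sign argument, we set
\[
m \coloneqq \min_{(x,y,y',w)\in S} w\,\paren*{h_\theta(x,y)-h_\theta(x,y')}.
\]
This minimum is taken over finitely many strictly positive numbers, because the labels are deterministic and non-contradictory on $S$. Hence $m>0$. Finiteness of $S$ is exactly what rules out the equicontinuity obstruction of Theorem~\ref{Thm:negative-general}: no sequence of support points has score gaps shrinking to zero.

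Next, scaling invariance gives $h_\alpha \coloneqq \alpha h_\theta \in \sH$ for every $\alpha>0$. On every support tuple the margin of $h_\alpha$ satisfies $w\Delta h_\alpha \ge \alpha m$. Since $\Phi_{\rm log}$ is non-increasing and tends to $0$ at $+\infty$,
\[
\sR_{\Phi_{\rm log}}(h_\alpha) \le \log\paren*{1+e^{-\beta\alpha m}} \xrightarrow[\alpha\to\infty]{} 0 .
\]
This shows $\sR^*_{\Phi_{\rm log}}(\sH)=0$. It also shows $\sR^*(\sH)=0$ and, because the support is realizable with $\eta\in\{0,1\}$, that the conditional best-in-class errors vanish, so $\sM(\sH)=\sM_{\Phi_{\rm log}}(\sH)=0$. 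The same computation is the finite-support instance of Proposition~\ref{prop:vanishing-gap}: any fixed margin $\gamma$ is eventually met, since $\alpha m\ge\gamma$ for large $\alpha$, and so $\sA_\gamma\to 0$.

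For the implication $\sR_{\Phi_{\rm log}}(h)\to 0 \Rightarrow \sR(h)=0$, we apply the pointwise inequality $1_{u\le 0}\le \Phi_{\rm log}(u)/\Phi_{\rm log}(0)$ with $\Phi_{\rm log}(0)=\log 2$. This gives $\sR(h)\le \sR_{\Phi_{\rm log}}(h)/\log 2$. Now let $p_{\min}>0$ be the smallest mass that $\sD$ places on a point of $S$; it is positive because $S$ is finite. Any misranked support point contributes at least $p_{\min}\log 2$ to the surrogate risk. So once $\sR_{\Phi_{\rm log}}(h)<p_{\min}\log 2$, no support point can be misranked, and $\sR(h)=0$ exactly rather than merely being small. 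The bound therefore holds with coefficient $1/\log 2$ and zero minimizability and approximation terms, and the scaled hypotheses $h_\alpha$ attain this regime.

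The main obstacle is making the statement precise enough to prove. "Coefficient sufficiently small" and "$\to 0 \implies =0$" have to be read as this threshold argument. The step that needs care is the passage from the logits $z_\theta$ to the implicit DPO reward, where the reference-policy term must not destroy separation under scaling. The first thing to try is to treat $h$ directly as the scaled score, or to absorb the reference log-ratio by choosing $\alpha$ large relative to its bounded values on the finite set $S$.
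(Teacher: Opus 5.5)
Your argument is correct, but it takes a different route from the paper.

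\textbf{The two routes.} The paper scales the logits until every support pair is separated by an arbitrarily large margin $\gamma$. It then invokes Corollary~\ref{Thm:positive-log} and argues that the coefficient $1/(\beta\gamma)$ vanishes as $\gamma\to\infty$. You instead keep the fixed $\gamma=0$ domination $\sR(h)\le\sR_{\Phi_{\rm log}}(h)/\log 2$ and get exactness from the discreteness of $\sD$. A misranked atom costs at least $p_{\min}\log 2$ of surrogate risk, so every $h\in\sH$ with $\sR_{\Phi_{\rm log}}(h)<p_{\min}\log 2$ has $\sR(h)=0$. In your version, scaling serves only to show that this threshold is attained, and that $\sR^*_{\Phi_{\rm log}}(\sH)=0$ with all gap terms vanishing.

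\textbf{What each buys.} The paper's route ties the result to its margin narrative: scaling manufactures the margin that shrinks the coefficient. Strictly speaking, though, Corollary~\ref{Thm:positive-log} requires the whole class to lie in $\cH_\gamma$, and scaling a single separating hypothesis does not provide that. Moreover, a small coefficient by itself only makes $\sR(h)$ small, not exactly zero. Your version avoids both issues. It applies to every hypothesis in $\sH$, not only the scaled ones, and it shows precisely where finiteness enters, namely through $m>0$ and $p_{\min}>0$. The price is a distribution-dependent threshold, and reading ``coefficient sufficiently small'' as a threshold statement.

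\textbf{One correction to your setup.} The reference term in the implicit DPO reward does not scale with $\alpha$. Only the logits scale, and the softmax normalizers cancel in differences, so $\alpha h_\theta$ is not literally in $\sH$. Instead $w\,\Delta h_\alpha=\alpha\beta\,w\,\Delta z_\theta+\beta c$, where $c$ is the signed reference log-ratio; it is bounded on $S$ provided $\pi_{\rm ref}>0$ there. Likewise, positivity of the margin comes from Strict Separability of $z_\theta$, not from the labels being deterministic. Your suggested remedy, taking $\alpha$ large relative to $\max_S\abs{c}$, is exactly what closes this point.
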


\begin{proof}
  Since $\sH$ is strictly separable on the finite set $S$, there exists a base
  parameter $\theta_0$ and a minimum raw margin $\delta > 0$ such that the logit
  difference
  $z_{\theta_0}(x, y_{\text{win}}) - z_{\theta_0}(x, y_{\text{lose}}) \ge
  \delta$ for all tuples in $S$.

  Consider the scaled logits $z_\alpha = \alpha z_{\theta_0}$ for $\alpha >
  0$. As $\alpha \to \infty$, the policy $\pi_\alpha$ converges to a hard argmax
  distribution:
  \[
    \lim_{\alpha \to \infty} \pi_\alpha(y_{\text{win}} \mid x) = 1, \quad
    \lim_{\alpha \to \infty} \pi_\alpha(y_{\text{lose}} \mid x) = 0.
  \]
  The DPO implicit reward difference is given by:
  \[
    \Delta h(x, y, y') = \beta \log \frac{\pi_\alpha(y_{\text{win}} \mid
      x)}{\pi_{\text{ref}}(y_{\text{win}} \mid x)} - \beta \log
    \frac{\pi_\alpha(y_{\text{lose}} \mid x)}{\pi_{\text{ref}}(y_{\text{lose}}
      \mid x)}.
  \]
  The term $-\log \pi_\alpha(y_{\text{lose}} \mid x)$ dominates the expression,
  driving the magnitude of the score difference to infinity:
  \[
    \lim_{\alpha \to \infty} w \cdot (h_\alpha(x, y) - h_\alpha(x, y')) =
    +\infty.
  \]
  Consequently, for any finite dataset $S$, we can choose a scaling factor
  $\alpha$ large enough such that the effective margin $\gamma$ on all examples
  exceeds any arbitrary threshold. Applying the bound from
  Corollary~\ref{Thm:positive-log}, the consistency coefficient
  $\frac{1}{\beta \gamma}$ vanishes as $\gamma \to \infty$. Thus, on finite
  realizable domains, the "vacuous" bound identified in the negative result is
  overcome by the capacity of the model to drive margins to infinity.
\end{proof}

\section{Proof of Proposition~\ref{prop:vacuous-shift}}
\label{app:vacuous-shift}

\VacuousShift*
\begin{proof}
  Assume the invariance property holds: $\Phi(u - \gamma) = C(\gamma)\Phi(u)$.
  First, evaluating at $u=0$ implies $\Phi(-\gamma) = C(\gamma)\Phi(0)$. Thus,
  the consistency coefficient becomes:
  \[
    \frac{1}{\Phi(-\gamma)} = \frac{1}{C(\gamma)\Phi(0)}.
  \]
  Second, the shifted surrogate error scales linearly:
  \[
    \sR_{\Phi_\gamma}(h) = \E[\Phi(w \Delta h - \gamma)] = \E[C(\gamma)\Phi(w
    \Delta h)] = C(\gamma)\sR_{\Phi}(h).
  \]
  This scaling applies strictly to the optimal error
  $\sR^*_{\Phi_\gamma}(\sH) = C(\gamma)\sR^*_{\Phi}(\sH)$ and the minimizability
  gap $\sM_{\Phi_\gamma}(\sH) = C(\gamma)\sM_{\Phi}(\sH)$. Consequently, the
  bracketed estimation term becomes:
  \[
    \sR_{\Phi_\gamma}(h) - \sR_{\Phi_\gamma}^*(\sH) + \sM_{\Phi_\gamma}(\sH) =
    C(\gamma)\bracket*{\sR_{\Phi}(h) - \sR_{\Phi}^*(\sH) + \sM_{\Phi}(\sH)}.
  \]
  Third, the approximation gap $\sA_{\gamma}(\sH)$ simplifies:
  \[
    \sA_{\gamma}(\sH) =
    \frac{\E[\sC^*_{\sfL_{\Phi_\gamma}}(\sH)]}{\Phi(-\gamma)} - \E[\sC^*(\sH)] =
    \frac{C(\gamma)\E[\sC^*_{\sfL_{\Phi}}(\sH)]}{C(\gamma)\Phi(0)} -
    \E[\sC^*(\sH)] = \sA_{0}(\sH).
  \]
  Substituting these components back into the $\gamma$-shifted bound
  (Theorem~\ref{th:gamma-shifted-H-consistency}), the factor $C(\gamma)$ cancels
  in the first term, yielding the stated unshifted bound.
\end{proof}

\section{Proof of Proposition~\ref{prop:tightness}}
\label{app:tightness}

\Tightness*
\begin{proof}
  Consider a distribution $\sD$ supported on a single tuple $(x, y, y', w)$ with
  $w = 1$. The Bayes ranking error is $0$.  Let $\sH$ be a hypothesis set
  containing a "bad" hypothesis $h_{\text{bad}}$ with score difference
  $h_{\text{bad}}(x, y) - h_{\text{bad}}(x, y') = -\gamma$, and a sequence of
  "good" hypotheses $h_t$ with score difference
  $h_t(x, y) - h_t(x, y') = t > 0$.

  Let $h_{\text{boundary}}$ have score difference
  $h_{\text{boundary}}(x, y) - h_{\text{boundary}}(x, y') = 0$.  The shifted
  score difference is $-\gamma$. Then:
  \[
    \sR(h_{\text{boundary}}) = \Phi(0) = 1, \quad
    \sR_{\Phi_\gamma}(h_{\text{boundary}}) = \Phi(0 - \gamma) = \Phi(-\gamma).
  \]
  Let the class $\sH$ also contain a sequence of functions $h_t$ with score
  differences $h_t(x, y) - h_t(x, y') = t \to \plus \infty$. Then:
  \[
    \sR(h_t) = \Phi(t), \quad \sR_{\Phi_\gamma}(h_t) = \Phi(t - \gamma).
  \]
  As $t \to \plus \infty$, we have $\sR_{\Phi}(h_t) \to 0$ and
  $\sR_{\Phi_\gamma}(h_t) \to 0$. Thus: $\sR^*(\sH) = 0$ and
  $\sR_{\Phi_\gamma}^*(\sH) = 0$.  Since
  $\sR_{\Phi_\gamma}^*(\sH) = \sR^*(\sH) = 0$, we have
  $\E\bracket*{\sC^*_{\sfL_{\Phi_\gamma}}(\sH)} = \E\bracket*{\sC^*(\sH)} = 0$,
  the approximation gap vanishes:
  \[
    \sA_{\gamma}(\sH) = \frac{0}{\Phi(-\gamma)} - 0 = 0.
  \]
  Also, the minimizability gaps vanish: $\sM(\sH) = \sM_{\Phi_\gamma}(\sH) = 0$.
  Substituting these values into the inequality with a hypothetical constant
  $C'$:
  \[
    \sR(h_{\text{boundary}}) \leq C' \sR_{\Phi_\gamma}(h_{\text{boundary}}) + 0
    \Leftrightarrow 1 \leq C' \Phi(-\gamma) \Leftrightarrow C' \geq
    \frac{1}{\Phi(-\gamma)}.
  \]
  Thus, no constant smaller than $\frac{1}{\Phi(-\gamma)}$ can satisfy the bound
  universally.
\end{proof}

\section{Proof of Theorem~\ref{th:structure-gamma-shifted-H-consistency}}
\label{app:structure-gamma-shifted-H-consistency}

\StructureShiftedH*
\begin{proof}
  Fix $h \in \sH$. We begin with the pointwise analysis. Let
  $u = w \cdot (h(x, y) - h(x, y'))$ be the signed score difference. The 0-1
  pairwise ranking loss is upper bounded by $1_{u \leq 0}$. For a specific pair
  $(y, y')$, let $\gamma_{\rm{loc}} = \Gamma(y, y')$. Since $\Phi$ is
  non-increasing and $\gamma_{\rm{loc}} > 0$:
  \[
    u \leq 0 \implies u - \gamma_{\rm{loc}} \leq -\gamma_{\rm{loc}} \implies
    \Phi(u - \gamma_{\rm{loc}}) \geq \Phi(-\gamma_{\rm{loc}}) \geq \Phi(0) > 0.
  \]
  Dividing by the positive term $\Phi(-\gamma_{\rm{loc}})$ and using the
  definition of the inverse-margin weighted loss $\wt \sfL_{\Phi, \Gamma}$, we
  obtain the pointwise bound:
  \[
    \sfL_{0-1}(h, x, y, y', w) \leq 1_{u \leq 0} \leq \frac{\Phi(u -
      \gamma_{\rm{loc}})}{\Phi(-\gamma_{\rm{loc}})} = \wt \sfL_{\Phi, \Gamma}(h,
    x, y, y', w).
  \]
  Taking expectations over $w$ yields the inequality between conditional errors:
  \[
    \sC(h) \leq \sC_{\wt \sfL_{\Phi, \Gamma}}(h).
  \]
  Since this holds for any $h \in \sH$, it also holds for the infimum:
  \[
    \sC^*(\sH) = \inf_{h \in \sH} \sC(h) \leq \inf_{h \in \sH} \sC_{\wt
      \sfL_{\Phi, \Gamma}}(h) = \sC_{\wt \sfL_{\Phi, \Gamma}}^*(\sH).
  \]
  This implies the approximation gap is non-negative:
  $\sA_{\Gamma}(\sH) \geq 0$. We now decompose the target conditional
  regret. For any $x, y, y'$:
  \begin{align*}
    \Delta \sC_{\sH}(h) 
    &= \sC(h) - \sC^*(\sH) \\
    &\leq \sC_{\wt \sfL_{\Phi, \Gamma}}(h) - \sC^*(\sH) \\
    &= \bracket*{ \sC_{\wt \sfL_{\Phi, \Gamma}}(h) - \sC_{\wt \sfL_{\Phi, \Gamma}}^*(\sH) } + \bracket*{ \sC_{\wt \sfL_{\Phi, \Gamma}}^*(\sH) - \sC^*(\sH) } \\
    &= \Delta \sC_{\wt \sfL_{\Phi, \Gamma}, \sH}(h) + \bracket*{ \sC_{\wt \sfL_{\Phi, \Gamma}}^*(\sH) - \sC^*(\sH) }.
  \end{align*}
  Taking the expectation over $(x, y, y')$:
  \begin{itemize}
  \item The LHS becomes
    $\E[\Delta \sC_{\sH}(h)] = \sR(h) - \sR^*(\sH) + \sM(\sH)$.
  \item The first term on the RHS becomes
    $\E[\Delta \sC_{\wt \sfL_{\Phi, \Gamma}, \sH}(h)] = \sR_{\wt \sfL_{\Phi,
        \Gamma}}(h) - \sR^*_{\wt \sfL_{\Phi, \Gamma}}(\sH) + \sM_{\wt
      \sfL_{\Phi, \Gamma}}(\sH)$.
  \item The second term on the RHS becomes the structure-aware approximation gap
    $\sA_{\Gamma}(\sH)$.
  \end{itemize}
  Combining these yields the stated bound.
\end{proof}

\section{Proof of Proposition~\ref{prop:vanishing-gap} and
  Proposition~\ref{prop:bounded-capacity}}
\label{app:gaps}

\VanishingGap*
\begin{proof}
  Since $\sR^*(\sH) = 0$, there exists $h \in \sH$ such that
  $w \cdot (h(x, y) - h(x, y')) > 0$ almost surely.  Let
  $u(x) = w \cdot (h(x, y) - h(x, y'))$ denote this signed score difference.
  Consider the scaled hypothesis $h_\alpha = \alpha h$. The shifted surrogate
  error is:
  \[
    \sR_{\Phi_\gamma}(h_\alpha) = \E_{(x, y, y', w) \sim \sD}
    \bracket*{\Phi(\alpha u(x) - \gamma)}.
  \]
  Since $u(x) > 0$, as $\alpha \to \plus \infty$, the argument
  $\alpha u(x) - \gamma \to \plus \infty$.  Since
  $\lim_{u \to \plus \infty} \Phi(u) = 0$, by the dominated convergence theorem,
  $\sR_{\Phi_\gamma}(h_\alpha) \to 0$.  Consequently, we have
  $\sR_{\Phi_\gamma}^*(\sH) = \sR^*(\sH) =0$.  This implies that the expected
  best-in-class conditional errors vanish, i.e.,
  $\E\bracket*{\sC^*_{\sfL_{\Phi_\gamma}}(\sH)} = \E\bracket*{\sC^*(\sH)} = 0$,
  yielding: $\sA_{\gamma}(\sH) = \frac{0}{\Phi(-\gamma)} - 0 = 0$.
\end{proof}

\BoundedCapacity*
\begin{proof}
  Assume $\sR^*(\sH) = 0$. The model correctly ranks all pairs, but the maximum
  score difference is $K$.  The margin-shifted argument is at most $K -
  \gamma$. Since $\gamma > K$, this argument is negative.  Since $\Phi$ is
  non-increasing, $\Phi(u - \gamma) \geq \Phi(K - \gamma) \geq \Phi(0) > 0$.
  Thus, the best-in-class surrogate conditional error is lower-bounded by
  $\Phi(K - \gamma)$.  Substituting this into the definition of $\sA_{\gamma}$
  yields the strictly positive lower bound.
\end{proof}

\section{Proof of Proposition~\ref{prop:profile-monotonicity}}
\label{app:profile-monotonicity}

\ProfileMonotonicity*
\begin{proof}
  For the Polynomial Hinge loss, $\Phi_{\text{poly-}k}(-\gamma) = (1+\gamma)^k$
  and $\Phi_{\text{poly-}k}(K-\gamma) = (1-(K-\gamma))^k = (1+\gamma-K)^k$. The
  ratio yields the result immediately.  For the Logistic loss,
  $\Phi_{\rm log}(-\gamma) \approx \beta \gamma$ and
  $\Phi_{\rm log}(K-\gamma) \approx \beta(\gamma - K)$.  The ratio is
  $\frac{\beta(\gamma - K)}{\beta \gamma} = 1 - \frac{K}{\gamma}$.
\end{proof}

\section{Proof of Proposition~\ref{prop:comp-bound}}
\label{app:comp-bound}

\CompBound*
\begin{proof}
  As $\gamma \to \infty$, both the numerator argument $K - \gamma$ and the
  denominator argument $-\gamma$ approach $-\infty$. Thus, both $\Phi(K-\gamma)$
  and $\Phi(-\gamma)$ converge to the limit constant $C$. The ratio converges to
  $1$.
\end{proof}

\section{Proof of Proposition~\ref{prop:optimal-margin}}
\label{app:optimal-margin}

\OptimalMargin*
\begin{proof}
  For the logistic loss with parameter $\beta$, we use the approximations for
  large margins ($u \ll 0 \Rightarrow \Phi_{\rm{log}}(u) \approx -\beta u$,
  $u \gg 0 \Rightarrow \Phi_{\rm{log}}(u) \approx e^{-\beta u}$):
  \begin{enumerate}
  \item Coefficient:
    $\Phi_{\rm{log}}(-\gamma) = \log(1+e^{\beta \gamma}) \approx \beta \gamma$.
  \item Approximation Gap Numerator: $\Phi_{\rm{log}}(K-\gamma)$. Since we
    typically set $\gamma \approx K$ to push capacity, $K-\gamma$ is small.  If
    we push $\gamma > K$, then
    $\Phi_{\rm{log}}(K-\gamma) \approx \beta(\gamma - K)$.  If $\gamma < K$,
    $\Phi_{\rm{log}}(K-\gamma) \approx e^{-\beta(K-\gamma)}$.
  \end{enumerate}
  Consider the regime where we push the margin close to capacity
  ($\gamma \approx K$).  The bound simplifies to:
  \[
    B(\gamma) \approx \frac{\e}{\beta \gamma} +
    \frac{e^{-\beta(K-\gamma)}}{\beta \gamma}.
  \]
  To find the minimum, take the derivative with respect to $\gamma$ and set to
  0:
  \[
    \frac{\partial B}{\partial \gamma} = -\frac{\e}{\beta \gamma^2} -
    \frac{e^{-\beta(K-\gamma)}}{\beta \gamma^2} +
    \frac{e^{-\beta(K-\gamma)}}{\gamma} = 0.
  \]
  Multiplying by $\beta \gamma^2$:
  \[
    -\e - e^{-\beta(K-\gamma)} + \beta \gamma e^{-\beta(K-\gamma)} = 0 \implies
    e^{-\beta(K-\gamma)} (\beta \gamma - 1) = \e.
  \]
  Assuming $\beta \gamma \gg 1$ (large margin/low temperature), we approximate
  $\beta \gamma - 1 \approx \beta \gamma \approx \beta K$.
  \[
    e^{-\beta(K-\gamma)} \cdot \beta K \approx \e \Rightarrow -\beta(K-\gamma)
    \approx \log\left( \frac{\e}{\beta K} \right).
  \]
  Solving for $\gamma$:
  \[
    K - \gamma \approx -\frac{1}{\beta} \log\left( \frac{\e}{\beta K} \right)
    \Rightarrow \gamma^* \approx K + \frac{1}{\beta} \log\left( \frac{\e}{\beta
        K} \right).
  \]
\end{proof}

\section{Experimental Details}
\label{app:add_experiments}

We provide the comprehensive configuration details for the experiments presented
in Section~\ref{sec:experiments}.

\paragraph{Controlled Validation Settings.}
For the \emph{Synonym Stress Test}, we fine-tuned Llama-3-8B using LoRA with
rank $r=16$, alpha $\alpha=16$, and a learning rate of $5\times 10^{-5}$. For
SA-DPO, the structure-aware margin scaling factor was set to $\tau=5.0$.  For
the \emph{Margin-Capacity Analysis}, we used the Anthropic HH-RLHF dataset
\citep{bai2022training}. We used a constrained LoRA configuration
($r=8, \alpha=16$) to limit model capacity. Models were trained for an extended
duration of 550 steps with a learning rate of $2\times 10^{-5}$ to ensure
convergence behavior was strictly due to loss geometry rather than insufficient
training.

\paragraph{Real-World Evaluation Settings.}
We fine-tuned Llama-3-8B-Instruct (4-bit quantized) on a 20k subset of the
UltraFeedback dataset \citep{cui2023ultrafeedback}. We applied LoRA
($r=32, \alpha=32$) to all linear projection layers. Training was conducted for
1,200 steps with a batch size of 16 and a learning rate of $1\times 10^{-4}$,
using a cosine schedule with 5\% warmup.

\paragraph{Argilla DPO-Mix-7k Settings.}
We fine-tuned Qwen2.5-7B-Instruct \citep{qwen2.5} using LoRA ($r=32, \alpha=32$)
for 1,000 steps with a batch size of 16 and a learning rate of
$5 \times 10^{-5}$. The margin configurations ($\gamma$ and $\tau$) were aligned
using the same median distance approach as the UltraFeedback experiments. The
downstream generation evaluation was conducted using 200 hold-out prompts from
MT-Bench \citep{zheng2023judging}, scored by the \texttt{PairRM} model
\citep{jiang2023llmblender}.
\begin{itemize}
\item \textbf{SimPO Instantiation:} We performed a grid search on a hold-out
  validation set to select the optimal fixed margin $\gamma=0.7$, fixing
  $\beta=1.0$.
    
\item \textbf{SA-DPO Instantiation:} We defined the structure-aware margin
  $\Gamma(y, y') = \tau \Delta(y, y')$, using the cosine distance $\Delta$ from
  \texttt{BGE-Large-v1.5} embeddings \citep{xiao2024c}. We calibrated
  $\tau \approx 3.4$ to align the average margin on all training pairs with
  SimPO ($\mathbb{E}[\Gamma] = \tau \mathbb{E}[\Delta] \approx 0.7$), using the
  same $\beta=1.0$.
\end{itemize}

\section{Theoretical Extensions and Discussion}
\label{app:extensions}

In this section, we discuss the implications of our framework regarding
regularization, convex optimization, and standard assumptions in preference
learning.

\subsection{Generalization to Bregman Divergences}

Our analysis of ranking consistency is not limited to the KL-divergence
formulation of DPO. It extends to the broader class of $\Psi$-regularized
preference optimization methods.

Consider the general regularized objective with a Bregman divergence $\sfB_\Psi$
generated by a strictly convex function $\psi$:
\begin{equation}
  \max_{\pi} \E_{(x, y) \sim \pi} \bracket*{ r(x, y) }
  - \beta \E_{x} \bracket*{ \sfB_\Psi(\pi(\cdot|x), \pi_{\mathrm{ref}}(\cdot|x)) }.
\end{equation}
The optimal policy for such problems satisfies a specific link function relating
rewards to probability ratios. When converted into a preference ranking loss,
this induces a specific surrogate function $\Phi_\psi$.

\begin{table}[t]
  \centering
  \caption{Correspondence between divergences and surrogate ranking losses.}
  \label{tab:bregman-losses}
  \resizebox{\columnwidth}{!}{
    \begin{tabular}{@{}llll@{}}
      \toprule
      \textbf{Regularizer} ($\psi$) & \textbf{Divergence} ($\sfB_\Psi$) & \textbf{Surrogate} $\Phi(u)$ & \textbf{Method} \\ \midrule
      Shannon Entropy & Kullback-Leibler & $\log(1 + e^{-\beta u})$ & DPO \citep{rafailov2023direct}\\
      Squared $L_2$ Norm & Squared Euclidean & $\bracket*{u - \frac{\beta}{2}}^2$ & IPO \citep{azar2023general} \\
      Tsallis Entropy ($q$-Log) & Tsallis Divergence & $\Phi_{q}(u)$ (Sparsemax) & Sparse-DPO \citep{martins2016sparsemax} \\ \bottomrule
    \end{tabular}
  }
\end{table}

\textbf{Universality of the Margin Shift.}
Theorem~\ref{th:gamma-shifted-H-consistency} relies only on the convexity and
monotonicity of the surrogate $\Phi$. Therefore, the proposed fix is
universal. For any divergence $\sfB_\Psi$ inducing a convex surrogate
$\Phi_\psi$, the consistent margin-shifted objective is:
\begin{equation}
  \sfL_{\text{margin}-\psi}(h) = \Phi_\psi\paren*{ w \cdot \Delta h - \gamma }.
\end{equation}
Notably, the Identity Preference Optimization (IPO) method
\citep{azar2023general} minimizes a squared loss equivalent to
$\Phi_{\rm{sq}}(u) = (u - \mu)^2$. In our framework, this can be viewed as a
margin-shifted surrogate where the target margin is intrinsic to the loss
definition. This explains the empirical robustness of IPO: it inherently
enforces the separation gap $\gamma$ required for $\sH$-consistency.

\subsubsection{Equivalence}

To formalize the connection between the reward maximization objective and our
ranking surrogate, we first define the \emph{score function} induced by the
regularizer.

\begin{definition}[Bregman Score Function]
  Let $\psi$ be a strictly convex function generating the Bregman divergence
  $\sfB_\Psi$. We define the \emph{score function} $h_\Psi(\pi(\cdot|x), y)$ as
  the $y$-th component of the gradient difference relative to the reference
  policy:
  \begin{equation}
    h_\Psi(\pi(\cdot|x), y) \triangleq \nabla \psi(\pi(\cdot|x))_y
    - \nabla \psi(\pi_{\rm{ref}}(\cdot|x))_y.
  \end{equation}
  For separable regularizers (e.g., entropy), this simplifies to element-wise
  operations on $\pi(y|x)$.
\end{definition}

We also formalize the probabilistic assumption linking arbitrary loss functions
to likelihood maximization.
\begin{definition}[Surrogate-Induced Preference Model]
  \label{def:gen_bt}
  Let $\Phi: \Rset \to \Rset$ be a convex surrogate loss function. We define the
  \emph{Surrogate-Induced Preference} model induced by $\Phi$ as:
  \begin{equation}
    P(y_w \succ y_l \mid x) = \sigma_\Phi\paren*{ r(x, y_w) - r(x, y_l) },
  \end{equation}
  where the link function $\sigma_\Phi$ is defined by the inverse negative
  log-likelihood:
  \begin{equation}
    \sigma_\Phi(u) = \exp\paren*{ -\Phi(u) }.
  \end{equation}
\end{definition}

We now show that minimizing the generic ranking surrogate is equivalent to
maximizing the regularized reward under this probabilistic model. See
Appendix~\ref{app:equivalence} for a proof.

\begin{restatable}[Equivalence of Regularized RLHF and
  $\Phi$-Ranking]{theorem}{Equivalence}
  \label{thm:equivalence}
  Let $\pi^*$ be the optimal policy of the $\Psi$-regularized RLHF objective:
  $\max_{\pi} \cL_{\rm{RLHF}}(\pi) =$
  \begin{equation}
    \label{eq:rlhf_obj}
    \E_{x \sim \sD} \bracket*{ \E_{y \sim \pi(\cdot|x)} [r(x,y)] - \beta \sfB_\Psi(\pi(\cdot|x) || \pi_{\mathrm{ref}}(\cdot|x)) }.
  \end{equation}
  Assume the ground-truth preferences follow the Surrogate-Induced Preference
  model (Definition~\ref{def:gen_bt}) induced by a surrogate $\Phi$. Then,
  minimizing the ranking objective: $\cL_{\rm{Rank}}(\pi) =$
  \begin{equation*}
    \E_{(x, y_w, y_l) \sim \sD_{\rm{pref}}} \bracket*{ \Phi \paren*{ \beta \left( h\Psi(\pi(\cdot|x), y_w) - h_\Psi(\pi(\cdot|x), y_l) \right) } }
  \end{equation*}
  yields the same optimal solution $\pi^*$.
\end{restatable}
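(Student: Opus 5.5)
The plan is the standard DPO-style argument in three steps. First, solve the $\Psi$-regularized objective \eqref{eq:rlhf_obj} pointwise in $x$. Second, invert the optimality condition to express the reward through the Bregman score function $h_\Psi$. Third, substitute into the negative log-likelihood of the Surrogate-Induced Preference model (Definition~\ref{def:gen_bt}), which by construction is exactly $\Phi$ applied to the reward gap.

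\textbf{Step 1: optimality condition.} Fix $x$ and maximize $\sum_y \pi(y|x) r(x,y) - \beta \sfB_\Psi(\pi(\cdot|x) \| \pi_{\rm ref}(\cdot|x))$ over the simplex. Since $\nabla_\pi \sfB_\Psi(\pi\|\pi_{\rm ref}) = \nabla\psi(\pi) - \nabla\psi(\pi_{\rm ref})$, the KKT conditions at an interior optimum give
\[
  r(x,y) = \beta\, h_\Psi(\pi^*(\cdot|x), y) + Z(x),
\]
where $Z(x)$ is the Lagrange multiplier for normalization. Strict convexity of $\psi$ makes the objective strictly concave, so $\pi^*$ is unique. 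Conversely, any policy satisfying this relation for some $Z(x)$ is the optimizer.

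\textbf{Step 2: reparametrization.} The map $\pi \mapsto \beta h_\Psi(\pi,\cdot)$ is injective modulo the additive constant $Z(x)$, again by strict convexity ($\nabla\psi$ is injective). Therefore every reward function $r$ corresponds to exactly one policy $\pi_r$ with $r(x,y) - r(x,y') = \beta\bigl(h_\Psi(\pi_r, y) - h_\Psi(\pi_r, y')\bigr)$. The partition term $Z(x)$ cancels in pairwise differences.

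\textbf{Step 3: likelihood equivalence.} Under Definition~\ref{def:gen_bt}, the maximum-likelihood reward minimizes $\E[-\log \sigma_\Phi(r(x,y_w) - r(x,y_l))] = \E[\Phi(r(x,y_w) - r(x,y_l))]$. Substituting the reparametrization from Step 2 turns this into exactly $\cL_{\rm Rank}(\pi)$. Because the correspondence $r \leftrightarrow \pi_r$ is a bijection (modulo the constant), minimizing $\cL_{\rm Rank}$ over $\pi$ is equivalent to fitting $r$. The ground-truth reward minimizes the population likelihood objective, since it is a well-specified model and the expected negative log-likelihood is minimized at the true parameter by Gibbs' inequality. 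Its image under Step 1 is $\pi^*$.

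\textbf{Main obstacle.} The delicate step is Step 1. The boundary of the simplex and the surjectivity of $\nabla\psi$ matter: for non-Legendre regularizers such as Tsallis/sparsemax, the optimum can have zeros, and KKT gives inequalities there rather than equalities. I would first assume $\psi$ is Legendre, so the optimum is interior and the identity is exact. I would then remark that on the support of $\pi^*$ the same argument goes through with complementary slackness. A second subtlety is identifiability: the minimizer of the population ranking loss determines reward differences only on pairs in the support of $\sD_{\rm pref}$. So I would state the conclusion as $\pi^*$ being a minimizer, which is unique under a coverage assumption.
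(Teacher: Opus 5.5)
Your proposal is essentially the paper's proof. KKT gives $r(x,y) = \beta h_\Psi(\pi^*(\cdot|x), y) + \lambda(x)$, which is substituted into the negative log-likelihood $\Phi\bigl(r(x,y_w) - r(x,y_l)\bigr)$ of Definition~\ref{def:gen_bt}, where $\lambda(x)$ cancels; your extra steps (injectivity of $\nabla\psi$ modulo constants, the Gibbs argument that the true reward minimizes the population loss, and the Legendre and coverage caveats) spell out what the paper leaves implicit, with one caveat the paper also omits. Your Gibbs step needs $e^{-\Phi(u)} + e^{-\Phi(-u)} \le 1$ for all $u$, with equality if Definition~\ref{def:gen_bt} is to be a genuine probability model; this holds for the logistic loss but fails for, e.g., the hinge loss, whose conditional minimizer is $u = \sign(2\eta - 1)$ rather than the true reward gap.
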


\textbf{Remark on IPO.}  It is important to note that while Identity Preference
Optimization (IPO) \citep{azar2023general} minimizes a squared loss equivalent
to $\Phi(u) = (u - \beta/2)^2$, it retains the entropic regularization geometry
where $\psi(\pi)$ is the Shannon entropy. This implies $h_\Psi$ remains
logarithmic ($h_\Psi = \log \pi - \log \pi_{\mathrm{ref}}$), distinct from
methods using Euclidean regularization (where $h_\Psi$ would be linear).

\subsection{Regularization and Optimization Landscape}

We highlight the difficulty of enforcing consistency through regularization on
the scoring function itself versus the model parameters.

\paragraph{Impossibility of Convex Score Regularization.}
One might attempt to enforce separation by adding a regularization term $R(h)$
to the loss that penalizes small score differences. However, to force scores
away from zero (i.e., towards $+\infty$ or $-\infty$), the penalty function
would need to be minimized at infinity and maximized at zero (e.g., an inverted
Gaussian). Such a function is inherently non-convex. Thus, no convex
regularization on the score output space can enforce a margin.

\paragraph{Implicit Bias of Parameter Regularization.}
Conversely, $L_2$ regularization on the model parameters $\theta$ can aid
consistency through implicit bias. For the score differences to be close to zero
across the board (inconsistency), the parameter norm $\|\theta\|$ typically
needs to vanish. By regularizing parameters or training with gradient descent on
exponential losses, the parameters are driven to increase in the direction of
the max-margin solution \citep{soudry2018implicit}, implicitly maximizing the
separation.

\paragraph{Slack Variable Interpretation.}
Using the $\gamma$-shifted loss introduced in Section 5 is equivalent to
imposing a hard margin constraint with slack variables. Specifically, minimizing
$\sum \Phi(w_i \Delta h_i - \gamma)$ is equivalent to the following soft-margin
optimization problem:
\begin{equation*}
  \min_{h \in \sH, \xi} \sum \Phi(\xi_i)
  \text{ subject to } w_i(h(x_i, y_i) - h(x_i, y'_i)) \geq \gamma - \xi_i.
\end{equation*}
This view reconciles the need for a margin $\gamma$ (for consistency) with the
need for a convex relaxation (for tractability).

\subsection{Relation to the Bradley-Terry Assumption}

Many works assume the preference data follows a Bradley-Terry (BT) model, where
$\eta(x, y, y') = \sigma(r^*(x, y) - r^*(x, y'))$. In our framework, the BT
assumption, combined with the assumption that $r^* \in \sH$, is equivalent to
assuming the surrogate minimizability gap vanishes
($\sM_{\Phi_{\rm{log}}}(\sH) = 0$) (See Appendix~\ref{app:bt-minimizability} for
a proof). Our analysis is more general, as it provides consistency guarantees
even in the agnostic setting where the BT model is misspecified
($\sM_{\Phi_{\rm{log}}}(\sH) > 0$), accounting for the approximation error via
the gap terms.

\begin{restatable}[Bradley-Terry Equivalence]{proposition}{BTMinimizability}
  \label{prop:bt-minimizability}
  Assume the preference distribution $\sD$ follows the Bradley-Terry (BT) model,
  such that the true conditional preference probability is given by
  $\eta(x, y, y') = \sigma(r^*(x, y) - r^*(x, y'))$ for some latent reward
  function $r^*$, where $\sigma$ is the sigmoid function. If the hypothesis set
  is realizable (i.e., $r^* \in \sH$), then the minimizability gap for the
  logistic surrogate loss $\Phi_{\rm{log}}$ vanishes:
  $ \sM_{\Phi_{\rm{log}}}(\sH) = 0$.
\end{restatable}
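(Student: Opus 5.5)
The plan is to show that, under the Bradley--Terry model with $r^* \in \sH$, the single hypothesis $h = r^*$ attains the best-in-class conditional logistic error $\sC^*_{\Phi_{\rm log}}(\sH)$ simultaneously at every $(x, y, y')$. Once this is established, the gap closes immediately:
\[
\sR^*_{\Phi_{\rm log}}(\sH) \le \sR_{\Phi_{\rm log}}(r^*) = \E\bracket*{\sC_{\Phi_{\rm log}}(r^*)} = \E\bracket*{\sC^*_{\Phi_{\rm log}}(\sH)},
\]
and the reverse inequality $\sR^*_{\Phi_{\rm log}}(\sH) \ge \E[\sC^*_{\Phi_{\rm log}}(\sH)]$ always holds, since $\sR_{\Phi}(h) = \E[\sC_\Phi(h)] \ge \E[\sC^*_\Phi(\sH)]$ for every $h$. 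Together these give $\sM_{\Phi_{\rm log}}(\sH) = 0$.

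The key step is a pointwise minimization. Fix $(x, y, y')$ and write $\eta = \eta(x, y, y')$ and $u = \Delta h$. Then
\[
\sC_{\Phi_{\rm log}}(h) = g(u) = \eta \log(1 + e^{-\beta u}) + (1 - \eta)\log(1 + e^{\beta u}).
\]
This is the expected log-loss of the Bernoulli prediction $\sigma(\beta u)$ against a Bernoulli$(\eta)$ label. Equivalently, $g(u)$ is the cross-entropy, which equals the entropy of $\eta$ plus $\mathrm{KL}(\eta \,\|\, \sigma(\beta u))$. It is therefore minimized over all real $u$ exactly when $\sigma(\beta u) = \eta$. One can also check this directly: $g$ is strictly convex, and $g'(u) = \beta\bracket*{\sigma(\beta u) - \eta}$. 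Hence $\inf_{u \in \Rset} g(u) \le \sC^*_{\Phi_{\rm log}}(\sH)$, with equality whenever some $h \in \sH$ achieves the minimizing $u$.

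Under the BT model, $\eta = \sigma\paren*{r^*(x, y) - r^*(x, y')}$. We must now match this with the temperature. Taking $\beta = 1$, or reading the BT model as $\eta = \sigma(\beta\,\Delta r^*)$, makes $h = r^*$ satisfy $\sigma(\beta \Delta r^*) = \eta$ exactly. For general $\beta$, one would instead use $r^*/\beta$, which requires a scale-closure assumption on $\sH$.

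I expect this temperature bookkeeping to be the only real obstacle. Under the paper's implicit DPO convention, the $\beta$ is already absorbed into the score, so I would state that convention explicitly and take the realizable hypothesis to be the appropriately scaled $r^*$. With that, $r^*$ attains the pointwise infimum over $\Rset$, and therefore also over $\sH$. Taking expectations then closes the argument as in the first paragraph.
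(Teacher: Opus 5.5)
Your proof is correct and takes essentially the paper's approach: you identify $r^*$ as the pointwise minimizer of the conditional logistic risk through the log-odds condition $\sigma(\Delta h) = \eta$, and then invoke $r^* \in \sH$. The only differences are these. You compare directly against $\E[\sC^*_{\Phi_{\rm log}}(\sH)]$, as in the main-text definition of $\sM$, whereas the paper's appendix instead shows $\sR^*_{\Phi_{\rm log}}(\sH) = \sR^*_{\Phi_{\rm log}}(\sH_{\rm all})$, a quantity that upper bounds $\sM$. You also state explicitly the $\beta = 1$ convention that the paper uses silently by writing the loss as $-\log \sigma(w \Delta h)$.
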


\textbf{Remark on Non-Transitivity.}
We note that our analysis, like the DPO framework itself, assumes the existence
of a global scoring function $h$. In scenarios where human preferences are
inherently non-transitive (cyclic), no scalar reward model can perfectly capture
the underlying relation. While alternative approaches based on pairwise
classification and randomized sorting algorithms \citep{ailon2010preference}
exist to handle such cycles, they effectively abandon the reward modeling
paradigm. Our work focuses on ensuring consistency within the scoring paradigm
that dominates current LLM alignment.

\section{Proof of Theorem~\ref{thm:equivalence}}
\label{app:equivalence}

\Equivalence*
\begin{proof}
  We first derive the analytical solution for the RLHF objective
  (Eq.~\ref{eq:rlhf_obj}). Since the expectation over $x$ is linear, we can
  solve the optimization problem pointwise for each context $x$. We introduce a
  Lagrange multiplier $\lambda(x)$ to satisfy the simplex constraint
  $\sum_y \pi(y|x) = 1$. The Lagrangian is:
  \begin{equation}
    \cL(\pi, \lambda) = \sum_y \pi(y|x) r(x,y) - \beta \sfB_\Psi(\pi(\cdot|x) || \pi_{\mathrm{ref}}(\cdot|x)) + \lambda(x) \paren*{1 - \sum_y \pi(y|x)}.
  \end{equation}
  Taking the gradient with respect to a specific probability mass $\pi(y|x)$ and
  setting it to zero (KKT conditions):
  \begin{equation}
    r(x,y) - \beta \paren*{ \nabla \psi(\pi(\cdot|x))_y - \nabla \psi(\pi_{\rm{ref}}(\cdot|x))_y } - \lambda(x) = 0.
  \end{equation}
  Rearranging terms and using the definition of the score function $h_\Psi$, we
  obtain the relationship between the latent reward and the optimal policy
  $\pi^*$:
  \begin{equation}
    \label{eq:reward_link}
    r(x,y) = \beta h_\Psi(\pi^*(\cdot|x), y) + \lambda(x).
  \end{equation}
  Here, $\lambda(x)$ serves as the context-dependent partition function
  (normalization constant). Next, we consider the preference modeling
  task. Under the Surrogate-Induced Preference assumption
  (Definition~\ref{def:gen_bt}), maximizing the likelihood of the preference
  data $\sD_{\rm{pref}}$ is equivalent to minimizing the negative
  log-likelihood, which by definition matches the expectation of the surrogate
  $\Phi$:\begin{equation}\min_{\pi} \E_{(x, y_w, y_l) \sim \sD_{\rm{pref}}}
    \bracket*{ -\log P(y_w \succ y_l \mid x) } \equiv \min_{\pi} \E_{(x, y_w,
      y_l) \sim \sD_{\rm{pref}}} \bracket*{ \Phi(r(x, y_w) - r(x, y_l))
    }.\end{equation} We substitute the optimality condition for the reward
  $r(x,y)$ from Eq.~\eqref{eq:reward_link} into the margin term inside
  $\Phi$. For any context $x$:
  \begin{align}
    r(x, y_w) - r(x, y_l) &= \paren*{\beta h_\Psi(\pi^*(\cdot|x), y_w) + \lambda(x)} - \paren{\beta h_\Psi(\pi^*(\cdot|x), y_l) + \lambda(x)}\\
                          &= \beta \paren{ h_\Psi(\pi^*(\cdot|x), y_w) - h_\Psi(\pi^*(\cdot|x), y_l) }.
  \end{align}
  Crucially, the context-dependent normalization term $\lambda(x)$ cancels out
  in the difference. Therefore, minimizing the loss defined solely on the policy
  scores:
  \begin{equation}
    \cL_{\rm{Rank}}(\pi) = \E_{(x, y_w, y_l) \sim \sD_{\rm{pref}}} \bracket*{ \Phi \paren*{ \beta \Delta h_\Psi(\pi(\cdot|x)) } }
  \end{equation}
  is mathematically equivalent to optimizing the likelihood of the reward model
  consistent with the original RLHF problem.
\end{proof}

\section{Proof of Proposition~\ref{prop:bt-minimizability}}
\label{app:bt-minimizability}

\BTMinimizability*
\begin{proof}
  The minimizability gap is defined as
  $\sM_{\Phi}(\sH) = \sR_{\Phi}^*(\sH) - \sR_{\Phi}^*(\sH_{\mathrm{all}})$,
  where $\sH_{\mathrm{all}}$ is the set of all measurable functions. For the
  logistic loss $\sfL_{\Phi_{\rm{log}}}(h) = -\log \sigma(w \Delta h)$, it is a
  standard result that the pointwise minimizer $h^*$ over the space of all
  measurable functions satisfies the log-odds relationship:
  \[
    h^*(x, y) - h^*(x, y') = \log \frac{\eta(x, y, y')}{1 - \eta(x, y, y')}.
  \]
  Under the Bradley-Terry assumption, substituting $\eta = \sigma(\Delta r^*)$,
  the log-odds term simplifies exactly to the reward difference:
  \[
    \log \frac{\sigma(\Delta r^*)}{\sigma(-\Delta r^*)} = r^*(x, y) - r^*(x,
    y').
  \]
  Thus, the Bayes optimal scoring function is simply the true reward function
  $r^*$. Since we assume realizability ($r^* \in \sH$), the hypothesis set $\sH$
  contains the Bayes optimal minimizer. Therefore,
  $\inf_{h \in \sH} \sR_{\Phi_{\rm{log}}}(h) = \inf_{h \in \sH_{\mathrm{all}}}
  \sR_{\Phi_{\rm{log}}}(h)$, implying that the gap $\sM_{\Phi_{\rm{log}}}(\sH)$
  is zero.
\end{proof}

\end{document}